\newtheorem{definition}{Definition}
\newtheorem{corollary}{Corollary}
\newtheorem{theorem}{Theorem}
\newtheorem{lemma}{Lemma}
\newcommand{\dpsprt}{{\textcolor{blue}{\texttt{DP-SPRT}}}}
\newcommand{\outint}{{\textcolor{red}{\texttt{OutsideInterval}}}}
\title{\dpsprt: Differentially Private Sequential Probability Ratio Tests}
\author{Thomas Michel\\Univ. Lille, Inria, CNRS, Centrale Lille,\\ UMR 9189-CRIStAL, France
 \And Debabrota Basu\\Univ. Lille, Inria, CNRS, Centrale Lille,\\ UMR 9189-CRIStAL, France
 \And Emilie Kaufmann\\Univ. Lille, CNRS, Inria, Centrale Lille,\\ UMR 9189-CRIStAL, France}
\begin{document}

\maketitle

\setcounter{parttocdepth}{2}
\doparttoc 
\faketableofcontents 

\begin{abstract}
    We revisit Wald's celebrated Sequential Probability Ratio Test for sequential tests of two simple hypotheses, under privacy constraints. We propose \dpsprt, a wrapper that can be calibrated to achieve desired error probabilities and privacy constraints, addressing a significant gap in previous work. \dpsprt{} relies on a private mechanism that processes a sequence of queries and stops after privately determining when the query results fall outside a predefined interval. This \outint{} mechanism improves upon naive composition of existing techniques like AboveThreshold, achieving a factor-of-2 privacy improvement and thus potentially benefiting other continual monitoring procedures. We prove generic upper bounds on the error and sample complexity of \dpsprt{} that can accommodate various noise distributions based on the practitioner's privacy needs. We exemplify them in two settings: Laplace noise (pure Differential Privacy) and Gaussian noise (Rényi differential privacy). In the former setting, by providing a lower bound on the sample complexity of any $\epsilon$-DP test with prescribed type I and type II errors, we show that \dpsprt{} is near optimal when both errors are small and the two hypotheses are close. Moreover, we conduct an experimental study revealing its good practical performance.
\end{abstract}

\section{Introduction}
\label{sec:intro}

Sequential hypothesis testing offers an elegant solution when data collection is costly or time-consuming, allowing decisions to be reached with fewer observations on average compared to fixed-sample approaches. The Sequential Probability Ratio Test (SPRT), developed by \cite{Wald45SPRT}, is the foundational algorithm in this field, achieving optimal performance for simple hypothesis testing with independent and identically distributed observations.

Sequential testing procedures are evaluated based on their error rates (Type I and Type II errors) and expected sample size (or \emph{sample complexity}) under each hypothesis. \cite{Wald48OptSPRT} proved that SPRT minimizes sample complexities subject to error probability constraints—a remarkably strong optimality result in statistical decision theory.
Despite its theoretical optimality and applications in clinical trials \citep{armitage1950sequential, armitage1954sequential}, quality control, and manufacturing \citep{wald2004sequential, siegmund2013sequential}, SPRT faces a significant modern challenge: \emph{privacy}. When applied to sensitive data, such as patient records in medical trials, financial information, or personal behavior patterns, sequential testing procedures risk exposing confidential information. Privacy concerns are particularly acute for the most recent observations, as the decision to terminate data collection can reveal information about them.

Consider a clinical trial where patients sequentially receive treatments and their responses are recorded. With the classical SPRT, if the test concludes that a drug increases the chance of observing an undesirable effect, then we can deduce the effect actually happened to the last participant in the study. More sophisticated attacks could deduce many other pieces of private information about participants throughout the trial. Even if individual patient responses are kept confidential, the pattern of treatment assignments and the stopping decision itself can leak sensitive information about participants. This privacy risk requires a formal framework for privacy-preserving sequential testing.

\textbf{Related Work.} Differential privacy (DP)~\citep{dwork2006calibrating} has emerged as the standard framework for addressing privacy concerns in statistical analyses. By ensuring that the presence or absence of any single data point cannot significantly affect the outcome, DP offers a principled method to protect individual data while enabling meaningful statistical inference.
Researchers have explored DP hypothesis testing-- primarily in the fixed-sample setting. For example, \cite{gaboardi2016differentially} studies private chi-squared testing, \cite{canonne2019structure} investigates optimal private tests for simple hypotheses, \cite{couch2019differentially} studies nonparametric hypothesis testing with DP, and \cite{kazan2023test} proposes a generic framework for any fixed-sample hypothesis test. These approaches, however, are not directly applicable to sequential testing due to the adaptive nature of sample collection.

For sequential settings, existing differentially private mechanisms like AboveThreshold~\citep{dwork2006calibrating} and the sparse vector technique enable monitoring whether query results exceed specified thresholds while preserving privacy. However, many applications—including sequential hypothesis testing—require monitoring both upper and lower bounds simultaneously, leading to suboptimal privacy guarantees when using naive composition.

\cite{wang2022differentially} explored the privatization of SPRT by adding Laplace noise to thresholds and using the exponential mechanism for the final decision. However, due to their specific use case, their approach provides a weaker notion of privacy that only requires that critical privacy failures do not happen only for a subset of plausible datasets.
In contrast, our work maintains DP throughout the testing process while working to minimize stopping time.
\cite{Cummings22DPSPRT} developed PrivSPRT, a private version of the SPRT using Gaussian noise with Renyi DP guarantees \citep{mironov2017renyi}. The authors provide upper bounds on the error probabilities given fixed thresholds and noise level, but do not specify how to calibrate these thresholds to achieve desired Type I and Type II error rates while maintaining a desired privacy guarantee. In practice, this forces to rely on Monte Carlo simulation to determine appropriate thresholds, a process that may itself leak privacy in certain scenarios \citep{papernot2021hyperparameter}.

Another related line of work is Differentially Private Best Arm Identification (DP-BAI) in bandits~\citep{dpseOrSheffet,kalogerias2020best,azize2023complexity,azize2024differentially}. BAI is a generalization of sequential hypothesis testing, where we simultaneously test means of multiple distributions to identify the largest one.
\citep{azize2023complexity,azize2024differentially} design algorithms for $\varepsilon$-DP-BAI whose sample complexity has irreducible problem-dependent constants compared to the lower bound, keeping the optimal algorithm design open. Thus, we focus on the archetypal setup of testing two hypotheses, and aim to explicate the privacy-utility trade-offs.

\textbf{Contributions.} (1) We introduce a general framework for differentially private SPRTs, called \dpsprt, that supports \textit{various noise mechanisms} while providing privacy guarantees and error control. We propose a generic sample complexity analysis of \dpsprt{} for Bernoulli observations and particularize it to Gaussian noise (for Renyi DP) and Laplace noise (for $\varepsilon$-DP). (2) This framework is based on the novel \outint{} mechanism for differentially privately comparing a sequence of queries to two thresholds. This might be of independent interest for applications to DP problems with or without continual observations. (3) We further prove a lower bound on the sample complexity of any $\varepsilon$-DP algorithm that achieves prescribed type I and type II errors. This lower bound reveals two distinct regimes: one where privacy has minimal impact on sample complexity, and another where higher privacy requirements substantially increase the expected sample size. It also shows that \emph{\dpsprt{} with Laplace noise is near optimal in a regime of small errors and when the two hypotheses are close}. (4) On the practical side, we use subsampling to propose a practical improvement of \dpsprt{} with better privacy-utility trade-off, especially in high privacy regimes.

Our analysis of \dpsprt{} is conducted for Bernoulli observations, which model binary outcomes 
present in applications such as clinical trials, quality control, and A/B testing. Bernoulli distributions form a one-dimensional exponential family with bounded support in $[0,1]$, making them suitable for DP mechanisms requiring bounded sensitivity. This setting provides an illustration of privacy-utility trade-offs while maintaining mathematical tractability.

\section{Preliminaries: Sequential Tests, Differential Privacy, and SPRT}\label{sec:prelim}
\textbf{Sequential hypothesis testing} allows collecting data incrementally and making decisions as soon as sufficient evidence accumulates. Given a stream of i.i.d. observations $X_i$ from some parametric distribution $\nu_{\theta}$ with $\theta \in \Theta$, we consider two simple hypotheses $\cH_0: (\theta = \theta_0)$ and $\cH_1: (\theta =\theta_1)$, where $\theta_0$ and $\theta_1$ are two distinct parameters in $\Theta$. We let $\bP_{\theta}$ be the probability space under which the sequence $(X_i)_{i\in \N}$ is i.i.d. under $\nu_{\theta}$.

A sequential test consists of a \textit{stopping rule} that determines when to stop collecting data, and a \textit{decision rule} that selects one of the hypotheses. Specifically, a sequential test yields a pair $(\tau, \hat{d})$, where $\tau$ is a {stopping time} with respect to the filtration $\mathcal{F}_t = \sigma(X_1,\ldots,X_t)$, and the final decision $\hat{d} \in \{0,1\}$ is $\cF_{\tau}$-measurable. Given prescribed error levels $\alpha, \beta \in (0,1)$, we seek to design tests that have a type I error smaller than $\alpha$ and a type II error smaller than $\beta$. These tests should have a small \textit{sample complexity}, $\bE_{\theta_i}[\tau]$, under either hypotheses.
\begin{definition}[($\alpha,\beta$)-Correct Sequential Test] A sequential test $(\tau,\hat{d})$ is $(\alpha,\beta)$-correct if it satisfies $\bP_{\theta_0}(\hat{d} = 1) \leq \alpha$ and $\bP_{\theta_1}(\hat{d} = 0) \leq \beta$.
\end{definition}


\textbf{Differentially Private Sequential Tests.} Differential Privacy (DP)~\citep{dwork2006calibrating} is usually defined for a randomized mechanism based on a dataset of fixed size. It is expressed as a constraint on the randomness used by the mechanism: the distribution of its output cannot change too much if a single entry in its input (the dataset) is modified. It is possible to extend DP to sequential tests by considering as input an \textit{infinite observation sequence} $\underline{X} \defn (X_1,X_2,\dots)$. The mechanism $\cM(\underline{X}) \defn (\tau(\underline{X}),\hat{d}(\underline{X}))$ outputs the stopping time and decision of the sequential test run on the data stream $\underline{X}$ (regardless of the distribution of that stream). We formalize below for sequential tests different notions of privacy commonly used in the literature.




\begin{definition}[Neighboring Observation Sequences]
    Two observation sequences $\underline{X} \defn (X_1, X_2, \ldots)$ and $\underline{X}' \defn (X'_1, X'_2, \ldots)$ are neighboring if they differ in at most one observation, i.e., if there exists at most one index $i$ such that $X_i \neq X'_i$.
\end{definition}
\begin{definition}[Differentially Private Sequential Test] \label{def:DP}
    A \textit{sequential test} with associated randomized mechanism $\cM(\underline{X}) \defn (\tau(\underline{X}),\hat{d}(\underline{X}))$ satisfies:

    \textbf{(i)} \textbf{$(\varepsilon,\delta)$-DP} if for all neighboring observation sequences $\underline{X}$ and $\underline{X}'$, and for all possible outputs $(t,d) \in \mathbb{N} \times \{0,1\}$,
    \[\Pr[\mathcal{M}(\underline{X}) = (t,d)] \leq e^{\varepsilon}\Pr[\mathcal{M}(\underline{X}') = (t,d)] + \delta\;.\]
    We say that the test satisfies $\epsilon$-pure DP if $\delta = 0$.

    \textbf{(ii)} \textbf{$(\alpha,\epsilon)$-R\'enyi DP} if for all neighboring observation sequences $\underline{X}$ and $\underline{X}'$:
    \[D_{\alpha}(\mathcal{M}(\underline{X}) \| \mathcal{M}(\underline{X}')) \leq \epsilon\]
    where $D_{\alpha}(P\|Q) = \frac{1}{\alpha-1}\log\mathbb{E}_{x \sim Q}\left[\left(\frac{P(x)}{Q(x)}\right)^{\alpha}\right]$ is the R\'enyi divergence of order $\alpha$.
    A test has an \emph{RDP} profile $\tilde{\epsilon}(\alpha)$ if it satisfies $(\alpha,\tilde{\epsilon}(\alpha))$-\emph{RDP} for all $\alpha > 1$.
\end{definition}

R\'enyi Differential Privacy (RDP)~\citep{mironov2017renyi} provides a useful alternative framework for privacy analysis that often leads to tighter composition bounds than standard differential privacy. $(\alpha, \varepsilon')$-RDP can be converted to approximate $(\varepsilon, \delta)$-DP~\citep{dwork2014algorithmic} using standard conversion formula, i.e., for $\varepsilon > \varepsilon'$, $\delta=\exp(-(\alpha-1)(\varepsilon-\varepsilon'))$~\citep{abadi2016deep}.


Definition~\ref{def:DP} captures the fact that the output of a sequential test consists of both when to stop and what decision to make. Thus, we must ensure that a single observation cannot substantially affect either the stopping time or the final decision. Our goal is to develop $(\alpha,\beta)$-correct tests with privacy guarantees, expressed either in DP or in RDP. To do so, it is natural to rely on an optimal (non-private) $(\alpha,\beta)$-correct test-- the SPRT~\citep{Wald48OptSPRT}.

\textbf{SPRT: The Sequential Probability Ratio Test.} Let $f_{\theta}$ denote the density of $\nu_{\theta}$ with respect to some reference measure. Then, SPRT with thresholds $\gamma_0$ and $\gamma_1$ is defined as:

\begin{align*}
    \tau \defn \inf \Bigg\{ t \in \N : \Gamma_t := \prod_{i=1}^{t}\frac{f_{\theta_1}(X_i)}{f_{\theta_0}(X_i)} \notin \left(\gamma_0,\gamma_1\right)\Bigg\}
    \ \text{ and } \ \ \hat{d} = \left\{\begin{array}{cl}
                                            0 & \text{ if } \Gamma_{\tau} \leq \gamma_0 \\
                                            1 & \text{ if } \Gamma_{\tau} \geq \gamma_1
                                        \end{array}\;.
    \right.\end{align*}

What makes SPRT remarkable is its optimality. As shown by \cite{Wald48OptSPRT}, for any test with the same error probabilities as SPRT, the expected sample size cannot be smaller under either hypothesis. 
Still, using SPRT in practice requires calibrating the thresholds $\gamma_0$ and $\gamma_1$ to make it $(\alpha,\beta)$-correct. 
In this work, we rely on an exact calibration\footnote{\cite{Wald45SPRT} rather advocates the use of the heuristic thresholds $\gamma_0 = \frac{\beta}{1-\alpha}$ and $\gamma_1 = \frac{1-\beta}{\alpha}$ but they cannot guarantee both type I and type II errors}, i.e., $\gamma_0 = \beta$ and $\gamma_1 = {1}/{\alpha}$,
for which SPRT can be proved to be $(\alpha,\beta)$-correct using an elegant change-of-measure argument (see Appendix~\ref{appendix:correctness-proofs}).
%


For one-parameter canonical exponential families~\citep{brown1986fundamentals}, in which $\Theta \subseteq \R$ and the density can be written $f_{\theta}(x) = \exp(\theta x - b(\theta))$ where $b : \Theta \rightarrow \R$ is called the log-partition function, SPRT takes a more explicit form (see Appendix~\ref{appendix:sprt-exponential-families}). To fix the ideas, we assume in sequel that $\theta_0 < \theta_1$. Letting $\bar{X}_n = \sum_{i=1}^n X_i/n$ denote the empirical mean, $\mu_i$ be the mean of the distribution $\nu_{\theta_i}$, and $\KL$ the Kullback-Leibler divergence, we have $\tau \defn \min(\tau_0 , \tau_1)$, where
\begin{align}  \label{eq:sprt}
    \begin{split}
        \tau_0 & \defn \inf\bigg\{n \in \N_{>0}: \bar{X}_n \leq  \ell_0^n \bigg\}, \\
        \tau_1 & \defn \inf\bigg\{n \in \N_{>0}: \bar{X}_n \geq \ell_1^n \bigg\}
    \end{split}
\end{align}
with thresholds:
\begin{align*}
    \ell_0^n & = \mu_0 + \frac{\KL(\nu_{\theta_0}, \nu_{\theta_1}) - \log(1/\beta)/n}{\theta_1-\theta_0},  \\
    \ell_1^n & = \mu_1 - \frac{\KL(\nu_{\theta_1}, \nu_{\theta_0}) - \log(1/\alpha)/n}{\theta_1-\theta_0}.
\end{align*}
and $\hat{d} = i$ if $\tau = \tau_i$. This class of distributions encompasses several well-known distributions such as Bernoulli, Poisson or Gaussian distributions with known variance. We propose in Section~\ref{sec:dpsprt} a private version of this test, and analyze it for Bernoulli distributions. SPRT relies on the empirical mean, and private versions of the empirical mean are \textit{mostly} available under a bounded sensitivity assumption\footnote{The sensitivity of a function $f: \mathcal{X}^n \to \mathbb{R}$ is the maximum change in the function's value when a single observation is modified:
    $\Delta_f = \max_{\underline{X}, \underline{X}' \text{ neighboring}} |f(\underline{X}) - f(\underline{X}')|.$}, which holds for bounded distributions only. Note that, we have approaches that bound the sensitivity of subGaussian variables with high probability. But it introduces a sample-dependent $\sqrt{\log n}$ term in the sensitivity and loss in the probability of correctness~\citep{kamath2022private}. To avoid this nuance and focus on explicating tight privacy-utility trade-off, we focus on the Bernoulli distributions.



Equation~\eqref{eq:sprt} reveals a clear structure of SPRT. We have two thresholds that determine when to stop collecting data-- one for accepting $\cH_0$ and one for accepting $\cH_1$. When adapting this test to ensure DP, a na\"ive approach might compose two instances of the AboveThreshold mechanism~\citep{kaplan2021sparse}, one for each decision boundary. Instead, we present a tighter mechanism that jointly handles both thresholds with a smaller privacy cost.

\section{\outint{}  Mechanism}\label{sec:outside_interval}
Now, we introduce a novel private mechanism that forms the foundation of our approach to differentially private sequential testing. Let $f_i : D \rightarrow \R$ be a sequence of queries with sensitivity $\Delta$ ($D$ is either a finite dataset or an observation sequence), and $(T_0^{i},T_1^{i}) \in \R^2$ with $T_0^i \leq T_1^{i}$. The \outint{} mechanism takes $D$ as input and outputs a sequence $\underline{a}$ of the form $(\bot,\dots,\bot,\top_{0})$ or $(\bot,\dots,\bot,\top_{1})$ where $a_i = \bot$ indicates that $f_i(D)$ should belong to $[T_0^{i},T_1^{i}]$ and $a_i = \top_{0}$ (resp. $a_i = \top_{1}$) indicates that $f_i(D)$ should be smaller than $T_0^{i}$ (resp. larger than $T_1^{i}$). The length of $\underline{a}$ is the first query falling outside of the interval.
This design is directly motivated by SPRT, which monitors whether a test statistic falls outside an interval defined by two thresholds, rather than crossing a single threshold.

\outint{} (Algorithm~\ref{alg:outside_interval}) proceeds by adding noise to both the query results and the thresholds, and returns either $\top_0$, $\top_1$, or $\bot$ depending on whether the noisy query result is below the lower threshold, above the upper threshold, or within the interval. We emphasize that the algorithm adds the same noise realizations to both threshold comparisons, which allows for tighter privacy guarantees.

This mechanism extends AboveThreshold \citep{dwork2006calibrating} algorithm to simultaneously handle two thresholds. Unlike the approach of~\cite{Cummings22DPSPRT}, which composes two separate instances of AboveThreshold for the upper and lower thresholds, \outint{} handles both thresholds simultaneously, providing better privacy guarantees.

\begin{algorithm}[t]
    \caption{\outint{}}\label{alg:outside_interval}
    \begin{algorithmic}[1]
        \REQUIRE Private database $D$, adaptively chosen stream of queries $f_1,\ldots$ where each $f_i$ has sensitivity $\Delta$, sequences of thresholds $(T_0^i)_{i=1}^\infty, (T_1^i)_{i=1}^\infty$, noise-distributions $\cD_{Z}, \cD_Y$

        \STATE $Z \gets \text{Sample from } \cD_{Z}$

        \FOR{Each query $i$}
        \STATE $Y_i \gets \text{Sample from } \cD_Y$
        \IF{$f_i(D) + Y_i \leq T_0^i - Z$}
        \STATE Halt and output $a_i = \top_0$
        \ELSIF{$f_i(D) + Y_i \geq T_1^i + Z$}
        \STATE Halt and output $a_i = \top_1$
        \ELSE
        \STATE Output $a_i = \bot$
        \ENDIF
        \ENDFOR
    \end{algorithmic}
\end{algorithm}

We express the privacy of \outint{} in terms of the DP and RDP guarantees of the noise-adding mechanisms associated with $Y_i$ and $Z$. This allows us to provide a flexible analysis that can accommodate different noise distributions.

\begin{definition}[Noise-adding Mechanism]
    Given a continuous noise distribution $\mathcal{D}$, a noise-adding mechanism $\mathcal{M}$ is a randomized algorithm that takes as input a dataset $D$ and a query $f:D\to \bR$, and outputs a noisy answer $\mathcal{M}(D,f) = f(D) + X$, where $X$ is drawn from $\mathcal{D}$.
\end{definition}

\begin{theorem}
    \label{thm:general-outside-interval}
    Let $\tau$ be the (random) length of the sequence output by \outint{}.

    (i) If the noise-adding mechanisms corresponding to $\cD_{Z}$ and $\cD_Y$ satisfy $\epsilon_Z$-DP for queries with sensitivity $\Delta$, and $\epsilon_Y$-DP for queries with sensitivity $2\Delta$, then \outint{} is $(\epsilon_Z + \epsilon_Y)$-DP.

    (ii) If the noise-adding mechanism corresponding to $\cD_{Z}$ has an RDP profile $\epsilon_{Z}(\alpha)$ for queries with sensitivity $\Delta$ and the one for $\cD_Y$ has an RDP profile $\epsilon_Y(\alpha)$ for queries with sensitivity $2\Delta$, then \outint{} (denoted by $\mathcal{A}$) satisfies: for all $\alpha>1$,
    {
            \begin{align*}
                D_{\alpha}(\mathcal{A}(D)\|\mathcal{A}(D')) & \leq \frac{\alpha - 1/2}{\alpha - 1}\epsilon_{Z}(2\alpha) + \epsilon_{Y}(\alpha) + \frac{\log \left(2\mathbb{E}_{z \sim \cD_{Z}} [\mathbb{E}_{(\tau,\hat d)\sim\cA(D')}[\tau|Z=z]^{2}]\right)}{2(\alpha - 1)}.
            \end{align*}}
\end{theorem}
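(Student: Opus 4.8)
In both parts the plan is to exploit the defining feature of \outint{}: a \emph{single} draw of the threshold noise $Z$ protects \emph{both} comparisons at once. Conditionally on the noises used on the intermediate $\bot$-steps, the event that a given output is produced becomes an event about the pair $(Z,Y_k)$ only, where $k$ is the output length; this reduces everything to the privacy of the $Z$-mechanism applied to a running ``stay-inside'' statistic and of the $Y$-mechanism applied to the single ``exit'' query at step $k$.

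\textbf{Part (i).} Fix neighbouring $D,D'$ and an arbitrary output; it must have the form $\underline a=(\bot,\dots,\bot,\top_b)$ for some length $k$ and side $b\in\{0,1\}$. Condition on $Y_1,\dots,Y_{k-1}$. Given these, $\{\mathcal{A}(D)=\underline a\}$ is an event about $(Z,Y_k)$: it requires $Z>g(D):=\max_{i<k}\max\bigl(f_i(D)+Y_i-T_1^i,\ T_0^i-f_i(D)-Y_i\bigr)$ (the $\bot$-constraints of steps $i<k$) together with $Z\le T_0^k-f_k(D)-Y_k$ if $b=0$ (resp.\ $Z\le f_k(D)+Y_k-T_1^k$ if $b=1$) (the exit constraint at step $k$). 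Since every $f_i$ has sensitivity $\Delta$ and the thresholds are data-independent, $|g(D)-g(D')|\le\Delta$ and $|f_k(D)-f_k(D')|\le\Delta$. I would then verify that the translation $(Z,Y_k)\mapsto\bigl(Z+\Delta,\ Y_k+(-1)^{1-b}2\Delta\bigr)$ sends the $(Z,Y_k)$-region describing $\{\mathcal{A}(D)=\underline a\}$ into the one describing $\{\mathcal{A}(D')=\underline a\}$: the $+\Delta$ shift of $Z$ absorbs the sensitivity of $g$ in the lower bound, while the $\pm2\Delta$ shift of $Y_k$ absorbs the sensitivity of $f_k$ in the exit constraint --- and $Y_k$ may be shifted by $2\Delta$ for free precisely because it does not enter the $\bot$-constraints, which is exactly the saving over composing two \texttt{AboveThreshold} instances. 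Changing variables back, the joint density of $(Z,Y_k)$ is inflated by at most $e^{\epsilon_Z}$ from the $\Delta$-shift of $Z$ ($\epsilon_Z$-DP of the $Z$-mechanism at sensitivity $\Delta$) and by at most $e^{\epsilon_Y}$ from the $2\Delta$-shift of $Y_k$ ($\epsilon_Y$-DP of the $Y$-mechanism at sensitivity $2\Delta$); integrating over the untouched $Y_{1:k-1}$ gives $\Pr[\mathcal{A}(D)=\underline a]\le e^{\epsilon_Z+\epsilon_Y}\Pr[\mathcal{A}(D')=\underline a]$ for every atom, hence $(\epsilon_Z+\epsilon_Y)$-DP.

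\textbf{Part (ii), framework.} I would first isolate the $Z$-cost by a single Hölder step --- equivalently the weak triangle inequality for Rényi divergences with exponents $(2,2)$ --- which, for any distribution $Q$, gives $D_\alpha(\mathcal{A}(D)\|\mathcal{A}(D'))\le\tfrac{\alpha-1/2}{\alpha-1}D_{2\alpha}(\mathcal{A}(D)\|Q)+D_{2\alpha-1}(Q\|\mathcal{A}(D'))$; this is where the coefficient $\tfrac{\alpha-1/2}{\alpha-1}$ comes from. Take $Q$ to be \outint{} run on the \emph{same} data $D$ but with every no-stop interval $[T_0^i,T_1^i]$ widened by $\Delta$ on each side, i.e.\ with threshold noise $Z+\Delta$ in place of $Z$. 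Since $\mathcal{A}(D)$ and $Q$ are deterministic post-processings of $(Z,Y_1,Y_2,\dots)$ differing only in that $Q$ uses $Z+\Delta$, the data-processing inequality for Rényi divergence yields $D_{2\alpha}(\mathcal{A}(D)\|Q)\le D_{2\alpha}(\mathcal{D}_Z\,\|\,\mathcal{D}_Z\!+\!\Delta)\le\epsilon_Z(2\alpha)$, the $Z$-mechanism's RDP profile at sensitivity $\Delta$; this produces the first term of the bound.

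\textbf{Part (ii), residual term and the obstacle.} It remains to bound $D_{2\alpha-1}(Q\|\mathcal{A}(D'))$ by $\epsilon_Y(\alpha)$ plus the stated correction. I would couple $Q$ and $\mathcal{A}(D')$ on the common noise $(Z,Y_1,\dots)$: at every step the widened interval used by $Q$ contains the interval $\mathcal{A}(D')$ would use after accounting for the sensitivity-$\Delta$ gap between $f_i(D)$ and $f_i(D')$, so every $\bot$ step of $\mathcal{A}(D')$ is a $\bot$ step of $Q$ as well; hence, conditionally on $Z=z$, by the time $\mathcal{A}(D')$ stops, $Q$ has been running at least that long, and the only genuinely new private comparison $Q$ makes is the single sensitivity-$2\Delta$ query at its own stopping step, which should cost only $\epsilon_Y$. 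The difficulty --- and what I expect to be the main obstacle --- is that $Q$ need \emph{not} stop exactly when $\mathcal{A}(D')$ does: its wider interval may delay stopping, so matching the two outputs forces one to sum the likelihood contributions over all candidate stopping steps below the observed length, a union whose size is controlled, conditionally on $Z=z$, by $\mathbb{E}_{(\tau,\hat d)\sim\mathcal{A}(D')}[\tau\mid Z=z]$. Handling the resulting moment of this sum by a Cauchy--Schwarz step decouples the single-query likelihood ratio (worth $\epsilon_Y$) from the counting factor, whose squared $L^2$-norm over $z\sim\mathcal{D}_Z$ is exactly $\mathbb{E}_{z\sim\mathcal{D}_Z}\bigl[\mathbb{E}_{(\tau,\hat d)\sim\mathcal{A}(D')}[\tau\mid Z=z]^2\bigr]$; the square root it produces explains the $\tfrac12$ in the last term, and the factor $2$ absorbs the union over the two exit sides $\top_0,\top_1$. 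Collecting the two $\tfrac{1}{\alpha-1}\log(\cdot)$-type contributions gives the claimed bound. This ``overshoot-matching'' step --- converting ``the widened mechanism's stopping time dominates that of $\mathcal{A}(D')$'' into a clean divergence bound with the single-query cost plus a moment-of-$\tau$ correction --- is the novel part, since it is precisely where the analysis leaves the bounded, uniform \texttt{AboveThreshold} regime and where the data-dependent term that makes the Gaussian case delicate enters.
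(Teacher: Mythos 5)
Your part~(i) is correct, and it is essentially the paper's own argument in a different packaging: conditioning on $Y_{1:k-1}$ and translating $(Z,Y_k)$ by $(\Delta,\pm 2\Delta)$ is exactly the paper's change of variable $z\to z-\Delta$ combined with its auxiliary sensitivity-$2\Delta$ query for the exit step, and it yields the same $(\epsilon_Z+\epsilon_Y)$ bound.

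Part~(ii) attempts a genuinely different route (the weak triangle inequality for R\'enyi divergences through an intermediate mechanism $Q$), but as sketched it has a real gap and would not establish the stated bound. First, with your choice of $Q$ (same data $D$, intervals widened by $\Delta$, i.e.\ threshold noise $Z+\Delta$), the no-stop probabilities of $Q$ dominate those of $\mathcal{A}(D')$ at every step, so in $\sum_{\mathbf a} Q(\mathbf a)^{2\alpha-1}/\Pr[\mathcal{A}(D')=\mathbf a]^{2\alpha-2}$ the intermediate per-step factors enter as ratios that are $\geq 1$ and do not cancel; the problematic outputs are exactly the long ones on which $Q$ (which, by your own coupling, stops no earlier than $\mathcal{A}(D')$) puts comparatively large mass while $\mathcal{A}(D')$ puts tiny mass, and your ``overshoot-matching plus Cauchy--Schwarz'' step is precisely the missing argument rather than a verifiable one: the coupling statement $\tau_Q\geq\tau_{\mathcal{A}(D')}$ does not convert into control of these ratio terms by $\mathbb{E}_{\mathcal{A}(D')}[\tau\mid Z=z]$. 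The cancellation only goes through if the intermediate mechanism \emph{narrows} the interval (threshold noise $Z-\Delta$), so that the stay-inside probability under $(D,z-\Delta)$ is dominated by the one under $(D',z)$. Second, and more structurally, the residual slot of the weak triangle inequality is a divergence of order $2\alpha-1$, so the exit query is inevitably charged at order $2\alpha-1$: carrying out the repaired plan gives a bound of the form $\frac{\alpha-1/2}{\alpha-1}\epsilon_Z(2\alpha)+\epsilon_Y(2\alpha-1)+\frac{\log\left(2\,\mathbb{E}_{z\sim\mathcal{D}_Z}\left[\mathbb{E}_{\mathcal{A}(D')}[\tau\mid Z=z]\right]\right)}{2(\alpha-1)}$, i.e.\ with $\epsilon_Y$ at order $2\alpha-1$ instead of $\alpha$ (for Gaussian noise roughly a factor $2$ larger, which forfeits exactly the improvement the theorem claims over composing two AboveThreshold-type analyses), and with a first rather than second moment of $\tau$. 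Your bookkeeping for the last term also does not match the statement: the exponent $2$ on $\mathbb{E}[\tau\mid Z=z]$ and the $2(\alpha-1)$ denominator do not come from a Cauchy--Schwarz on a counting factor; in the paper they come from a H\"older step in $z$ (exponent $2$) that pairs the $Z$-density ratio raised to the power $\alpha$ against $\mathbb{E}[\tau\mid Z=z]$ --- a pairing that no longer exists in your decomposition because the $Z$-cost has already been spent in the first slot.

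For contrast, the paper never leaves order $\alpha$ for the $Y$-part: it writes both output probabilities as mixtures over $z$, shifts $z\to z-\Delta$ inside the numerator so the intermediate factors cancel against the denominator, uses joint convexity of $(x,y)\mapsto x^{\alpha}/y^{\alpha-1}$ to pull the $z$-average out, bounds the exit-query ratio by $e^{(\alpha-1)\epsilon_Y(\alpha)}$ via the RDP indistinguishability lemma, identifies $\sum_{k}\prod_{i<k}g_i(D',z)=\mathbb{E}_{\mathcal{A}(D')}[\tau\mid Z=z]$, and only then applies H\"older in $z$, which simultaneously produces the coefficient $\frac{\alpha-1/2}{\alpha-1}$, the order $2\alpha$ on $\epsilon_Z$, and the term $\frac{\log\left(2\mathbb{E}_z[\mathbb{E}[\tau\mid Z=z]^{2}]\right)}{2(\alpha-1)}$. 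If you wish to pursue your route, you must either accept the weaker $\epsilon_Y(2\alpha-1)$ variant or find a mechanism-specific way to rebalance the orders between the two slots, which the generic weak triangle inequality does not provide.
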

This result shows that the privacy of our mechanism depends on both the threshold noise $Z$ and the query noise $Y$. For $\epsilon$-DP, we achieve $(\epsilon_Z + \epsilon_Y)$-DP, which improves by a factor of 2 over using two separate instances of AboveThreshold. For RDP, the result has a similar flavor as in \citep{Cummings22DPSPRT}, but our specific analysis also reduces each of the terms by a factor of 2 compared to their. The full proof is provided in Appendix~\ref{appendix:privacy-proofs}.

This privacy analysis forms the basis for our differentially private sequential tests in the next section. Unlike prior work by \cite{Cummings22DPSPRT}, which only addressed Rényi DP for Gaussian noise, our approach supports both pure $\epsilon$-DP and RDP guarantees. Beyond hypothesis testing, \outint{} can serve as a drop-in replacement for applications that use AboveThreshold where we want to ensure privacy under continual observations~\citep{dwork2010differential}, allowing monitoring of two thresholds simultaneously and reporting which one was crossed without additional privacy cost, contrary to the approach that would compose two separate instances of AboveThreshold.



\section{Differentially Private Sequential Probability Ratio Test}
\label{sec:dpsprt}
Building on the threshold mechanism presented in Section~\ref{sec:outside_interval}, we now introduce a general framework for differentially private sequential probability ratio tests (\dpsprt{}) that supports various noise distributions while maintaining error control and privacy guarantees.

\subsection{General Definition}

We consider the sequential hypothesis test between $\cH_0: (\theta = \theta_0)$ and $\cH_1: (\theta = \theta_1)$ (with $\theta_0 < \theta_1$) based on i.i.d. samples $X_1, X_2, \ldots$ from $\nu_{\theta}$, that belongs to a one-dimensional exponential family with natural parameter $\theta$. We seek to design a sequential test $(\tau,\hat{d})$ that (i) satisfies differential privacy, (ii) controls type I and type II error probabilities at specified levels $\alpha$ and $\beta$, and (iii) minimizes the expected sample size.

The idea of our method is to privatize the classical SPRT by adding carefully calibrated noise to both the thresholds and test statistic. \dpsprt{} requires two noise distributions $\cD_Y$ and $\cD_Z$, a correction function $C(n,\delta)$ and a parameter $\gamma \in (0,1)$ allocating the error probability between the SPRT error and the privacy noise. It is defined as $\tau \defn \min(\tau_0, \tau_1)$ with
\begin{align*}
    \tau_0 & \defn \inf\bigg\{n \in \mathbb{N}_{>0}: \bar{X}_n + \frac{Y_n}{n} \leq \tilde{\ell}_0^n - \frac{Z}{n} \bigg\}, \\
    \tau_1 & \defn \inf\bigg\{n \in \mathbb{N}_{>0}: \bar{X}_n + \frac{Y_n}{n} \geq \tilde{\ell}_1^n + \frac{Z}{n} \bigg\}
\end{align*}
where the private thresholds are:
\begin{align*}
    \tilde{\ell}_0^n & = \mu_0 + \frac{\KL(\nu_{\theta_0}, \nu_{\theta_1}) - \frac{\log(1/(\gamma\beta))}{n}}{\theta_1-\theta_0} - C(n, (1-\gamma)\beta),   \\
    \tilde{\ell}_1^n & = \mu_1 - \frac{\KL(\nu_{\theta_1}, \nu_{\theta_0}) - \frac{\log(1/(\gamma\alpha))}{n}}{\theta_1-\theta_0} + C(n, (1-\gamma)\alpha).
\end{align*}
Here, $Y_n \overset{i.i.d.}{\sim} \cD_Y$ are the successive query noise and $Z \sim \cD_Z$ is the threshold noise. The decision rule is defined as $\hat{d} = i$ if $\tau = \tau_i$. \dpsprt{} is described formally in Algorithm \ref{alg:dpsprt}, where the blue terms highlight the differences with the vanilla SPRT.


\begin{algorithm}[t]
    \caption{SPRT and  \textcolor{blue!70}{\dpsprt{}}}
    \label{alg:dpsprt}
    \begin{algorithmic}[1]
        \REQUIRE Parameters $\theta_0$, $\theta_1$, error probabilities $\alpha$, $\beta$
        \REQUIRE \textcolor{blue!70}{
            Noise distributions $\mathcal{D}_Z$, $\mathcal{D}_Y$, error allocation $\gamma \in (0,1)$, correction function $C$}

        \STATE Initialize $n = 0$
        \STATE \textcolor{blue!70}{Sample threshold noise $Z \sim \mathcal{D}_{Z}$}

        \WHILE{true}
        \STATE $n \gets n + 1$
        \STATE Observe sample $X_n$
        \STATE Compute $\bar{X}_n = \frac{1}{n}\sum_{i=1}^n X_i$
        \STATE \textcolor{blue!70}{Sample query noise $Y_n \sim \mathcal{D}_Y$}

        \STATE $\hat{T}_0^n\! \gets \mu_0 + \frac{\KL(\nu_{\theta_0}, \nu_{\theta_1}) + \log(\textcolor{blue!70}{\gamma}\beta)/n}{\theta_1-\theta_0} \textcolor{blue!70}{- C\left(n,(1-\gamma)\beta\right)}$
        \STATE $\hat{T}_1^n\! \gets \mu_1 - \frac{\KL(\nu_{\theta_1}, \nu_{\theta_0}) + \log(\textcolor{blue!70}{\gamma}\alpha)/n}{\theta_1-\theta_0} \textcolor{blue!70}{+ C\left(n,(1-\gamma)\alpha\right)}$

        \IF{$\bar{X}_n \textcolor{blue!70}{+ \frac{Y_n}{n}} \leq \hat{T}_0^n \textcolor{blue!70}{- \frac{Z}{n}}$}
        \STATE Halt and accept $\cH_0$ ($\hat{d} = 0$)
        \ELSIF{$\bar{X}_n \textcolor{blue!70}{+ \frac{Y_n}{n}}  \geq \hat{T}_1^n \textcolor{blue!70}{+ \frac{Z}{n}}$}
        \STATE Halt and accept $\cH_1$ ($\hat{d} = 1$)
        \ENDIF
        \ENDWHILE
    \end{algorithmic}
\end{algorithm}

\subsection{Generic analysis}

We now propose a generic analysis of \dpsprt{}, which depends on properties of the noise distributions $\cD_Y$ and $\cD_Z$ as well as the correction function.

\textbf{$\bm{(i)}$ Privacy.}
The privacy guarantees follow from Theorem \ref{thm:general-outside-interval} by noting that \dpsprt{} can be seen as an instance of \outint{}. Specifically, the sensitivity of empirical means of Bernoulli observations is ${n}^{-1}$. By multiplying both sides of the stopping conditions by $n$, we can express \dpsprt{} in terms of the \outint{} mechanism with queries $f_n(D) = \sum_{i=1}^n X_i$, whose sensitivity is $\Delta = 1$, noise $Y_n$, and appropriately defined thresholds.

\begin{theorem}[Privacy]
    \label{thm:dpsprt-privacy}
    Let $\mathcal{A}$ be the \dpsprt{} algorithm and $\tau$ the (random) stopping time.

    (i) \textbf{$\bm\varepsilon$-DP:} If the noise-adding mechanisms corresponding to distributions $\mathcal{D}_{Z}$ and $\mathcal{D}_Y$ satisfy $\epsilon_Z$-DP for queries with sensitivity $1$, and $\epsilon_Y$-DP for queries with sensitivity $2$, respectively, then \dpsprt{} satisfies $(\epsilon_{Z}+\epsilon_Y)$-Differential Privacy.

    (ii) \textbf{$\bm{(\alpha, \varepsilon)}$-RDP:} If the noise-adding mechanism corresponding to distribution $\mathcal{D}_{Z}$ and $\mathcal{D}_{Y}$ have an RDP profile $\epsilon_{Z}(\alpha)$ for queries with sensitivity $1$ and  $\epsilon_{Y}(\alpha)$ for queries with sensitivity $2$, then \dpsprt{} satisfies: for all $\alpha>1$,
    {
            \begin{align}
                \mathbb{D}_{\alpha}(\mathcal{A}(D)\|\mathcal{A}(D')) & \leq \frac{\alpha - 1/2}{\alpha - 1}\epsilon_{Z}(2\alpha) + \epsilon_{Y}(\alpha)  + \frac{\log \left(2\mathbb{E}_{z \sim \mathcal{D}_{Z}} [\mathbb{E}_{(\tau,\hat d)\sim\cA(D')}[\tau|z]^{2}]\right)}{2(\alpha - 1)}.\label{eq:RDP_upper_bound}
            \end{align}}
\end{theorem}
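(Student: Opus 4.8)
The plan is to reduce \dpsprt{} to an instance of \outint{} and then invoke Theorem~\ref{thm:general-outside-interval}. First I would verify the claimed reduction. The key observation is that the empirical mean $\bar X_n = \frac1n\sum_{i=1}^n X_i$ of Bernoulli observations has sensitivity $n^{-1}$: changing one observation $X_i \in \{0,1\}$ changes the sum by at most $1$. So multiplying the two stopping conditions of \dpsprt{} through by $n>0$ (which preserves the inequalities and leaves $\tau_0,\tau_1$ unchanged) recasts them as comparisons of $f_n(D) + Y_n$ against $n$-scaled thresholds $\pm Z$, where $f_n(D) = \sum_{i=1}^n X_i$ has sensitivity $\Delta = 1$. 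Concretely, the $\tau_0$ condition becomes $f_n(D) + Y_n \le T_0^n - Z$ with $T_0^n = n\big(\mu_0 + \frac{\KL(\nu_{\theta_0},\nu_{\theta_1}) - \frac1n\log\frac{1}{\gamma\beta}}{\theta_1-\theta_0} - C(n,(1-\gamma)\beta)\big)$, and similarly the $\tau_1$ condition becomes $f_n(D) + Y_n \ge T_1^n + Z$ with the analogously defined $T_1^n$; one checks $T_0^n \le T_1^n$ so the interval is well-defined (this follows because the non-private SPRT thresholds are nested and the correction terms only shrink the interval further when the algorithm's parameters are in the valid range — I would note that the correction function is chosen so this holds, or that it holds for $n$ large enough while for small $n$ the mechanism can only stop early, which does not break privacy). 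Crucially, the thresholds $T_0^n, T_1^n$ depend only on public parameters ($\theta_0,\theta_1,\alpha,\beta,\gamma,C$) and on $n$, \emph{not} on the private data, which is exactly the structure \outint{} requires; and the same noise realization $Z$ is subtracted from the lower and added to the upper threshold, matching Algorithm~\ref{alg:outside_interval} line for line.

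Given this identification, part~(i) is immediate: \dpsprt{}'s output $(\tau,\hat d)$ is a deterministic function of the \outint{} output sequence $\underline a$ (its length is $\tau$, and $\hat d = 0$ if the last symbol is $\top_0$, $\hat d = 1$ if it is $\top_1$), so by post-processing it suffices that \outint{} is $(\epsilon_Z+\epsilon_Y)$-DP, which is precisely Theorem~\ref{thm:general-outside-interval}(i) under the stated hypotheses (sensitivity $\Delta = 1$ for the $Z$-mechanism, sensitivity $2\Delta = 2$ for the $Y$-mechanism). Part~(ii) is identical in spirit: post-processing also preserves Rényi DP, so the bound~\eqref{eq:RDP_upper_bound} is just Theorem~\ref{thm:general-outside-interval}(ii) transcribed, with $\tau$ now denoting the \dpsprt{} stopping time — which coincides with the length of the \outint{} output sequence, so the expectation term $\mathbb{E}_{z\sim\mathcal{D}_Z}[\mathbb{E}_{(\tau,\hat d)\sim\mathcal{A}(D')}[\tau\mid z]^2]$ carries over verbatim.

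The main obstacle is not the privacy accounting (which is entirely inherited) but making the reduction airtight: one must be careful that \dpsprt{} as written in Algorithm~\ref{alg:dpsprt} really does match \outint{} with \emph{data-independent} thresholds, and in particular that the quantity $\bar X_n + Y_n/n$ compared against $\hat T_0^n - Z/n$ is genuinely of the form (query $+$ query-noise) vs.\ (public threshold $\mp$ threshold-noise) after the $\times n$ rescaling — there is a subtlety that $Y_n$ in \dpsprt{} is scaled by $1/n$, so after multiplying by $n$ the query noise is exactly $Y_n$, which must be the noise with sensitivity-$2$ guarantee $\epsilon_Y$; I would state explicitly that $\mathcal{D}_Y$ is the law of $Y_n$ (unscaled) and that the sensitivity-$2$ requirement in the hypothesis is what accounts for the worst-case $\pm 1$ swing of $f_n$ appearing on \emph{both} sides of the two-threshold comparison. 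A secondary point worth a sentence: one should confirm the reduction is valid for \emph{every} neighboring pair $\underline X, \underline X'$ and every realization of the noise, uniformly in the (a priori unbounded) horizon, which is automatic because \outint{}'s guarantee in Theorem~\ref{thm:general-outside-interval} is already stated for infinite adaptively-chosen query streams. Once these bookkeeping points are in place, both parts follow by citing Theorem~\ref{thm:general-outside-interval} and the post-processing property of DP and RDP respectively.
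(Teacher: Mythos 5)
Your proposal is correct and follows essentially the same route as the paper: rewrite the stopping conditions multiplied by $n$ so that \dpsprt{} becomes an instance of \outint{} with queries $f_n(D)=\sum_{i=1}^n X_i$ of sensitivity $\Delta=1$, data-independent thresholds, query noise $Y_n$ (sensitivity-$2$ guarantee) and threshold noise $Z$, and then invoke Theorem~\ref{thm:general-outside-interval}(i) and (ii). Your extra remarks on post-processing, the data-independence of the thresholds, and the $1/n$ scaling of the noise are exactly the bookkeeping the paper leaves implicit, so nothing further is needed.
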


\textbf{$\bm{(ii)}$ Correctness.} We establish correctness of \dpsprt{} with the following theorem.

\begin{theorem}[Correctness of \dpsprt{}]
    \label{thm:dpsprt-correctness}
    For any error allocation $\gamma \in (0,1)$, \dpsprt{} is $(\alpha,\beta)$-correct if the noises and correction function satisfy:
    \begin{align}
        \forall \delta \in (0,1), & \   \sum_{n=1}^{\infty} \mathbb{P}\left(\frac{Y_n}{n} - \frac{Z}{n} > C\left(n, \delta\right)\right)   \leq \delta\;,\label{ass:correction}  \\
        \forall \delta \in (0,1), & \   \sum_{n=1}^{\infty} \mathbb{P}\left(\frac{Y_n}{n} + \frac{Z}{n} < -C\left(n, \delta\right)\right)  \leq \delta\;.\label{ass:correction2}
    \end{align}
    In particular, if the distribution $\mathcal{D}_Y$ is symmetric around zero, we remark that condition \eqref{ass:correction} alone is sufficient, as it implies condition \eqref{ass:correction2}.
\end{theorem}

The main idea is to decompose the error probabilities using a union bound into the error from the standard SPRT (bounded via a change-of-measure argument) and the error from privacy noise (controlled through the correction function). The parameter $\gamma$ allocates the total error budget between these two sources. The symmetry condition ensures that both tail bounds needed for Type I and Type II error control are satisfied simultaneously. See Appendix~\ref{appendix:correctness-proofs} for the full proof.

\textbf{$\bm{(iii)}$ Sample Complexity.}
The following theorem provides a generic upper bound on the expected sample complexity for any noise distribution satisfying appropriate conditions.

\begin{theorem}[Sample Complexity of \dpsprt{}]
    \label{thm:sc_general}
    Let us assume conditions \eqref{ass:correction} and \eqref{ass:correction2} hold. For any error allocation $\gamma \in (0,1)$, the expected stopping time of \dpsprt{} satisfies:
    {
    \begin{align*}
        \bE_{\theta_0}[\tau] & \leq 1 + (1-\gamma)\beta + \frac{1}{1-e^{-\frac{(\TV(\nu_{\theta_0}, \nu_{\theta_1}))^4}{2(\theta_1 - \theta_0)^2}}} + N_0, \\
        \bE_{\theta_1}[\tau] & \leq 1 + (1-\gamma)\alpha + \frac{1}{1-e^{-\frac{(\TV(\nu_{\theta_0}, \nu_{\theta_1}))^4}{2(\theta_1 - \theta_0)^2}}} + N_1
    \end{align*}}
    where $N_0 = N(\theta_0,\theta_1,\beta,\gamma)$, $N_1 = N(\theta_1,\theta_0,\alpha,\gamma)$, and
        {
            \begin{align*}
                N(\theta,\theta',\delta,\gamma)  = \inf \bigg\{n : \frac{\log(1/(\delta\gamma))}{n|\theta-\theta'|} + 2C(n,(1-\gamma)\delta)   \leq \frac{\KL(\nu_{\theta}, \nu_{\theta'})}{2|\theta - \theta'|}\bigg\}.
            \end{align*}}
\end{theorem}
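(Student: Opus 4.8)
The plan is to bound $\bE_{\theta_0}[\tau]$ only; the bound on $\bE_{\theta_1}[\tau]$ will follow by the mirror argument applied to $\tau_1$ under $\bP_{\theta_1}$. Since $\tau = \min(\tau_0,\tau_1)\le\tau_0$ pointwise, it suffices to bound $\bE_{\theta_0}[\tau_0]=\sum_{n\ge 0}\bP_{\theta_0}(\tau_0>n)$. The key (and essentially only) reduction I would use is that $\{\tau_0>n\}$ is contained in the event that the $\tau_0$-stopping test merely \emph{fails at step $n$}, i.e. $\bar{X}_n + Y_n/n > \hat{T}_0^n - Z/n$ with $\hat{T}_0^n = \mu_0 + \big(\KL(\nu_{\theta_0},\nu_{\theta_1}) - \tfrac1n\log\tfrac1{\gamma\beta}\big)/(\theta_1-\theta_0) - C(n,(1-\gamma)\beta)$; one never needs the stronger statement that it failed at every step $1,\dots,n$.

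Next I would invoke the definition of $N:=N(\theta_0,\theta_1,\beta,\gamma)$, which for every $n\ge N$ gives $\tfrac1n\log\tfrac1{\gamma\beta}/(\theta_1-\theta_0) + 2\,C(n,(1-\gamma)\beta)\le \tfrac12\,\KL(\nu_{\theta_0},\nu_{\theta_1})/(\theta_1-\theta_0)$, hence $\hat{T}_0^n-\mu_0 \ge \tfrac12\,\KL(\nu_{\theta_0},\nu_{\theta_1})/(\theta_1-\theta_0) + C(n,(1-\gamma)\beta)$ — the factor $2$ in front of $C$ in the definition of $N$ is exactly what leaves one copy of $C$ surviving after it cancels the $-C$ built into $\hat{T}_0^n$. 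Substituting and rearranging, for $n\ge N$ the event that step $n$ does not stop implies $\bar{X}_n-\mu_0 + Y_n/n + Z/n > \tfrac12\,\KL(\nu_{\theta_0},\nu_{\theta_1})/(\theta_1-\theta_0) + C(n,(1-\gamma)\beta)$, and the elementary union bound (if $A+B>s+t$ then $A>s$ or $B>t$) splits this into a data event $\mathcal{E}_n^{\mathrm{dat}}=\{\bar{X}_n-\mu_0 > \KL(\nu_{\theta_0},\nu_{\theta_1})/(2(\theta_1-\theta_0))\}$ and a noise event $\mathcal{E}_n^{\mathrm{noi}}=\{Y_n/n + Z/n > C(n,(1-\gamma)\beta)\}$.

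Then I would sum. For $n\le N$ I use the trivial bound $\bP_{\theta_0}(\tau_0>n)\le 1$, contributing $N+1$. For $n>N$: by symmetry of the noise distributions, $Y_n/n+Z/n$ has the same law as $Y_n/n-Z/n$, so hypothesis \eqref{ass:correction} at level $\delta=(1-\gamma)\beta$ gives $\sum_{n\ge 1}\bP(\mathcal{E}_n^{\mathrm{noi}})\le (1-\gamma)\beta$; and Hoeffding's inequality for the $[0,1]$-valued Bernoulli average gives $\bP_{\theta_0}(\mathcal{E}_n^{\mathrm{dat}}) \le \exp\!\big(-n\,\KL(\nu_{\theta_0},\nu_{\theta_1})^2/(2(\theta_1-\theta_0)^2)\big)$, which Pinsker's inequality $\KL(\nu_{\theta_0},\nu_{\theta_1})\ge 2\,\TV(\nu_{\theta_0},\nu_{\theta_1})^2$ bounds by $\exp\!\big(-n\,\TV(\nu_{\theta_0},\nu_{\theta_1})^4/(2(\theta_1-\theta_0)^2)\big)$ (with room to spare); summing this geometric series over $n\ge 1$ gives at most $\big(1-\exp(-\TV(\nu_{\theta_0},\nu_{\theta_1})^4/(2(\theta_1-\theta_0)^2))\big)^{-1}$. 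Adding the three contributions yields the stated bound, and the $\bE_{\theta_1}$ bound is identical after swapping $(\theta_0,\beta)\leftrightarrow(\theta_1,\alpha)$, using that $\TV(\cdot,\cdot)$ is symmetric so the same geometric term reappears.

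The hard part will not be any single estimate but the bookkeeping around $N$: one must track the deterministic part of the threshold precisely enough to see that, for $n\ge N$, exactly one factor $C(n,\cdot)$ remains to absorb the noise fluctuation while the other cancels the correction baked into $\hat{T}_0^n$. The second point requiring care is pinning down the decay rate — one needs a Hoeffding/sub-Gaussian tail for the Bernoulli mean together with Pinsker to convert the KL-based threshold gap into the quantity $\TV(\nu_{\theta_0},\nu_{\theta_1})^4/(2(\theta_1-\theta_0)^2)$ in the statement; the noise contribution is then immediate from \eqref{ass:correction}.
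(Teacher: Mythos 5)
Your proposal is correct and follows essentially the same route as the paper's proof: bound $\tau$ by $\tau_0$, reduce $\{\tau_0>n\}$ to failure of the stopping condition at step $n$ alone, use the definition of $N$ (trivial bound for $n\le N$), split the failure event for $n> N$ into a noise event controlled by \eqref{ass:correction} and a data event controlled by Hoeffding plus Pinsker summed as a geometric series, and repeat the mirror argument for $\bE_{\theta_1}[\tau]$. Your explicit appeal to symmetry of the noise (to pass from $Y_n/n+Z/n$ to $Y_n/n-Z/n$) is the same implicit step the paper takes when it invokes the one-sided condition \eqref{ass:correction} for $Z/n-Y_n/n$ in its second bound, so it is not a deviation from their argument.
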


This result shows how the required number of samples depends on the divergence between the distributions, the desired error rates, and the noise added for privacy. Specific upper bounds on $N$ can be derived for different noise mechanisms. Note that this bound leverages Hoeffding's inequality \citep{Boucheronal13CI}, which applies to bounded random variables like Bernoulli distributions. For other exponential families, one could replace it by a Chernoff-type inequality, resulting in more complex expressions than the form presented here, which is specific to our Bernoulli setting. See Appendix~\ref{appendix:sample-complexity-proofs} for the proof and further analysis.

\subsection{Variants of \dpsprt{}}
The generic \dpsprt{} framework provides flexibility in selecting noise distributions based on privacy requirements. Table~\ref{tab:noise_implementations} summarizes two key implementations: Laplace noise for pure $\epsilon$-DP and Gaussian noise for Rényi DP.

\begin{table}[ht]
    \centering
    \caption{Comparison of \dpsprt{} instantiations}
    \label{tab:noise_implementations}
    \begin{tabular}{@{}p{2.2cm}p{2.8cm}p{2.8cm}@{}}
        \toprule
                        & \textbf{Laplace}                              & \textbf{Gaussian}                                     \\
        \midrule
        Query noise     & $Y_n \sim \text{Lap}(4/\epsilon)$             & $Y_n \sim \mathcal{N}(0, \sigma_Y^2)$                 \\
        Threshold noise & $Z \sim \text{Lap}(2/\epsilon)$               & $Z \sim \mathcal{N}(0, \sigma_Z^2)$                   \\
        \midrule
        Correction      & $\frac{6\log(n^s\zeta(s)/\delta)}{n\epsilon}$ & $\frac{\sqrt{2\sigma^2\log(n^s\zeta(s)/2\delta)}}{n}$ \\
        \midrule
        Privacy         & $\epsilon$-DP                                 & $(\alpha, \epsilon(\alpha))$-RDP                      \\
        \bottomrule
    \end{tabular}
\end{table}

Here, $\sigma^2 = \sigma_Y^2 + \sigma_Z^2$ and $\epsilon(\alpha) = \frac{\alpha - 1/2}{\alpha - 1} \cdot \frac{\alpha}{\sigma_Z^2} + \frac{\alpha}{2\sigma_Y^2} + \frac{\log(2\mathbb{E}[\mathbb{E}[\tau|Z]^2])}{2(\alpha - 1)}$.

Both variants satisfy conditions \eqref{ass:correction} and \eqref{ass:correction2}, ensuring correctness with the specified error probabilities. The Laplace noise variant provides the strongest privacy guarantee (pure $\epsilon$-DP), while the Gaussian implementation offers Rényi differential privacy that can be converted to approximate $(\epsilon,\delta)$-DP using standard techniques \citep{mironov2017renyi}. Detailed sample complexity analyses for these implementations are presented in Appendix~\ref{appendix:sample-complexity-proofs}.

\section{On \dpsprt{} with Laplace Noise}
\label{sec:laplace}

While using \dpsprt{} with Gaussian noise provides RDP guarantees, some applications require the stronger guarantee of pure $\epsilon$-DP. By examining a lower bound on the sample complexity, we argue that our instance of \dpsprt{} with Laplace noise is actually near-optimal in some regimes. We then propose further enhancements using sub-sampling.
\begin{corollary}[Sample Complexity of \dpsprt{} with Laplace Noise]
    \label{thm:sc_laplace}
    For \dpsprt{} with Laplace noise, using the parameters from Table \ref{tab:noise_implementations}, the function $N$ in Theorem \ref{thm:sc_general} satisfies:
    {
    \begin{align*}
        N(\theta_0,\theta_1,\beta,\gamma) & \leq \frac{24(\theta_1 - \theta_0)\log(\zeta(s)/(1-\gamma)\beta)}{\varepsilon \KL(\nu_{\theta_0}, \nu_{\theta_1})}  + \frac{2\log(1/(\gamma\beta))}{\KL(\nu_{\theta_0}, \nu_{\theta_1})} + o_{\beta \rightarrow 0}\Big(\log(1/\beta)\Big).
    \end{align*}}
\end{corollary}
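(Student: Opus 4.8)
The plan is to start from the definition
\[
    N(\theta_0,\theta_1,\beta,\gamma) = \inf\left\{n : \frac{\log(1/(\gamma\beta))/n}{\theta_1-\theta_0} + 2C\left(n,(1-\gamma)\beta\right) \leq \frac{1}{2}\frac{\KL(\nu_{\theta_0},\nu_{\theta_1})}{\theta_1-\theta_0}\right\}
\]
and plug in the Laplace correction function $C(n,\delta) = \frac{6\log(n^s\zeta(s)/\delta)}{n\varepsilon}$ from Table~\ref{tab:noise_implementations}. After multiplying through by $(\theta_1-\theta_0)$, the stopping condition becomes
\[
    \frac{1}{n}\left[\log\frac{1}{\gamma\beta} + \frac{12(\theta_1-\theta_0)}{\varepsilon}\log\frac{n^s\zeta(s)}{(1-\gamma)\beta}\right] \leq \frac{1}{2}\KL(\nu_{\theta_0},\nu_{\theta_1})\;,
\]
i.e. $N$ is the smallest $n$ such that $g(n) := a + b\log n \leq cn$ where $a = \log\frac{1}{\gamma\beta} + \frac{12(\theta_1-\theta_0)}{\varepsilon}\log\frac{\zeta(s)}{(1-\gamma)\beta}$, $b = \frac{12s(\theta_1-\theta_0)}{\varepsilon}$, and $c = \frac{1}{2}\KL(\nu_{\theta_0},\nu_{\theta_1})$. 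The first step is thus to rewrite $N$ in this clean "$an + b\log$'' form, carefully tracking that $2\times 6 = 12$ and $12/2 = 6$ so that the leading constants come out as $2$ and $24$ once divided by $\KL$.

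The second step is a standard lemma for bounding the solution of $a + b\log n \leq cn$: if $n \geq \frac{2a}{c} + \frac{2b}{c}\log\frac{2b}{c} + (\text{const})$ then the inequality holds, because $\log n$ grows sublinearly. Concretely I would use the elementary fact that $x \geq 2u\log(2u) \Rightarrow \log x \leq x/(2u)$ (or invoke a Lambert-$W$ type bound / the inequality $\log x \le x/e$ appropriately rescaled) to show $N \leq \frac{a}{c} + \frac{a}{c} = \frac{2a}{c}$ up to lower-order terms in $b\log(\cdot)/c$. Substituting back $a$ and $c$ gives
\[
    N \leq \frac{2\log(1/(\gamma\beta))}{\KL(\nu_{\theta_0},\nu_{\theta_1})} + \frac{24(\theta_1-\theta_0)\log(\zeta(s)/((1-\gamma)\beta))}{\varepsilon\,\KL(\nu_{\theta_0},\nu_{\theta_1})} + (\text{terms from }b)\;.
\]
The third step is to verify that everything not displayed is $o_{\beta\to 0}(\log(1/\beta))$: the $b\log(\cdot)$ correction term scales like $\log\log(1/\beta)$ (since $b$ does not depend on $\beta$ but the argument of the log, which includes $a$, does grow like $\log(1/\beta)$), which is indeed $o(\log(1/\beta))$; the additive constants are $O(1)$; and the ceiling/flooring from the infimum over integers costs at most $1$. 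I would collect all of these into the $o_{\beta\to 0}(\log(1/\beta))$ remainder.

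The main obstacle is the second step: getting a clean closed-form upper bound on the smallest integer $n$ solving $a + b\log n \le cn$ without introducing spurious constant factors that would inflate the advertised leading coefficients $2$ and $24$. The trick is to split the budget — allocate, say, $cn/2 \ge a$ and $cn/2 \ge b\log n$ separately is too lossy (it would double the first constant); instead one should note that for the dominant $\beta\to 0$ asymptotics the $b\log n$ term is genuinely lower order, so one argues $N \le (1+o(1))\frac{a}{c}$, and then the $\frac{2a}{c}$ in the statement is actually a slightly loose but valid bound once $\beta$ is small enough, with the slack absorbed. I would therefore be careful to state the bound as an asymptotic-in-$\beta$ inequality (which is exactly how the corollary is phrased, with the $o_{\beta\to 0}$ term), rather than a uniform one, which makes the constant-chasing much more forgiving. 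The remaining routine work — differentiating $g(n) - cn$, checking monotonicity, and confirming the infimum set is nonempty — I would relegate to the appendix.
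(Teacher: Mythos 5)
Your proposal follows essentially the same route as the paper: substitute the Laplace correction function into the definition of $N$, rearrange the condition into the form $n \geq B\log n + C$ with $C = \frac{2\log(1/(\gamma\beta))}{\KL(\nu_{\theta_0},\nu_{\theta_1})} + \frac{24(\theta_1-\theta_0)\log(\zeta(s)/(1-\gamma)\beta)}{\varepsilon \KL(\nu_{\theta_0},\nu_{\theta_1})}$ and $B = \frac{24s(\theta_1-\theta_0)}{\varepsilon\KL(\nu_{\theta_0},\nu_{\theta_1})}$ independent of $\beta$, and invert via a log-inversion lemma (the paper cites Lemma 19 of Tirinzoni et al., giving $N \leq B\log(B^2+2C)+C$, whose $B\log(\cdot)$ term is $O(\log\log(1/\beta)) = o(\log(1/\beta))$), exactly as you argue. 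Your intermediate line ``$N \leq \frac{a}{c}+\frac{a}{c} = \frac{2a}{c}$'' would double the advertised constants and is inconsistent with your own final display, but you correctly diagnose and repair this in your last paragraph, so the argument as a whole is sound.
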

The bound has two components: one reflecting the statistical difficulty of hypothesis testing (as in non-private settings) and another quantifying the cost of ensuring differential privacy, with $\gamma$ controlling their trade-off. The full derivation appears in Appendix~\ref{appendix:sample-complexity-proofs}.

%
%
%
%

\subsection{Comparison with a Lower Bound}

In Appendix~\ref{appendix:lower-bound-proofs}, we prove the following lower bound on the sample complexity of any $(\alpha,\beta)$-correct test that is $\epsilon$-DP. This lower bound is valid for any distributions $\nu_{\theta}$, and follows from techniques used to prove lower bounds under DP constraints in the multi-armed bandit literature (e.g.,  \cite{azize2023complexity,azize2024privacy}).

\begin{theorem}[Lower Bound for Private Sequential Tests]
    \label{thm:lower-bound}
    Let $(\tau, \hat{d})$ be an $\epsilon$-differentially private test with $\mathbb{P}_{\theta_0}(\hat{d} = 1) \leq \alpha$ and $\mathbb{P}_{\theta_1}(\hat{d} = 0) \leq \beta$. Then:
    \begin{align*}
        \mathbb{E}_{\theta_0}[\tau] & \geq \frac{\kl(\alpha,1-\beta)}{\min\left(\KL(\nu_{\theta_0},\nu_{\theta_1}), \epsilon \cdot \TV(\nu_{\theta_0},\nu_{\theta_1})\right)}  \text{ and } \\ \mathbb{E}_{\theta_1}[\tau] &\geq \frac{\kl(\beta,1-\alpha)}{\min\left(\KL(\nu_{\theta_1},\nu_{\theta_0}), \epsilon \cdot \TV(\nu_{\theta_0},\nu_{\theta_1})\right)}
    \end{align*}
    where $\TV$ is the total variation distance between distributions and $\kl(x,y)\defn \KL(\cB(x),\cB(y)) = x\log(x/y) + (1-x)\log((1-x)/(1-y))$ is the binary relative entropy.
\end{theorem}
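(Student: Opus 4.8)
The plan is to lower bound $\mathbb{E}_{\theta_0}[\tau]$ (and symmetrically $\mathbb{E}_{\theta_1}[\tau]$) by combining the classical change-of-measure lower bound for sequential tests with a privacy-aware one, and keeping whichever is stronger. Write $Q_i$ for the law of the output $\mathcal{M}(\underline{X}) = (\tau,\hat d)$ under $\mathbb{P}_{\theta_i}$. Since $\{\hat d = 1\}$ is an event on the output, the data-processing inequality for KL gives $\kl\bigl(\mathbb{P}_{\theta_0}(\hat d = 1),\mathbb{P}_{\theta_1}(\hat d = 1)\bigr) \le \KL(Q_0 \| Q_1)$; using that $x\mapsto\kl(x,y)$ is nonincreasing on $[0,y]$ and $y\mapsto\kl(x,y)$ nondecreasing on $[x,1]$, together with $\mathbb{P}_{\theta_0}(\hat d=1)\le\alpha$, $\mathbb{P}_{\theta_1}(\hat d=1)\ge 1-\beta$ (and the meaningful regime $\alpha+\beta\le 1$), this yields $\kl(\alpha,1-\beta)\le\KL(Q_0\|Q_1)$. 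It therefore suffices to upper bound $\KL(Q_0\|Q_1)$ in two ways; the same reasoning applied to $\{\hat d = 0\}$ under $\mathbb{P}_{\theta_1}$ produces the second displayed inequality (with $\TV$ symmetric).

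The first, non-private bound is the usual Wald-type divergence decomposition: $\mathcal{M}$ is a non-anticipating, possibly randomized, function of $X_1,\dots,X_\tau$, so by data processing (independent internal randomness contributes zero divergence) $\KL(Q_0\|Q_1)$ is at most the divergence between the laws of $(X_1,\dots,X_\tau)$ under the two hypotheses, which by Wald's identity equals $\mathbb{E}_{\theta_0}[\tau]\,\KL(\nu_{\theta_0},\nu_{\theta_1})$.

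The second, privacy-based bound is the heart of the argument. Fix a maximal coupling of $\nu_{\theta_0}$ and $\nu_{\theta_1}$ and let $\gamma$ be the induced product coupling of $\nu_{\theta_0}^{\otimes\infty}$ and $\nu_{\theta_1}^{\otimes\infty}$, so under $\gamma$ the pairs $(X_i,X_i')$ are i.i.d.\ with $\gamma(X_i\ne X_i')=\TV(\nu_{\theta_0},\nu_{\theta_1})$. Two applications of the joint convexity of KL reduce $\KL(Q_0\|Q_1)$ to $\mathbb{E}_\gamma\bigl[\KL\bigl(\mathrm{law}(\mathcal{M}(\underline{X}))\,\big\|\,\mathrm{law}(\mathcal{M}(\underline{X}'))\bigr)\bigr]$. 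For fixed realizations $\underline{x},\underline{x}'$ differing on a (possibly infinite) set $S$, the crucial observation is that for a sequential test the event $\{\mathcal{M}=(t,d)\}$ depends only on the first $t$ coordinates of the input; hence one may pass from $\underline{x}$ to a sequence agreeing with $\underline{x}'$ on coordinates $1,\dots,t$ by flipping the at most $|S\cap\{1,\dots,t\}|$ differing coordinates one at a time, invoking $\epsilon$-DP at each step, so that $\log\frac{\Pr[\mathcal{M}(\underline{x})=(t,d)]}{\Pr[\mathcal{M}(\underline{x}')=(t,d)]}\le\epsilon\,|S\cap\{1,\dots,t\}|$. Summing against $\mathrm{law}(\mathcal{M}(\underline{x}))$ gives $\KL\bigl(\mathrm{law}(\mathcal{M}(\underline{x}))\,\|\,\mathrm{law}(\mathcal{M}(\underline{x}'))\bigr)\le\epsilon\,\mathbb{E}_{(\tau,\hat d)\sim\mathcal{M}(\underline{x})}\bigl[|S\cap\{1,\dots,\tau\}|\bigr]$. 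Taking $\mathbb{E}_\gamma$, the right-hand side becomes $\epsilon\,\mathbb{E}_\gamma\mathbb{E}_{\mathcal{M}}\bigl[\sum_{i=1}^{\tau}\mathbf{1}\{X_i\ne X_i'\}\bigr]$, a sum of i.i.d.\ increments of mean $\TV(\nu_{\theta_0},\nu_{\theta_1})$ stopped at a stopping time $\tau$ of the filtration enlarged by the coupling variables and the algorithm's randomness, so Wald's identity turns it into $\epsilon\,\TV(\nu_{\theta_0},\nu_{\theta_1})\,\mathbb{E}_{\theta_0}[\tau]$.

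Chaining the two bounds gives $\kl(\alpha,1-\beta)\le\KL(Q_0\|Q_1)\le\mathbb{E}_{\theta_0}[\tau]\cdot\min\bigl(\KL(\nu_{\theta_0},\nu_{\theta_1}),\,\epsilon\,\TV(\nu_{\theta_0},\nu_{\theta_1})\bigr)$, and rearranging yields the first inequality; repeating with the roles of $\theta_0,\theta_1$ swapped gives the second. The main obstacle is the privacy term: making the ``flip one coordinate at a time'' argument rigorous even though $S$ is almost surely infinite — this is precisely where the non-anticipating structure of a sequential test is indispensable, since $\{\mathcal{M}=(t,d)\}$ reads only $x_1,\dots,x_t$ — and carefully justifying the interchange of expectations and the use of Wald's identity on the coupled stream (finiteness of $\mathbb{E}[\tau]$, and measurability of $\tau$ with respect to the enlarged filtration). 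The two convexity reductions and the classical Wald term are routine.
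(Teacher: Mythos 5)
Your proposal is correct and follows essentially the same route as the paper's proof: a data-processing step yielding $\kl(\alpha,1-\beta)$ as a lower bound on the divergence between the two output (or transcript) laws, Wald's identity for the non-private upper bound $\mathbb{E}_{\theta_0}[\tau]\,\KL(\nu_{\theta_0},\nu_{\theta_1})$, and a maximal coupling combined with group privacy on the first $\tau$ coordinates plus Wald's identity on the coupled stream for the bound $\epsilon\,\TV(\nu_{\theta_0},\nu_{\theta_1})\,\mathbb{E}_{\theta_0}[\tau]$. The only cosmetic difference is that you phrase the coupling step via joint convexity of KL over the coupling measure, whereas the paper applies the data-processing inequality on the coupled space and bounds conditional probability ratios directly; these are the same computation.
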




We now compare this lower bound to our upper bound for Laplace \dpsprt{}, in the asymptotic regime in which $\alpha$ and $\beta$ go to zero, for which we have $\kl(\alpha,1-\beta) \simeq \log(1/\beta)$ and $\kl(\beta,1-\alpha) \simeq \log(1/\alpha)$. Combining Theorem~\ref{thm:sc_general} with Corollary \ref{thm:sc_laplace}, we have
\begin{align*}
    \limsup_{\beta \rightarrow 0} \frac{\bE_{\theta_0}\left[\tau\right]}{\log(1/\beta)} \leq \frac{48}{\KL(\nu_{\theta_0},\nu_{\theta_1})}\max\left(1, \frac{\theta_1 - \theta_0}{ \varepsilon}\right)\,,
\end{align*}
where $\nu_{\theta_i}$ is the Bernoulli distribution with natural parameter $\theta_i$ and mean $p_i = \tfrac{e^{\theta_i}}{1+e^{\theta_i}}$. Using the simple expression of the TV for Bernoulli distributions in Theorem~\ref{thm:lower-bound} and $p_1 > p_0$,
\begin{align*}
    \liminf_{\beta \rightarrow 0} \frac{\bE_{\theta_0}\left[\tau\right]}{\log(1/\beta)} \geq \max\left(\frac{1}{\kl(p_0, p_1)}, \frac{1}{(p_1 - p_0) \varepsilon}\right).
\end{align*}
By definition, we have $\kl(p_0, p_1) = \KL(\nu_{\theta_0}, \nu_{\theta_1})$. As for the second term, using the expression of KL divergence for exponential families (see Eq. \eqref{def:KL_exponentials} in Appendix~\ref{appendix:sprt-exponential-families}) we have ${\KL(\nu_{\theta_0}, \nu_{\theta_1}) }/{(\theta_1 - \theta_0)} = (b(\theta_1) - b(\theta_0))/(\theta_1 - \theta_0) - p_0$ where $b$ is the log-partition function, which goes to $b'(\theta_1) - p_0 = p_1 - p_0$ when $p_1$ goes to $p_0$. Hence, when $p_1$ and $p_0$ are close, the upper and lower bounds are matching, up to a multiplicative constant. By symmetry, we can get similar nearly matching upper and lower bounds on $\bE_{\theta_1}[\tau]/\log(1/\alpha)$. {This makes $\dpsprt{}$ with Laplace noise near-optimal in the regime of small errors and close hypotheses.}

\subsection{Privacy Amplification by Subsampling}

\dpsprt{} with Laplace noise can be further enhanced using subsampling \citep{steinke2022composition}. That is, in each round the empirical mean is computed based on a random subset of the available observations instead of all observations. This approach allows adding less noise for the same privacy guarantee, and in turn, a smaller correction factor. This is particularly effective in high privacy regimes as demonstrated in the experiments. Details of the implementation of this approach are provided in Appendix~\ref{appendix:subsampling}.

\section{Experimental Analysis}\label{sec:experiments}
\subsection{Experimental Setup}
We conduct experiments on simulated Bernoulli data to evaluate the practical performance of our \dpsprt{} methods. We implement the classical SPRT, PrivSPRT~\citep{Cummings22DPSPRT}, and three variants of our approach: \dpsprt{} with Laplace noise, with Gaussian noise, and with Laplace noise and subsampling (Laplace-Sub). For all methods, we set target error rates $\alpha=\beta=0.05$ and perform 1000 independent trials to estimate the error probabilities and the sample complexity.  We present results for Bernoulli distributions with parameter $p_0 = 0.3$ and $p_1 = 0.7$.

\begin{figure*}[t!]
    \centering
    \begin{tabular}{cc}
        \includegraphics[width=0.49\textwidth]{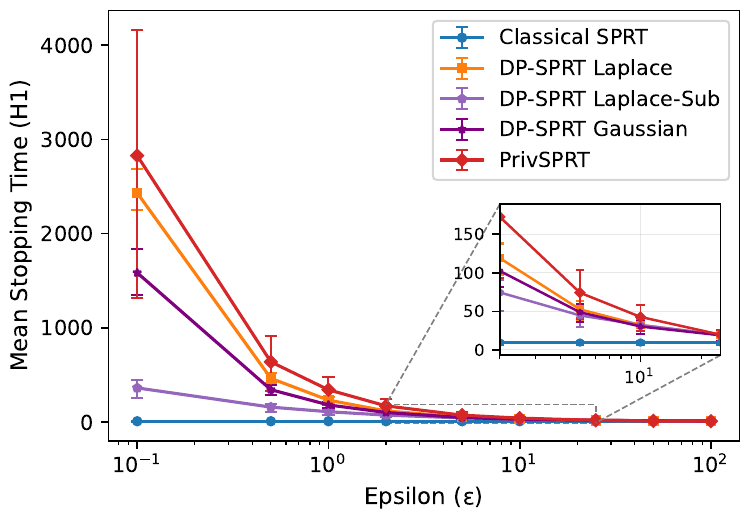} &
        \includegraphics[width=0.49\textwidth]{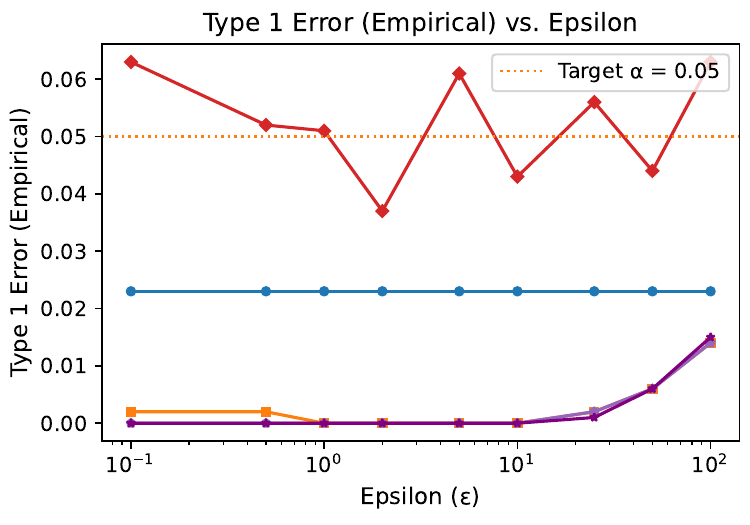}                                                   \\
        (a) Sample size vs. privacy parameter                                                                                      & (b) Type I error vs. privacy parameter
    \end{tabular}
    \caption{Experimental results for different \dpsprt{} variants: (a) Average sample size as privacy parameter varies for $\alpha=\beta=0.05$; (b) Empirical Type I error probability vs. privacy parameter. Error bars represent 95\% percentile intervals over 1000 trials.}\label{fig:main_results}
\end{figure*}

We use parameters of Table~\ref{tab:noise_implementations} for \dpsprt{} with Laplace noise to ensure $\varepsilon$-DP. For  \dpsprt{} with Gaussian noise, following \cite{Cummings22DPSPRT}, we set the variance parameters as $\sigma_Y^2 = 32\ln(1.25/\delta)/\epsilon^2$ and $\sigma_Z^2 = 8\ln(1.25/\delta)/\epsilon^2$, ensuring $(\epsilon/2,\delta)$-DP for queries of sensitivity $2$ and $1$ respectively. 
To compare with an instance of PrivSPRT having a similar RDP guarantee, we set $\sigma_1 = 2\sqrt{2}A\sigma_Z$ and $\sigma_2 = 2\sqrt{2}A\sigma_Y$ where $2A$ is the sensitivity due to their truncation parameter. $A$ is set to 1. The rescaling accounts for the fact that their RDP bound has an additional factor $2$ in the first two terms of Eq.~\eqref{eq:RDP_upper_bound} compared to ours.


The error allocation parameter of \dpsprt{} is set to $\gamma(\epsilon)= \min(1/2, 1-1/\epsilon)$, allowing to recover the classical SPRT as $\epsilon$ increases. For the subsampling version, we adaptively set the subsampling rate to $\min(1,\sqrt{\epsilon/10})$, depending on the privacy parameter. 
More discussion on alternative choices of subsampling rate is in Appendix~\ref{appendix:subsampling}.
We calibrate the thresholds of PrivSPRT through grid search with Monte Carlo simulation (100 runs), while our methods use the theoretical calibration derived in Section~\ref{sec:dpsprt}.


\subsection{Results}

(1) \textbf{Smaller Stopping Time across $\varepsilon$.} Figure~\ref{fig:main_results}(a) shows that as privacy requirements become stringent (smaller $\epsilon$), all private methods require more samples. Our \dpsprt{} variants consistently outperform or match PrivSPRT at equivalent privacy levels, with subsampling providing substantial benefits in high-privacy regimes. This contradicts prior beliefs that Laplace noise is unsuitable for sequential testing \citep{Cummings22DPSPRT}.

(2) \textbf{Smaller Probability of Error across $\varepsilon$.} Figure~\ref{fig:main_results}(b) demonstrates that our \dpsprt{} variants maintain error probabilities ({significantly}) below target across all privacy levels. In contrast, PrivSPRT occasionally exceeds the target rate, likely due to the noise introduced during its Monte Carlo calibration process. The fact that the empirical type I error of \dpsprt{} is much below the target $\alpha$ suggests that our proposed correction function $C(n,\delta)$ is quite conservative. 
In Appendix~\ref{appendix:complementary-experiments}, we provide a more detailed experimental analysis with different $p_0$ and $p_1$, the variance of the stopping time, its dependence on the privacy parameters and the target error probabilities, and a tuning of the correction function.

In brief, \textit{our \dpsprt{} methods achieve better efficiency and accuracy than PrivSPRT while offering theoretically-calibrated thresholds guaranteeing error control without empirical tuning}.



\section{Discussion and Future Work}\label{sec:conclusion}

We introduced \dpsprt{}, an adaptation of the classical SPRT to incorporate various types of privacy protections in sequential tests of simple hypotheses. This general framework supports different noise mechanisms, such as Laplace noise for $\epsilon$-differential privacy and Gaussian noise for Rényi differential privacy. Its design is based on a specialized threshold mechanism, \outint, that concurrently handles both decision boundaries instead of treating them separately. 
Notably, \dpsprt{} with Laplace noise is the first $\varepsilon$-DP sequential test that can be calibrated to attain prescribed type I and type II errors without any hyper-parameter tuning. Moreover, for Bernoullis, its sample complexity is close to the lower bound that we have derived for the minimal sample complexity of any $\varepsilon$-DP test. 
In future work, we will seek to strengthen these optimality guarantees beyond Bernoulli, and extend our findings to composite hypothesis testing, for which  \outint{} could also be useful.

%
%
%
%
%

\section*{Acknowledgements}
This work has been partially supported by the French National Research Agency (ANR) in the framework of the PEPR IA FOUNDRY project (ANR-23-PEIA-0003). We also acknowledge the Inria-ISI, Kolkata Associate Team “SeRAI”, and the ANR JCJC for the REPUBLIC project (ANR-22-CE23-0003-01) for partially supporting the project.

\bibliography{biblio}

\begin{thebibliography}{}

\bibitem[Abadi et~al., 2016]{abadi2016deep}
Abadi, M., Chu, A., Goodfellow, I., McMahan, H.~B., Mironov, I., Talwar, K., and Zhang, L. (2016).
\newblock Deep learning with differential privacy.
\newblock In {\em Proceedings of the 2016 ACM SIGSAC conference on computer and communications security}, pages 308--318.

\bibitem[Armitage, 1950]{armitage1950sequential}
Armitage, P. (1950).
\newblock Sequential analysis with more than two alternative hypotheses, and its relation to discriminant function analysis.
\newblock {\em Journal of the Royal Statistical Society. Series B (Methodological)}, 12(1):137--144.

\bibitem[Armitage, 1954]{armitage1954sequential}
Armitage, P. (1954).
\newblock Sequential tests in prophylactic and therapeutic trials.
\newblock {\em The Quarterly journal of medicine}, 23(91):255--274.

\bibitem[Azize, 2024]{azize2024privacy}
Azize, A. (2024).
\newblock {\em Privacy-Utility Trade-offs in Sequential Decision-Making under Uncertainty}.
\newblock PhD thesis, Universit{\'e} de Lille.

\bibitem[Azize et~al., 2023]{azize2023complexity}
Azize, A., Jourdan, M., Al~Marjani, A., and Basu, D. (2023).
\newblock On the complexity of differentially private best-arm identification with fixed confidence.
\newblock {\em Advances in Neural Information Processing Systems}, 36:71150--71194.

\bibitem[Azize et~al., 2024]{azize2024differentially}
Azize, A., Jourdan, M., Marjani, A.~A., and Basu, D. (2024).
\newblock Differentially private best-arm identification.
\newblock {\em arXiv preprint arXiv:2406.06408}.

\bibitem[Boucheron et~al., 2013]{Boucheronal13CI}
Boucheron, S., S., Lugosi, G., and Massart, P. (2013).
\newblock {\em {Concentration inequalities. A non asymptotic theory of independence.}}
\newblock Oxford University Press.

\bibitem[Brown, 1986]{brown1986fundamentals}
Brown, L.~D. (1986).
\newblock Fundamentals of statistical exponential families with applications in staistical decision theory.
\newblock {\em Instiute of Mathematical Statistics, Lecture Notes-Monograph Series}.

\bibitem[Canonne et~al., 2019]{canonne2019structure}
Canonne, C.~L., Kamath, G., McMillan, A., Smith, A., and Ullman, J. (2019).
\newblock The structure of optimal private tests for simple hypotheses.
\newblock In {\em Proceedings of the 51st Annual ACM SIGACT Symposium on Theory of Computing}, pages 310--321.

\bibitem[Capp{\'e} et~al., 2013]{KLUCBJournal}
Capp{\'e}, O., Garivier, A., Maillard, O.-A., Munos, R., and Stoltz, G. (2013).
\newblock {{K}ullback-{L}eibler upper confidence bounds for optimal sequential allocation}.
\newblock {\em Annals of Statistics}, 41(3):1516--1541.

\bibitem[Couch et~al., 2019]{couch2019differentially}
Couch, S., Kazan, Z., Shi, K., Bray, A., and Groce, A. (2019).
\newblock Differentially private nonparametric hypothesis testing.
\newblock In {\em Proceedings of the 2019 ACM SIGSAC Conference on Computer and Communications Security}, pages 737--751.

\bibitem[Dwork et~al., 2006]{dwork2006calibrating}
Dwork, C., McSherry, F., Nissim, K., and Smith, A. (2006).
\newblock Calibrating noise to sensitivity in private data analysis.
\newblock In {\em Theory of Cryptography: Third Theory of Cryptography Conference, TCC 2006, New York, NY, USA, March 4-7, 2006. Proceedings 3}, pages 265--284. Springer.

\bibitem[Dwork et~al., 2010]{dwork2010differential}
Dwork, C., Naor, M., Pitassi, T., and Rothblum, G.~N. (2010).
\newblock Differential privacy under continual observation.
\newblock In {\em Proceedings of the forty-second ACM symposium on Theory of computing}, pages 715--724.

\bibitem[Dwork et~al., 2014]{dwork2014algorithmic}
Dwork, C., Roth, A., et~al. (2014).
\newblock The algorithmic foundations of differential privacy.
\newblock {\em Foundations and Trends{\textregistered} in Theoretical Computer Science}, 9(3--4):211--407.

\bibitem[Gaboardi et~al., 2016]{gaboardi2016differentially}
Gaboardi, M., Lim, H., Rogers, R., and Vadhan, S. (2016).
\newblock Differentially private chi-squared hypothesis testing: Goodness of fit and independence testing.
\newblock In {\em International conference on machine learning}, pages 2111--2120. PMLR.

\bibitem[Garivier and Kaufmann, 2016]{GK16}
Garivier, A. and Kaufmann, E. (2016).
\newblock Optimal best arm identification with fixed confidence.
\newblock In {\em Proceedings of the 29th Conference On Learning Theory}.

\bibitem[Garivier et~al., 2019]{GMS16}
Garivier, A., M{\'{e}}nard, P., and Stoltz, G. (2019).
\newblock Explore first, exploit next: The true shape of regret in bandit problems.
\newblock {\em Mathemathics of Opereration Research}, 44(2):377--399.

\bibitem[Kalogerias et~al., 2021]{kalogerias2020best}
Kalogerias, D.~S., Nikolakakis, K.~E., Sarwate, A.~D., and Sheffet, O. (2021).
\newblock Quantile multi-armed bandits: Optimal best-arm identification and a differentially private scheme.
\newblock {\em IEEE Journal on Selected Areas in Information Theory}, 2(2):534--548.

\bibitem[Kamath et~al., 2022]{kamath2022private}
Kamath, G., Mouzakis, A., Singhal, V., Steinke, T., and Ullman, J. (2022).
\newblock A private and computationally-efficient estimator for unbounded gaussians.
\newblock In {\em Conference on Learning Theory}, pages 544--572. PMLR.

\bibitem[Kaplan et~al., 2021]{kaplan2021sparse}
Kaplan, H., Mansour, Y., and Stemmer, U. (2021).
\newblock The sparse vector technique, revisited.
\newblock In {\em Conference on Learning Theory}, pages 2747--2776. PMLR.

\bibitem[Kazan et~al., 2023]{kazan2023test}
Kazan, Z., Shi, K., Groce, A., and Bray, A.~P. (2023).
\newblock The test of tests: A framework for differentially private hypothesis testing.
\newblock In {\em International Conference on Machine Learning}, pages 16131--16151. PMLR.

\bibitem[Mironov, 2017]{mironov2017renyi}
Mironov, I. (2017).
\newblock R{\'e}nyi differential privacy.
\newblock In {\em 2017 IEEE 30th computer security foundations symposium (CSF)}, pages 263--275. IEEE.

\bibitem[Papernot and Steinke, 2021]{papernot2021hyperparameter}
Papernot, N. and Steinke, T. (2021).
\newblock Hyperparameter tuning with renyi differential privacy.
\newblock {\em arXiv preprint arXiv:2110.03620}.

\bibitem[Sajed and Sheffet, 2019]{dpseOrSheffet}
Sajed, T. and Sheffet, O. (2019).
\newblock An optimal private stochastic-mab algorithm based on optimal private stopping rule.
\newblock In {\em International Conference on Machine Learning}, pages 5579--5588. PMLR.

\bibitem[Siegmund, 2013]{siegmund2013sequential}
Siegmund, D. (2013).
\newblock {\em Sequential analysis: tests and confidence intervals}.
\newblock Springer Science \& Business Media.

\bibitem[Steinke, 2022]{steinke2022composition}
Steinke, T. (2022).
\newblock Composition of differential privacy \& privacy amplification by subsampling.
\newblock {\em arXiv preprint arXiv:2210.00597}.

\bibitem[Tirinzoni et~al., 2022]{Tirinzoni22EPRL}
Tirinzoni, A., Marjani, A.~A., and Kaufmann, E. (2022).
\newblock Near instance-optimal {PAC} reinforcement learning for deterministic mdps.
\newblock In {\em Advances in Neural Information Processing Systems (NeurIPS)}.

\bibitem[Wald, 1945]{Wald45SPRT}
Wald, A. (1945).
\newblock {Sequential Tests of Statistical Hypotheses}.
\newblock {\em Annals of Mathematical Statistics}, 16(2):117--186.

\bibitem[Wald, 1948]{Wald48OptSPRT}
Wald, A. (1948).
\newblock {Optimum character of the Sequential Probability Ratio Test}.
\newblock {\em Annals of Mathematical Statistics}, 19(3):326--339.

\bibitem[Wald, 2004]{wald2004sequential}
Wald, A. (2004).
\newblock {\em Sequential analysis}.
\newblock Courier Corporation.

\bibitem[Wang et~al., 2022]{wang2022differentially}
Wang, Y., Sibai, H., Yen, M., Mitra, S., and Dullerud, G.~E. (2022).
\newblock Differentially private algorithms for statistical verification of cyber-physical systems.
\newblock {\em IEEE Open Journal of Control Systems}, 1:294--305.

\bibitem[Zhang et~al., 2022]{Cummings22DPSPRT}
Zhang, W., Mei, Y., and Cummings, R. (2022).
\newblock Private sequential hypothesis testing for statisticians: Privacy, error rates, and sample size.
\newblock In {\em {AISTATS}}.

\bibitem[Zhu and Wang, 2020]{zhu2020improving}
Zhu, Y. and Wang, Y.-X. (2020).
\newblock Improving sparse vector technique with renyi differential privacy.
\newblock {\em Advances in neural information processing systems}, 33:20249--20258.

\end{thebibliography}
\newpage
\appendix
\part{Appendix}
\parttoc
\newpage
\section{A Primer on SPRT for Exponential Families}
\label{appendix:sprt-exponential-families}

A one-dimensional canonical exponential family is a class of distributions
\[\cF = \{ (\nu_{\theta})_{\theta \in \Theta} : \nu_{\theta} \text{ has a density } \ f_{\theta}(x) = \exp(\theta x - b(\theta)) \text{ w.r.t. } \nu \},\]
where $\nu$ is the reference measure, $\Theta$ is the natural parameter space and $b : \Theta \rightarrow \R$ is a convex, twice differentiable function called the log-partition function.

For exponential families, it can be shown that $\mu(\theta) = \bE_{X \sim \nu_{\theta}}[X]$ satisfies $\mu(\theta) = b'(\theta)$, and $\sigma^2(\theta) = \mathrm{Var}_{X\sim \nu_{\theta}}[X]$ satisfies $\sigma^2(\theta) = b''(\theta)$. As a consequence, the mean is an increasing function of the natural parameter, and there is a one-to-one mapping between mean and natural parameter.

We recall the expression of the KL divergence in an exponential family (see, e.g., \cite{KLUCBJournal}). For all $\theta,\theta' \in \Theta$, we have
\begin{equation}\KL(\nu_{\theta},\nu_{\theta'}) = \bE_{X \sim \nu_{\theta}} \left[\log \frac{f_{\theta}(X)}{f_{\theta'}(X)}\right] = \mu(\theta) ( \theta - \theta') - b(\theta) + b(\theta')\;.\label{def:KL_exponentials}\end{equation}

\paragraph{Specific form of the SPRT.} The general SPRT
\[\tau = \inf \left\{ n \in \N : \prod_{i=1}^{n}\frac{f_{\theta_1}(X_i)}{f_{\theta_0}(X_i)} \notin \left(\beta,1/\alpha\right)\right\}\]
particularizes to
\[\tau = \inf \left\{ n \in \N : \exp\left((\theta_1-\theta_0)\sum_{i=1}^{n}X_i - n (b(\theta_1) - b(\theta_0))\right) \notin \left(\beta,1/\alpha\right)\right\},\]
which can be re-written $\tau = \min (\tau_0,\tau_1)$ with
\begin{eqnarray*}
    \tau_0 &=& \inf \left\{ n \in \N : (\theta_1-\theta_0)\sum_{i=1}^{n}X_i - n (b(\theta_1) - b(\theta_0)) < \log(\beta)\right\},\\
    \tau_1 &=& \inf \left\{ n \in \N : (\theta_1-\theta_0)\sum_{i=1}^{n}X_i - n (b(\theta_1) - b(\theta_0)) > \log(1/\alpha)\right\}.
\end{eqnarray*}
Dividing by $\theta_1 -\theta_0 > 0$ and re-arranging the terms, we get

\begin{eqnarray*}
    \tau_0 &=& \inf\left\{ n \in \N :\frac{\sum_{i=1}^{n}X_i}{n} < \frac{b(\theta_1) - b(\theta_0)}{\theta_1 - \theta_0} - \frac{\frac{1}{n} \log(1/\beta)}{\theta_1 - \theta_0}\right\},\\
    \tau_1 &=& \inf\left\{ n \in \N :\frac{\sum_{i=1}^{n}X_i}{n} > \frac{b(\theta_1) - b(\theta_0)}{\theta_1 - \theta_0} + \frac{\frac{1}{n} \log(1/\alpha)}{\theta_1 - \theta_0}\right\}.
\end{eqnarray*}
The expression given in Section~\ref{sec:prelim} can be obtained by noting that $b(\theta_1) - b(\theta_0) = \mu_0(\theta_1 - \theta_0) + \KL(\nu_{\theta_0}, \nu_{\theta_1})$ and $b(\theta_1) - b(\theta_0) = \mu_1(\theta_1 - \theta_0) - \KL(\nu_{\theta_1},\nu_{\theta_0})$.

However, the above expression is also interesting, as it shows that the two thresholds
\[T_{0}^n = \frac{b(\theta_1) - b(\theta_0)}{\theta_1 - \theta_0} - \frac{\frac{1}{n} \log(1/\beta)}{\theta_1 - \theta_0} \text{ and } T_1^n =\frac{b(\theta_1) - b(\theta_0)}{\theta_1 - \theta_0} + \frac{\frac{1}{n} \log(1/\alpha)}{\theta_1 - \theta_0}\]
to which the empirical mean is compared indeed satisfies $T_0^n \leq T_{1}^n$.

DP-SPRT can thus be alternatively written
\begin{eqnarray*}
    \tau_0 &=& \inf \left\{ n \in \N :\frac{\sum_{i=1}^{n}X_i}{n} + \frac{Y_n}{n}< \frac{b(\theta_1) - b(\theta_0)}{\theta_1 - \theta_0} - \frac{\frac{1}{n} \log(1/\gamma\beta)}{\theta_1 - \theta_0} - C(n,(1-\gamma)\beta) - \frac{Z}{n}\right\},\\
    \tau_1 &=& \inf \left\{ n \in \N :\frac{\sum_{i=1}^{n}X_i}{n} + \frac{Y_n}{n} > \frac{b(\theta_1) - b(\theta_0)}{\theta_1 - \theta_0} + \frac{\frac{1}{n} \log(1/\gamma\alpha)}{\theta_1 - \theta_0} + C(n,(1-\gamma)\alpha)+\frac{Z}{n}\right\}.
\end{eqnarray*}

\paragraph{Bernoulli distributions.} Bernoulli distribution form a one-dimensional exponential family if we set the natural parameter of the Bernoulli distribution with probability of success $p_i$ to be $\theta_i = \log \frac{p_i}{1 - p_i}$. The log-partition function is $b(\theta) = \log(1+e^{\theta})$.

\paragraph{On \dpsprt{} versus PrivSPRT.} PrivSPRT \citep{Cummings22DPSPRT} does not rely on the specific form of SPRT for exponential family based on its sufficient statistic (the empirical mean). It directly privatizes the sum of (truncated) log-likelihood ratios. For Bernoulli distribution with means $p_0$ and $p_1$, the stopping rule of PrivSPRT is $\widetilde{\tau} = \min(\widetilde{\tau}_0,\widetilde{\tau}_1)$ with

\begin{eqnarray*}
    \widetilde{\tau}_0 &=& \inf \left\{ n \in \N : \sum_{i=1}^{n}\left[X_i \log \frac{p_1}{p_0} + (1-X_i) \log \frac{1 - p_1}{1-p_0}\right]_{-A}^{A} + Y_n^{1} > b + Z^{1} \right\},\\
    \widetilde{\tau}_1 &=& \inf \left\{ n \in \N : \sum_{i=1}^{n}\left[X_i \log \frac{p_1}{p_0} + (1-X_i) \log \frac{1 - p_1}{1-p_0}\right]_{-A}^{A} + Y_n^{2} < -a + Z^{2} \right\},
\end{eqnarray*}
where $Y_n^{1},Y_n^{2}$ are iid from $\cN(0,\sigma_2^2)$, and $Z^{1},Z^{2}$ are both drawn from $\cN(0,\sigma_1^2)$. $[x]_{-A}^{A}$ stands for the truncation to $[-A,A]$. On some instances on which the truncation is immaterial, that is when
\[ \max\left(\log \frac{p_1}{p_0}  ,  \log \frac{1-p_0}{1-p_1}\right) \leq A\]
(which is the case for $p_0=0.3, p_1=0.7$ and $A=1$), PrivSPRT can also be re-expressed with the empirical mean. We obtain
\begin{eqnarray*}
    \widetilde{\tau}_0 &=& \inf \left\{ n \in \N : \frac{\sum_{i=1}^{n}X_i}{n} + \frac{Y_n^{1}}{n(\theta_1 - \theta_0)} > \frac{b(\theta_1) - b(\theta_0)}{\theta_1 - \theta_0} + \frac{b}{n(\theta_1 - \theta_0)} + \frac{Z^1}{n(\theta_1 - \theta_0)}\right\},\\
    \widetilde{\tau}_1 &=& \inf \left\{ n \in \N : \frac{\sum_{i=1}^{n}X_i}{n} + \frac{Y_n^{2}}{n(\theta_1 - \theta_0)} < \frac{b(\theta_1) - b(\theta_0)}{\theta_1 - \theta_0} - \frac{a}{n(\theta_1 - \theta_0)} + \frac{Z^2}{n(\theta_1 - \theta_0)}\right\}.
\end{eqnarray*}
The differences with \dpsprt{} are the following: (1) PrivSPRT uses $a$ and $b$ as parameters to be tuned (empirically) to attain desired error levels while \dpsprt{} uses an exact, theoretically valid calibration. (2) PrivSPRT uses two streams of noise, whereas the noise are common for $\tau_0$ and $\tau_1$ in \dpsprt{}. (3) The noise is not scaled in the same way in both approaches.

\section{Privacy of \outint: Proof of Theorem~\ref{thm:general-outside-interval}}
\label{appendix:privacy-proofs}

\textbf{Two useful lemmas.} We will use the following two lemmas in the proof of Theorem \ref{thm:general-outside-interval}. The first lemma is a direct consequence of the definition of R\'enyi differential privacy.
\begin{lemma}[Indistinguishability Property of RDP (from \cite{mironov2017renyi})]
    \label{lem:indistinguishability}
    Let $\mathcal{M}$ be a mechanism that satisfies $(\alpha, \tilde{\epsilon}(\alpha))$-RDP. Then for any measurable set $S \subseteq \text{Range}(\mathcal{M})$ and any neighboring datasets $D, D'$:
    \begin{equation}
        e^{-\tilde{\epsilon}(\alpha) \frac{\alpha}{\alpha-1}} \cdot \Pr[\mathcal{M}(D') \in S]^{\frac{\alpha}{\alpha-1}} \leq \Pr[\mathcal{M}(D) \in S] \leq e^{\tilde{\epsilon}(\alpha) \frac{\alpha}{\alpha-1}} \cdot \Pr[\mathcal{M}(D') \in S]^{\frac{\alpha-1}{\alpha}}.
    \end{equation}
\end{lemma}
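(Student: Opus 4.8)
The plan is to reduce the two-sided statement to a single ``probability preservation'' inequality obtained from Hölder's inequality, and then to exploit the symmetry of the neighboring relation to recover both the upper and the lower bound. Throughout, write $P \defn \mathcal{M}(D)$ and $Q \defn \mathcal{M}(D')$ for the output distributions, with densities $p,q$ with respect to a common dominating measure $\mu$, and abbreviate $P(S)\defn\Pr[\mathcal{M}(D)\in S]$ and $Q(S)\defn\Pr[\mathcal{M}(D')\in S]$. Since $(\alpha,\tilde{\epsilon}(\alpha))$-RDP forces $D_\alpha(P\|Q)\le\tilde{\epsilon}(\alpha)<\infty$ with $\alpha>1$, we have $P\ll Q$, so all the ratios below are well defined and the degenerate case $Q(S)=0$ (which then forces $P(S)=0$) needs no separate treatment.

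\textbf{Step 1: the core Hölder bound.} First I would establish that $D_\alpha(P\|Q)\le\tilde{\epsilon}(\alpha)$ implies $P(S)\le e^{\frac{\alpha-1}{\alpha}\tilde{\epsilon}(\alpha)}\,Q(S)^{\frac{\alpha-1}{\alpha}}$. To do so, I rewrite $P(S)=\int_S p\,d\mu=\mathbb{E}_{x\sim Q}\big[\tfrac{p(x)}{q(x)}\,\mathds{1}_S(x)\big]$ and apply Hölder's inequality under $Q$ with the conjugate exponents $\alpha$ and $\tfrac{\alpha}{\alpha-1}$ (indeed $\tfrac{1}{\alpha}+\tfrac{\alpha-1}{\alpha}=1$). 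Using $\mathds{1}_S^{\alpha/(\alpha-1)}=\mathds{1}_S$ gives $P(S)\le\big(\mathbb{E}_Q[(p/q)^\alpha]\big)^{1/\alpha}\,Q(S)^{\frac{\alpha-1}{\alpha}}$, and recognizing $\mathbb{E}_Q[(p/q)^\alpha]=e^{(\alpha-1)D_\alpha(P\|Q)}\le e^{(\alpha-1)\tilde{\epsilon}(\alpha)}$ yields the claimed bound after taking the $\tfrac{1}{\alpha}$ power.

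\textbf{Step 2: both directions via symmetry.} Because the RDP guarantee quantifies over \emph{all} neighboring pairs and ``neighboring'' is a symmetric relation, the same hypothesis also yields $D_\alpha(Q\|P)\le\tilde{\epsilon}(\alpha)$. Applying Step 1 to the pair $(P,Q)$ produces $P(S)\le e^{\frac{\alpha-1}{\alpha}\tilde{\epsilon}(\alpha)}Q(S)^{\frac{\alpha-1}{\alpha}}$, i.e.\ the upper inequality up to the constant. Applying Step 1 to the swapped pair $(Q,P)$ gives $Q(S)\le e^{\frac{\alpha-1}{\alpha}\tilde{\epsilon}(\alpha)}P(S)^{\frac{\alpha-1}{\alpha}}$; isolating $P(S)$ by raising both sides to the power $\tfrac{\alpha}{\alpha-1}>0$ (which is monotone on $[0,\infty)$) yields $P(S)\ge e^{-\tilde{\epsilon}(\alpha)}Q(S)^{\frac{\alpha}{\alpha-1}}$, i.e.\ the lower inequality up to the constant.

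\textbf{Step 3: constant relaxation, and the main obstacle.} Finally I would loosen the constants to match the stated form, using $\tilde{\epsilon}(\alpha)\ge 0$ (nonnegativity of the Rényi divergence) and $\tfrac{\alpha-1}{\alpha}\le\tfrac{\alpha}{\alpha-1}$ for $\alpha>1$: in the upper bound $e^{\frac{\alpha-1}{\alpha}\tilde{\epsilon}(\alpha)}\le e^{\frac{\alpha}{\alpha-1}\tilde{\epsilon}(\alpha)}$, and in the lower bound $e^{-\tilde{\epsilon}(\alpha)}\ge e^{-\frac{\alpha}{\alpha-1}\tilde{\epsilon}(\alpha)}$, which together give exactly the two displayed inequalities. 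The step I expect to require the most care is the bookkeeping around the symmetry argument: keeping straight which orientation of the divergence is invoked in each direction, and verifying that the rearrangement (raising to the power $\tfrac{\alpha}{\alpha-1}$) and the constant relaxation both push the inequalities the correct way — both of which hinge crucially on $\tilde{\epsilon}(\alpha)\ge 0$ and $\alpha>1$. The Hölder step itself is standard, and the absolute-continuity remark disposes of the only degenerate case.
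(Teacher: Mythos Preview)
Your proof is correct. The paper does not actually supply its own proof of this lemma: it cites the result from \cite{mironov2017renyi} and states only that it ``is a direct consequence of the definition of R\'enyi differential privacy.'' Your H\"older-inequality argument (write $P(S)=\mathbb{E}_Q[(p/q)\mathds{1}_S]$, apply H\"older with exponents $\alpha$ and $\alpha/(\alpha-1)$, then use symmetry of the neighboring relation for the reverse direction) is exactly the standard derivation of this fact. Your Step~3 observation is also apt: the constants you obtain before relaxation, $e^{\frac{\alpha-1}{\alpha}\tilde\epsilon(\alpha)}$ and $e^{-\tilde\epsilon(\alpha)}$, are strictly sharper than the $e^{\pm\frac{\alpha}{\alpha-1}\tilde\epsilon(\alpha)}$ written in the lemma --- and in fact the paper's later use of the lemma (deriving $P(S)^\alpha/Q(S)^{\alpha-1}\le e^{(\alpha-1)\epsilon_Y(\alpha)}$ in Step~3 of the RDP proof) implicitly relies on your tighter constant, not the one stated.
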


This second lemma is a direct consequence of the definition of differential privacy for noise-adding mechanisms with continuous noise.

\begin{lemma}
    \label{lem:density-ratio}
    Let $M(D) = f(D) + Z$ be a mechanism that adds noise $Z \in \mathbb{R}$ with probability density function $p_Z$ to the output of a query $f$. Let $\Delta$ be the sensitivity of $f$, i.e., $\Delta = \max_{D \sim D'} |f(D) - f(D')|$, where $D \sim D'$ denotes that $D$ and $D'$ are adjacent datasets. If $M$ is $\varepsilon$-differentially private, then
    \begin{equation*}
        \forall z \in \R, \   {p_Z(z - \Delta)} \leq e^{\varepsilon}{p_Z(z)}\;.
    \end{equation*}
\end{lemma}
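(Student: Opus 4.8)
\textbf{Proof plan for Lemma~\ref{lem:density-ratio}.} The plan is to unfold the definition of $\varepsilon$-DP for the noise-adding mechanism $M(D) = f(D) + Z$ and specialize it to a cleverly chosen pair of neighboring datasets and a cleverly chosen output event. First I would fix a point $z \in \mathbb{R}$ and an arbitrarily small $h > 0$, and consider the event $S_h = \{M(D) \in [f(D)+z, f(D)+z+h)\}$, i.e.\ the noise lands in $[z, z+h)$; then $\Pr[M(D) \in S_h] = \int_z^{z+h} p_Z(u)\,du$. The key point is to pick the right neighbor $D'$: since $\Delta$ is the worst-case sensitivity, there exist neighboring $D, D'$ (or we argue for the supremizing pair, and then note the inequality must hold for the density shift by $\Delta$ by a limiting/monotonicity argument) with $f(D') = f(D) + \Delta$ — more precisely, it suffices to prove the bound for every pair with $f(D')-f(D)=\Delta$, and density translation-invariance reduces the general statement to this case. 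For such a pair, the same output interval $[f(D)+z, f(D)+z+h)$ corresponds to the noise landing in $[z-\Delta, z-\Delta+h)$ under $D'$, so $\Pr[M(D') \in S_h] = \int_{z-\Delta}^{z-\Delta+h} p_Z(u)\,du$.

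Next I would apply the $\varepsilon$-DP guarantee in the direction $\Pr[M(D') \in S_h] \le e^{\varepsilon}\Pr[M(D) \in S_h]$, giving
\begin{equation*}
\int_{z-\Delta}^{z-\Delta+h} p_Z(u)\,du \;\le\; e^{\varepsilon} \int_{z}^{z+h} p_Z(u)\,du .
\end{equation*}
Dividing both sides by $h$ and letting $h \to 0^+$, Lebesgue differentiation (or simply continuity of $p_Z$, which we may assume since the noise distribution is continuous) yields $p_Z(z-\Delta) \le e^{\varepsilon} p_Z(z)$ at every point, which is exactly the claim. A minor subtlety worth a sentence: to make the ``$f(D') = f(D) + \Delta$'' step rigorous when the sup defining $\Delta$ is not attained, I would instead note that $\varepsilon$-DP must hold for \emph{all} neighboring pairs, hence for every achieved shift $s = f(D') - f(D) \in [-\Delta, \Delta]$ we get $p_Z(z - s) \le e^\varepsilon p_Z(z)$; the statement is then phrased for the extreme shift $\Delta$, and in the way the lemma is actually invoked (bounding ratios of noise densities at displacement $\Delta$) it is really this family of inequalities that is used, so either reading is fine.

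\textbf{Main obstacle.} The only real care needed is the passage from a statement about probabilities of events to a statement about the density pointwise: one must either invoke the Lebesgue differentiation theorem (valid for a.e.\ $z$, which is enough for all downstream uses involving integrals) or, more cleanly, use the standing assumption that $\mathcal{D}_Z$ has a genuine (continuous) density so that the fundamental theorem of calculus applies and the inequality holds at \emph{every} $z$. Everything else is a direct bookkeeping translation of shifting the query value by the sensitivity, and no genuinely hard estimate is involved.
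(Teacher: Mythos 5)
Your proposal is correct and follows essentially the same route as the paper's proof: unfold $\varepsilon$-DP for a neighboring pair realizing the shift $\Delta$, translate the output event into a statement about the noise, and conclude the pointwise density inequality. The only difference is that you justify the passage from probabilities of sets to pointwise densities via shrinking intervals and Lebesgue differentiation, whereas the paper simply asserts that the set-level inequality "must hold pointwise" for the densities — so your version is, if anything, the more careful one.
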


\begin{proof}
    By definition, $M$ is $\varepsilon$-differentially private if for all adjacent datasets $D \sim D'$ and all measurable sets $S \subseteq \mathbb{R}$:
    \begin{equation*}
        \Pr[M(D) \in S] \leq e^{\varepsilon} \Pr[M(D') \in S].
    \end{equation*}

    This must hold for all measurable sets $S$, which implies that it must hold pointwise for the probability density functions:
    \begin{equation*}
        \forall x \in \R, \  p_{M(D)}(x) \leq e^{\varepsilon} p_{M(D')}(x)\;.
    \end{equation*}

    Since $p_{M(D)}(x) = p_Z(x - f(D))$ and $p_{M(D')}(x) = p_Z(x - f(D'))$, we have, for all query $f$ of sensitivity $\Delta$ and for all neighboring $D$ and $D'$,
    \begin{equation*}
        \forall x\in \R, \   p_Z(x - f(D)) \leq e^{\varepsilon} p_Z(x - f(D'))\;.
    \end{equation*}

    Consider the worst case where $f(D) = f(D') + \Delta$. Substituting we get
    \begin{equation*}
        \forall x \in \R, \   p_Z(x - f(D') - \Delta) \leq e^{\varepsilon} p_Z(x - f(D'))\;.
    \end{equation*}

    Letting $z = x - f(D')$ yields
    \begin{equation*}
        \forall z \in \R, \  p_Z(z - \Delta) \leq e^{\varepsilon} p_Z(z)\;.
    \end{equation*}
\end{proof}

We prove below a more general version of Theorem \ref{thm:general-outside-interval} for the \outint{} algorithm. Theorem \ref{thm:general-outside-interval} can be obtained by setting $\gamma=2$ in the following theorem.

\begin{theorem}[General Form of Theorem \ref{thm:general-outside-interval}]
    Let $\tau$ be a random variable indicating the stopping time of the \outint{} mechanism.

    (i) If the noise-adding mechanism associated with $\cD_{Z}$ satisfies $\epsilon_Z$-DP, and the mechanism associated with $\cD_Y$ satisfies $\epsilon_Y$-DP for queries with sensitivity $2\Delta$, then \outint{} is $(\epsilon_Z + \epsilon_Y)$-DP.

    (ii) If the noise-adding mechanism associated with $\cD_{Z}$ has an RDP profile $\epsilon_{Z}(\alpha)$ for queries with sensitivity $\Delta$ and the one associated with $\cD_Y$ has an RDP profile $\epsilon_Y(\alpha)$ for queries with sensitivity $2\Delta$, then \outint{} (denoted by $\mathcal{A}$) satisfies:
    \begin{equation*}
        \mathbb{D}_{\alpha}(\mathcal{A}(D)\|\mathcal{A}(D')) \leq \frac{\alpha - (\gamma - 1)/\gamma}{\alpha - 1}\epsilon_{Z}\left(\frac{\gamma}{\gamma - 1}\alpha\right) + \epsilon_{Y}(\alpha) + \frac{\log \left(\mathbb{E}_{z \sim \cD_{Z}} [\mathbb{E}[\tau|z]^{\gamma}]\right)}{\gamma(\alpha - 1)},
    \end{equation*}
    for any $\gamma>1$.
\end{theorem}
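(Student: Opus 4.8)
\emph{Structural setup.} Condition on the threshold noise $Z=z$. Writing the output as $\underline a=(\bot,\dots,\bot,\top_b)$, equivalently $(\tau,\hat d)=(k,b)$, its conditional law factorizes as $\Pr[\mathcal{A}(D)=\underline a\mid Z=z]=q^D_k(z)\,h^{b,D}_k(z)$, where $q^D_k(z)=\prod_{i<k}g^D_i(z)=\Pr[\tau\ge k\mid z,D]$ is built from the survival probabilities $g^D_i(z)=\Pr_{Y_i}[T^i_0-z<f_i(D)+Y_i<T^i_1+z]$, and $h^{0,D}_k(z)=\Pr_{Y_k}[f_k(D)+Y_k\le T^k_0-z]$, $h^{1,D}_k(z)=\Pr_{Y_k}[f_k(D)+Y_k\ge T^k_1+z]$ are the two halting probabilities. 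Two facts drive everything. \emph{(a)} Since $|f_i(D)-f_i(D')|\le\Delta$, the random interval producing $\bot$ on $D$ at threshold noise $z$ is contained in the one producing $\bot$ on $D'$ at threshold noise $z+\Delta$; hence $g^D_i(z)\le g^{D'}_i(z+\Delta)$ and $q^D_k(z)\le q^{D'}_k(z+\Delta)$ \emph{at no privacy cost}. \emph{(b)} $h^{b,D}_k(z)$ is a tail probability of $f_k(D)+Y_k$, and replacing $(D,z)$ by $(D',z+\Delta)$ perturbs the effective query value by at most $\Delta+\Delta=2\Delta$, so it is governed by the $Y$-mechanism at sensitivity $2\Delta$. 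This explains both why the single realization $Z$ is re-used in the two comparisons (one shift to pay) and why $\mathcal{D}_Y$ is analysed at sensitivity $2\Delta$.

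\emph{Part (i).} From $\Pr[\mathcal{A}(D)=\underline a]=\int p_Z(z)\,q^D_k(z)\,h^{b,D}_k(z)\,\mathrm dz$, apply (a) and (b) (the latter as $h^{b,D}_k(z)\le e^{\epsilon_Y}h^{b,D'}_k(z+\Delta)$, from $\epsilon_Y$-DP of $\mathcal{D}_Y$ at sensitivity $2\Delta$ applied to the relevant tail set), then substitute $w=z+\Delta$ and invoke Lemma~\ref{lem:density-ratio} for $\mathcal{D}_Z$ at sensitivity $\Delta$ ($p_Z(w-\Delta)\le e^{\epsilon_Z}p_Z(w)$). This yields $\Pr[\mathcal{A}(D)=\underline a]\le e^{\epsilon_Z+\epsilon_Y}\Pr[\mathcal{A}(D')=\underline a]$, i.e. $(\epsilon_Z+\epsilon_Y)$-DP; composing two AboveThreshold instances would instead shift two independent threshold noises and pay $2\epsilon_Z$.

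\emph{Part (ii).} Let $M_\alpha:=\mathbb{E}_{\underline a\sim\mathcal{A}(D')}[(\Pr[\mathcal{A}(D)=\underline a]/\Pr[\mathcal{A}(D')=\underline a])^\alpha]=e^{(\alpha-1)D_\alpha(\mathcal{A}(D)\|\mathcal{A}(D'))}$. A change of variables rewrites the denominator's mixture over the shifted threshold noise, $\Pr[\mathcal{A}(D')=\underline a]=\mathbb{E}_{z\sim\mathcal{D}_Z}\big[\tfrac{p_Z(z+\Delta)}{p_Z(z)}q^{D'}_k(z+\Delta)h^{b,D'}_k(z+\Delta)\big]$, so that both probabilities are averages over the \emph{same} $z\sim\mathcal{D}_Z$. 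Hölder's inequality in the form $(\mathbb{E}_z[a_z])^\alpha/(\mathbb{E}_z[b_z])^{\alpha-1}\le\mathbb{E}_z[a_z^\alpha/b_z^{\alpha-1}]$, applied for each fixed $\underline a$, pulls $\mathbb{E}_z$ outside $\sum_{\underline a}$; inside, fact (a) kills the survival ratio via $(q^D_k(z))^\alpha(q^{D'}_k(z+\Delta))^{1-\alpha}\le q^{D'}_k(z+\Delta)$, and the sum over the two halting outcomes $\sum_b(h^{b,D}_k(z))^\alpha(h^{b,D'}_k(z+\Delta))^{1-\alpha}$ is at most $e^{(\alpha-1)\epsilon_Y(\alpha)}$ (or $2e^{(\alpha-1)\epsilon_Y(\alpha)}$ if bounded outcome-by-outcome, the origin of the factor $2$ in the $\gamma=2$ statement) by the RDP profile of $\mathcal{D}_Y$ at sensitivity $2\Delta$ and post-processing to the ternary answer. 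Summing over $k$ telescopes, $\sum_k q^{D'}_k(z+\Delta)=\mathbb{E}[\tau\mid z+\Delta,D']$, leaving $M_\alpha\le e^{(\alpha-1)\epsilon_Y(\alpha)}\,\mathbb{E}_{z\sim\mathcal{D}_Z}\big[(p_Z(z)/p_Z(z+\Delta))^{\alpha-1}\,\mathbb{E}[\tau\mid z+\Delta,D']\big]$. A second use of Hölder on this expectation, with conjugate exponents $\tfrac{\gamma}{\gamma-1}$ and $\gamma$, separates the threshold-noise change-of-measure moment $\mathbb{E}_z[(p_Z(z)/p_Z(z+\Delta))^{\gamma(\alpha-1)/(\gamma-1)}]$—which by the definition of the RDP profile of $\mathcal{D}_Z$ (sensitivity $\Delta$) is controlled by $\epsilon_Z$ at a boosted order of size $\Theta(\tfrac{\gamma}{\gamma-1}\alpha)$, producing the first term with its stated prefactor—from the residual moment $(\mathbb{E}_z[\mathbb{E}[\tau\mid z+\Delta,D']^\gamma])^{1/\gamma}$, the $\tau$-term. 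Taking $\log$, dividing by $\alpha-1$, and recording $\gamma=2$ recovers Theorem~\ref{thm:general-outside-interval}. (An alternative, slightly lossier route to the $Z$-part applies the RDP indistinguishability property, Lemma~\ref{lem:indistinguishability}, to the event that $Z$ lies in a favourable region, which is a natural source of the $\tfrac{\alpha}{\alpha-1}$-type prefactor.)

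\emph{Main obstacle.} The crux is the RDP accounting: one must commit to the asymmetric $+\Delta$ shift so that interval containment renders \emph{every} non-terminal $\bot$-step free—which forces the sensitivity-$2\Delta$ reading of $\mathcal{D}_Y$ at the single terminal step—while still summing a series over all output lengths, and this is exactly what manufactures the $\mathbb{E}[\tau\mid z]$ factor and shows it cannot be removed; one then matches the Hölder exponents $(\tfrac{\gamma}{\gamma-1},\gamma)$ to the Rényi orders so the pieces recombine into the stated expression. Minor care is also needed for the null-set hypothesis ($p_Z>0$ everywhere, true for Laplace and Gaussian), for absorbing the change of variables relating $\mathbb{E}[\tau\mid z+\Delta]$ to $\mathbb{E}[\tau\mid z]$ into constants, and for the event $\tau=\infty$, handled by almost-sure termination.
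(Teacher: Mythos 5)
Your plan follows essentially the same route as the paper's proof: factor the conditional output law into the survival terms $g_i$ and the terminal term $h_k^{(d)}$, use a $\pm\Delta$ shift of the threshold noise so that every $\bot$-step is free by interval containment, handle the terminal step via the $Y$-mechanism at sensitivity $2\Delta$ (part (i)), and for part (ii) apply the joint-convexity/Hölder inequality $(\mathbb{E}[a])^\alpha/(\mathbb{E}[b])^{\alpha-1}\le\mathbb{E}[a^\alpha/b^{\alpha-1}]$ pointwise in the output, bound the terminal ratio outcome-by-outcome by $e^{(\alpha-1)\epsilon_Y(\alpha)}$, sum over $k$ to manufacture $\mathbb{E}[\tau\mid z]$, and finish with Hölder at exponents $(\tfrac{\gamma}{\gamma-1},\gamma)$ together with the RDP profile of $\mathcal{D}_Z$ at a boosted order. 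This is exactly the paper's argument (which phrases the first inequality as Jensen for the jointly convex $\phi_\alpha(x,y)=x^\alpha/y^{\alpha-1}$).

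Two points to tighten. First, because you shift the $D'$-side (the denominator) rather than the $D$-side, your route ends with the stopping-time moment under the shifted measure, $\mathbb{E}_{z\sim\mathcal{D}_Z}\bigl[\mathbb{E}[\tau\mid z+\Delta,D']^{\gamma}\bigr]$, and with the $Z$-contribution $\epsilon_Z\bigl(\tfrac{\gamma\alpha-1}{\gamma-1}\bigr)$ with prefactor $1$; this is not literally the stated bound, and ``absorbing the change of variables into constants'' is not a rigorous step, since the reweighting by $p_Z(\cdot-\Delta)/p_Z(\cdot)$ is unbounded in general. The clean fix---and the paper's choice---is to perform the change of variables on $\Pr[\mathcal{A}(D)=\mathbf{a}]$ instead, so that after the convexity step everything is conditioned on the unshifted $z$, the outer expectation is exactly $\mathbb{E}_{z\sim\mathcal{D}_Z}[\mathbb{E}[\tau\mid z]^{\gamma}]$, and the density-ratio moment appears at power $\alpha\gamma^*$, giving precisely the prefactor $\tfrac{\alpha-(\gamma-1)/\gamma}{\alpha-1}$ and order $\tfrac{\gamma}{\gamma-1}\alpha$ in the statement. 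Second, your parenthetical hope of removing the factor $2$ by post-processing to the ternary answer does not go through: comparing $(D,z)$ with $(D',z\pm\Delta)$ the acceptance interval is \emph{widened} by $\Delta$ on each side rather than translated, so the two ternary outputs are not post-processings of a single common pair of noisy-query distributions; the per-outcome bound (your fallback, and the paper's Eq.~\eqref{eq:useful-renyi}) is the correct one, and summing over the two halting outcomes is what produces the factor $2$ appearing in the $\gamma=2$ form of the theorem.
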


The proof is adapted from the one by \cite{zhu2020improving} for GeneralAboveThreshold.

Consider an arbitrary output of the algorithm $\mathbf{a}= (\bot,\dots,\bot,\top_d)$ with $\bot$ appearing $k-1$ times and $d \in \{0,1\}$ indicating whether the $k$-th query falls below the lower threshold ($d=0$) or exceeds the upper threshold ($d=1$). Let
\begin{align*}
    g_i(D,z)       & = \Pr[f_i(D) + Y_i > T_0^i - z \land f_i(D) + Y_i < T_1^i + z], \\
    h_k^{(d)}(D,z) & = \begin{cases}
                           \Pr[f_k(D) + Y_k \geq T_1^k + z] & \text{if } d=1 \\
                           \Pr[f_k(D) + Y_k \leq T_0^k - z] & \text{if } d=0
                       \end{cases},
\end{align*}
where the randomness in $\Pr$ is on the distribution of the $Y$s only. Letting $p_{Z}$ be the pdf of $Z$, we have
\begin{equation*}
    \Pr[\mathcal{A}(D) = \mathbf{a}] = \int_{-\infty}^{\infty} p_{Z}(z) \prod_{i=1}^{k-1} g_i(D,z) \cdot h_k^{(d)}(D,z) \, dz\;.
\end{equation*}

\noindent\textbf{Step 1: Bounding output probabilities.}

Using a change of variable $z \to z-\Delta$:
\begin{align*}
    \Pr[\mathcal{A}(D) = \mathbf{a}] & = \int_{-\infty}^{\infty} p_{Z}(z-\Delta) \prod_{i=1}^{k-1} g_i(D,z-\Delta) \cdot h_k^{(d)}(D,z-\Delta) \, dz                              \\
                                     & \leq \int_{-\infty}^{\infty} \frac{p_{Z}(z-\Delta)}{p_{Z}(z)}p_{Z}(z) \prod_{i=1}^{k-1} g_i(D,z-\Delta) \cdot h_k^{(d)}(D,z-\Delta) \, dz.
\end{align*}
Note that $p_{Z}(z-\Delta)/p_{Z}(z)$ is well-defined for DP or RDP mechanism, as from Lemma~\ref{lem:indistinguishability} or  Lemma~\ref{lem:density-ratio} it can be shown that the density has to be supported on $\R$.

For queries before $k$, we establish an upper bound on $g_i(D,z-\Delta)$ through inequality analysis. For any $i < k$, we have:
\begin{align*}
    g_i(D,z-\Delta) & = \Pr[f_i(D) + Y_i > T_0^i - (z-\Delta) \land f_i(D) + Y_i < T_1^i + (z-\Delta)].
\end{align*}

Since $f_i$ has sensitivity $\Delta$, we know that $|f_i(D) - f_i(D')| \leq \Delta$. This gives us two inequalities:
\begin{align*}
    f_i(D) & \leq f_i(D') + \Delta, \\
    f_i(D) & \geq f_i(D') - \Delta.
\end{align*}

For the first inequality condition, we have:
\begin{align*}
    \Pr[f_i(D) + Y_i > T_0^i - (z-\Delta)] & \leq \Pr[f_i(D') + \Delta + Y_i > T_0^i - (z-\Delta)] \\
                                           & = \Pr[f_i(D') + Y_i > T_0^i - z].
\end{align*}

For the second inequality condition:
\begin{align*}
    \Pr[f_i(D) + Y_i < T_1^i + (z-\Delta)] & \leq \Pr[f_i(D') - \Delta + Y_i < T_1^i + (z-\Delta)] \\
                                           & = \Pr[f_i(D') + Y_i < T_1^i + z].
\end{align*}

Since both conditions in the conjunction are upper-bounded, we obtain:
\begin{align*}
    g_i(D,z-\Delta) & \leq \Pr[f_i(D') + Y_i > T_0^i - z \land f_i(D') + Y_i < T_1^i + z] \\
                    & = g_i(D',z).
\end{align*}

Therefore:
\begin{align}
    \Pr[\mathcal{A}(D) = \mathbf{a}] & \leq \int_{-\infty}^{\infty} \frac{p_{Z}(z-\Delta)}{p_{Z}(z)}p_{Z}(z) \prod_{i=1}^{k-1} g_i(D',z) \cdot h_k^{(d)}(D,z-\Delta) \, dz \nonumber       \\
                                     & = \bE_{z\sim \cD_{Z}}\left[\frac{p_{Z}(z-\Delta)}{p_{Z}(z)} \prod_{i=1}^{k-1} g_i(D',z) \cdot h_k^{(d)}(D,z-\Delta)\right]\;. \label{eq:probabound}
\end{align}

With our notation, we can also write
\begin{equation}
    \Pr[\mathcal{A}(D') = \mathbf{a}] = \bE_{z\sim \cD_{Z}}\left[\prod_{i=1}^{k-1} g_i(D',z) \cdot h^{(d)}_k(D',z)\right]\;. \label{eq:probabound2}
\end{equation}

We will now use the properties of the private mechanism to relate the two probabilities. First we will prove the results when the mechanism is $\epsilon$-DP.

\subsection{$\epsilon$-DP Guarantee}
\begin{proof}[Proof of Theorem \ref{thm:general-outside-interval} (i)]
    We have established in Equation \eqref{eq:probabound} that for any output $\mathbf{a} = (\bot,\ldots,\bot,\top_d)$ with $\bot$ appearing $k-1$ times:
    \begin{align*}
        \Pr[\mathcal{A}(D) = \mathbf{a}] & \leq \bE_{z\sim \cD_{Z}}\left[\frac{p_{Z}(z-\Delta)}{p_{Z}(z)} \prod_{i=1}^{k-1} g_i(D',z) \cdot h_k^{(d)}(D,z-\Delta)\right].
    \end{align*}

    \textbf{Step 2: Bounding the density ratio and stopping time query.}

    Since the noise-adding mechanism corresponding to $\cD_Z$ satisfies $\epsilon_Z$-DP for queries with sensitivity $\Delta$, by Lemma \ref{lem:density-ratio}, we have:
    \begin{align*}
        \frac{p_{Z}(z-\Delta)}{p_{Z}(z)} \leq e^{\epsilon_Z}\;.
    \end{align*}

    For $d \in \{0,1\}$, we need to establish a bound on $h_k^{(d)}(D,z-\Delta)$ in terms of $h_k^{(d)}(D',z)$. We first consider the case $d=1$ (upper threshold crossed).

    Let us define a new query:
    \begin{equation*}
        \tilde{f}_k(D'') =
        \begin{cases}
            f_k(D) + \Delta & \text{if } D''=D  \\
            f_k(D')         & \text{if } D''=D'
        \end{cases}.
    \end{equation*}

    To establish the sensitivity of $\tilde{f}_k$, we note that since $f_k$ has sensitivity $\Delta$, we have $|f_k(D) - f_k(D')| \leq \Delta$. Thus:
    \begin{align*}
        |\tilde{f}_k(D) - \tilde{f}_k(D')| & = |f_k(D) + \Delta - f_k(D')|    \\
                                           & \leq |f_k(D) - f_k(D')| + \Delta \\
                                           & \leq \Delta + \Delta = 2\Delta.
    \end{align*}

    Therefore, $\tilde{f}_k$ has sensitivity $2\Delta$. Now, we can rewrite:
    \begin{align*}
        h_k^{(1)}(D,z-\Delta) & = \Pr[f_k(D) + Y_k \geq T_1^k + (z-\Delta)]                  \\
                              & = \Pr[\tilde{f}_k(D) - \Delta + Y_k \geq T_1^k + (z-\Delta)] \\
                              & = \Pr[\tilde{f}_k(D) + Y_k \geq T_1^k + z].
    \end{align*}

    Similarly, $h_k^{(1)}(D',z) = \Pr[\tilde{f}_k(D') + Y_k \geq T_1^k + z]$.

    Since the noise-adding mechanism corresponding to $\mathcal{D}_Y$ satisfies $\epsilon_Y$-DP for queries with sensitivity $2\Delta$, we can apply the privacy guarantee:
    \begin{align*}
        \Pr[\tilde{f}_k(D) + Y_k \geq T_1^k + z] & \leq e^{\epsilon_Y} \Pr[\tilde{f}_k(D') + Y_k \geq T_1^k + z].
    \end{align*}

    Therefore: $h_k^{(1)}(D,z-\Delta) \leq e^{\epsilon_Y} h_k^{(1)}(D',z)$.

    A similar analysis for $d=0$ (lower threshold crossed) would define $\tilde{f}_k(D) = f_k(D) - \Delta$ and $\tilde{f}_k(D') = f_k(D')$, yielding the same bound: $h_k^{(0)}(D,z-\Delta) \leq e^{\epsilon_Y} h_k^{(0)}(D',z)$.

    \textbf{Step 3: Deriving the final DP bound.}

    Using these inequalities in our original expression:
    \begin{align*}
        \Pr[\mathcal{A}(D) = \mathbf{a}] & \leq \bE_{z\sim \cD_{Z}}\left[\frac{p_{Z}(z-\Delta)}{p_{Z}(z)} \prod_{i=1}^{k-1} g_i(D',z) \cdot h_k^{(d)}(D,z-\Delta)\right] \\
                                         & \leq \bE_{z\sim \cD_{Z}}\left[e^{\epsilon_Z} \prod_{i=1}^{k-1} g_i(D',z) \cdot h_k^{(d)}(D,z-\Delta)\right]                   \\
                                         & \leq \bE_{z\sim \cD_{Z}}\left[e^{\epsilon_Z} \prod_{i=1}^{k-1} g_i(D',z) \cdot e^{\epsilon_Y}h_k^{(d)}(D',z)\right]           \\
                                         & = e^{\epsilon_Z + \epsilon_Y} \bE_{z\sim \cD_{Z}}\left[\prod_{i=1}^{k-1} g_i(D',z) \cdot h_k^{(d)}(D',z)\right].
    \end{align*}

    From Equation \eqref{eq:probabound2}, we recognize that the expectation in the last line equals $\Pr[\mathcal{A}(D') = \mathbf{a}]$, giving us:
    \begin{align*}
        \Pr[\mathcal{A}(D) = \mathbf{a}] & \leq e^{\epsilon_Z + \epsilon_Y} \Pr[\mathcal{A}(D') = \mathbf{a}].
    \end{align*}

    By symmetry of the definitions of neighboring datasets, we can also show that $\Pr[\mathcal{A}(D') = \mathbf{a}] \leq e^{\epsilon_Z + \epsilon_Y} \Pr[\mathcal{A}(D) = \mathbf{a}]$.

    Since this bound holds for any output $\mathbf{a}$, we have established that \outint{} satisfies $(\epsilon_Z + \epsilon_Y)$-differential privacy.
\end{proof}

\subsection{\texorpdfstring{$(\alpha, \epsilon)$}{(alpha, epsilon)}-R\'enyi DP Guarantee}
\begin{proof}[Proof of Theorem \ref{thm:general-outside-interval} (ii)]
    Starting again from Equation \eqref{eq:probabound}, we proceed with proving the RDP guarantee.

    \textbf{Step 2: Computing the Rényi divergence and applying Jensen's inequality.}

    Let us consider the expectation in the definition of Rényi divergence:
    \begin{align*}
        \mathbb{E}_{\mathbf{a} \sim \mathcal{A}(D')} \left[ \frac{\Pr[\mathcal{A}(D) = \mathbf{a}]^{\alpha}}{\Pr[\mathcal{A}(D') = \mathbf{a}]^{\alpha}} \right] & = \sum_{\mathbf{a}\in \mathcal O} \frac{\Pr[\mathcal{A}(D) = \mathbf{a}]^{\alpha}}{\Pr[\mathcal{A}(D') = \mathbf{a}]^{\alpha-1}}                                                                                                                                               \\
                                                                                                                                                                 & \leq \sum_{k=1}^\infty \sum_{d\in\{0,1\}}\frac{\bE_{z\sim \cD_{Z}}\left[\frac{p_{Z}(z-\Delta)}{p_{Z}(z)} \prod_{i=1}^{k-1} g_i(D',z) \cdot h_k^{(d)}(D,z-\Delta)\right]^\alpha}{\bE_{z\sim \cD_{Z}}\left[\prod_{i=1}^{k-1} g_i(D',z) \cdot h^{(d)}_k(D',z)\right]^{\alpha-1}},
    \end{align*}
    where we obtain the inequality using Equations \eqref{eq:probabound} and \eqref{eq:probabound2}.

    Let us consider the bivariate function $\phi_\alpha:\bR^2_+ \to \bR_+$ defined as $\phi_\alpha(x,y) = \frac{x^\alpha}{y^{\alpha-1}}$. Notice that for any $\alpha > 1$, $\phi_\alpha$ is jointly convex over $\bR^2_+$. By Jensen's inequality, we have $\phi_\alpha(\bE[X],\bE[Y]) \leq \bE[\phi_\alpha(X,Y)]$ for any positive random variables $X$ and $Y$. Applying this inequality to our case:

    \begin{align*}
         & \mathbb{E}_{\mathbf{a} \sim \mathcal{A}(D')} \left[ \frac{\Pr[\mathcal{A}(D) = \mathbf{a}]^{\alpha}}{\Pr[\mathcal{A}(D') = \mathbf{a}]^{\alpha}} \right]                                                                                                                                                                                          \\
         & \leq \bE_{z\sim \cD_{Z}}\left[\sum_{k=1}^\infty \sum_{d\in\{0,1\}}\frac{\left(\frac{p_{Z}(z-\Delta)}{p_{Z}(z)} \prod_{i=1}^{k-1} g_i(D',z) \cdot h_k^{(d)}(D,z-\Delta)\right)^\alpha}{\left(\prod_{i=1}^{k-1} g_i(D',z) \cdot h^{(d)}_k(D',z)\right)^{\alpha-1}}\right]                                                                           \\
         & = \bE_{z\sim \cD_{Z}}\left[\left(\frac{p_{Z}(z-\Delta)}{p_{Z}(z)}\right)^\alpha\bE_{(\tau,\hat{d})\sim\mathcal{A}(D')}\left[\left.\frac{\left( \prod_{i=1}^{\tau-1} g_i(D',z) \cdot h_\tau^{(\hat{d})}(D,z-\Delta)\right)^\alpha}{\left(\prod_{i=1}^{\tau-1} g_i(D',z) \cdot h^{(\hat{d})}_\tau(D',z)\right)^{\alpha}}\right| Z = z\right]\right] \\
         & = \bE_{z\sim \cD_{Z}}\left[\left(\frac{p_{Z}(z-\Delta)}{p_{Z}(z)}\right)^\alpha\bE_{(\tau,\hat{d})\sim\mathcal{A}(D')}\left[\left.\frac{\left(h_\tau^{(\hat{d})}(D,z-\Delta)\right)^\alpha}{\left(h^{(\hat{d})}_\tau(D',z)\right)^{\alpha}}\right| Z = z\right]\right].
    \end{align*}

    \textbf{Step 3: Applying RDP guarantees and relating to stopping time.}

    We now establish a relationship between $h_k^{(d)}(D,z-\Delta)$ and $h^{(d)}_k(D',z)$ under Rényi DP for any $d\in \{0,1\}$ and $k>0$. We prove that
    \begin{align}
        \frac{\left(h^{(d)}_k(D,z-\Delta)\right)^\alpha}{\left(h^{(d)}_k(D',z)\right)^{\alpha-1}} \leq e^{\epsilon_Y(\alpha)(\alpha-1)}\;.
        \label{eq:useful-renyi}
    \end{align}
    To prove it for $d=1$ (the upper threshold case) we introduce the query
    \begin{equation*}
        \tilde{f}_k(D'')=
        \begin{cases}
            f_k(D) + \Delta & \text{if } D''=D     \\
            f_k(D')         & \text{if } D''=D'\;.
        \end{cases}
    \end{equation*}

    To establish the sensitivity of $\tilde{f}_k$, we note that since $f_k$ has sensitivity $\Delta$, we have $|f_k(D) - f_k(D')| \leq \Delta$. Thus:
    \begin{align*}
        |\tilde{f}_k(D) - \tilde{f}_k(D')| & = |f_k(D) + \Delta - f_k(D')|    \\
                                                 & \leq |f_k(D) - f_k(D')| + \Delta \\
                                                 & \leq \Delta + \Delta = 2\Delta.
    \end{align*}

    Therefore, $\tilde{f}_k$ has sensitivity $2\Delta$. Applying the indistinguishability property of Rényi DP (Lemma \ref{lem:indistinguishability}) to the noise-adding mechanism with noise distribution $Y_k$, we obtain
    \begin{align*}
        \frac{\left(\Pr[\tilde{f}_k(D) + Y_k \geq T_1^k + z]\right)^\alpha}{\left(\Pr[\tilde{f}_k(D') + Y_k \geq T_1^k + z]\right)^{\alpha-1}} \leq e^{\epsilon_Y(\alpha)(\alpha-1)}.
    \end{align*}

    Since $\tilde{f}_k(D) + Y_k = f_k(D) + \Delta + Y_k$ and $\tilde{f}_k(D') + Y_k = f_k(D') + Y_k$:
    \begin{align*}
        \frac{\left(\Pr[f_k(D) + \Delta + Y_k \geq T_1^k + z]\right)^\alpha}{\left(\Pr[f_k(D') + Y_k \geq T_1^k + z]\right)^{\alpha-1}} \leq e^{\epsilon_Y(\alpha)(\alpha-1)}.
    \end{align*}

    Since $f_k(D) + \Delta + Y_k \geq T_1^k + z$ is equivalent to $f_k(D) + Y_k \geq T_1^k + (z-\Delta)$:
    \begin{align*}
        \frac{\left(\Pr[f_k(D) + Y_k \geq T_1^k + (z-\Delta)]\right)^\alpha}{\left(\Pr[f_k(D') + Y_k \geq T_1^k + z]\right)^{\alpha-1}} \leq e^{\epsilon_Y(\alpha)(\alpha-1)},
    \end{align*}
    which yields~\eqref{eq:useful-renyi} for $d=1$. The proof for $d=0$ follows the same lines.

    Using Equation~\eqref{eq:useful-renyi},
    \begin{align*}
         & \bE_{(\tau,\hat{d})\sim \cA(D')}\left[\left.\frac{\left(h_\tau^{(\hat{d})}(D,z-\Delta)\right)^\alpha}{\left(h^{(\hat{d})}_\tau(D',z)\right)^{\alpha}}\right| Z=z\right]  \leq \bE_{(\tau,\hat{d})\sim \cA(D')}\left[\left.\frac{e^{(\alpha-1)\epsilon_Y(\alpha)}}{\left(h^{(\hat{d})}_\tau(D',z)\right)}\right| Z=z\right] \\
         & = e^{(\alpha-1)\epsilon_Y(\alpha)} \sum_{k=1}^\infty\sum_{d\in \{0,1\}} \bP_{(\tau,\hat{d})\sim\cA(D')}(\tau = k, \hat{d}=d | Z = z) \frac{1}{h^{(d)}_k(D',z)}                                                                                                                                                             \\
         & = e^{(\alpha-1)\epsilon_Y(\alpha)} \sum_{k=1}^\infty\sum_{d\in \{0,1\}} \prod_{i=1}^{k-1} g_i(D',z) \cdot h^{(d)}_k(D',z) \cdot \frac{1}{h^{(d)}_k(D',z)}                                                                                                                                                                  \\
         & = 2e^{(\alpha-1)\epsilon_Y(\alpha)} \sum_{k=1}^\infty \prod_{i=1}^{k-1} g_i(D',z).
    \end{align*}



    To relate this sum to an expectation, we note that $\prod_{i=1}^{k-1} g_i(D',z)$ represents the probability that the machanism $\cA(D')$ outputs $\bot$ for each of the first $k-1$ queries, which means the stopping time $\tau$ is at least $k$. Therefore:

    \begin{align*}
        \prod_{i=1}^{k-1} g_i(D',z) = \bP_{\cA(D')}(\tau \geq k|Z = z).
    \end{align*}

    Using that for a positive integer-valued random variable $X$, we can express its expectation using $\mathbb{E}[X] = \sum_{k=1}^{\infty} \Pr(X \geq k)$, we have
    %
    \begin{align*}
        \sum_{k=1}^\infty \prod_{i=1}^{k-1} g_i(D',z) & = \sum_{k=1}^\infty \bP_{\cA(D')}(\tau \geq k|Z=z) = \mathbb{E}_{\cA(D')}[\tau|Z=z].
    \end{align*}

    Thus, we have:

    \begin{align*}
        \bE_{\cA(D')}\left[\left.\frac{\left(h_\tau^{(\hat{d})}(D,z-\Delta)\right)^\alpha}{\left(h^{(\hat{d})}_\tau(D',z)\right)^{\alpha}}\right| Z=z\right] & \leq 2e^{(\alpha-1)\epsilon_Y(\alpha)} \cdot \mathbb{E}_{\cA(D')}[\tau|Z=z].
    \end{align*}

    \newpage

    \textbf{Step 4: Combining results and applying Hölder's inequality.}


    Now we have:
    \begin{align*}
        \mathbb{E}_{\mathbf{a} \sim \mathcal{A}(D')} \left[ \frac{\Pr[\mathcal{A}(D) = \mathbf{a}]^{\alpha}}{\Pr[\mathcal{A}(D') = \mathbf{a}]^{\alpha}} \right] & \leq \bE_{z\sim \cD_{Z}}\left[2\left(\frac{p_{Z}(z-\Delta)}{p_{Z}(z)}\right)^\alpha \cdot \bE_{\cA(D')}[\tau|Z=z] \cdot e^{(\alpha-1)\epsilon_Y(\alpha)}\right].
    \end{align*}

    We apply Hölder's inequality to separate the terms involving $Z$ and $\tau$. For any $\gamma > 1$, let $\gamma^*$ be its Hölder conjugate such that $\frac{1}{\gamma^*} + \frac{1}{\gamma} = 1$. Let $X = \left(\frac{p_{Z}(z-\Delta)}{p_{Z}(z)}\right)^{\alpha}$ and $Y = \bE_{\cA(D')}[\tau|Z=z]$. By Hölder's inequality:
    \begin{align*}
        \bE_{z\sim \cD_{Z}}[X \cdot Y] & \leq \left(\bE_{z\sim \cD_{Z}}[X^{\gamma^*}]\right)^{1/\gamma^*} \cdot \left(\bE_{z\sim \cD_{Z}}[Y^{\gamma}]\right)^{1/\gamma}                                                                        \\
                                       & = \left(\bE_{z\sim \cD_{Z}}\left[\left(\frac{p_{Z}(z-\Delta)}{p_{Z}(z)}\right)^{\alpha\gamma^*}\right]\right)^{1/\gamma^*} \cdot \left(\bE_{z\sim \cD_{Z}}[\bE[\tau|Z=z]^{\gamma}]\right)^{1/\gamma}.
    \end{align*}

    Therefore:

    \begin{align*}
        \mathbb{E}_{\mathbf{a} \sim \mathcal{A}(D')} \left[ \frac{\Pr[\mathcal{A}(D) = \mathbf{a}]^{\alpha}}{\Pr[\mathcal{A}(D') = \mathbf{a}]^{\alpha}} \right] & \leq 2e^{(\alpha-1)\epsilon_Y(\alpha)} \cdot \left(\bE_{z\sim \cD_{Z}}\left[\left(\frac{p_{Z}(z-\Delta)}{p_{Z}(z)}\right)^{\alpha\gamma^*}\right]\right)^{1/\gamma^*} \\
                                                                                                                                                                 & \quad \cdot \left(\bE_{z\sim \cD_{Z}}[\bE_{\cA(D')}[\tau|Z=z]^{\gamma}]\right)^{1/\gamma}.
    \end{align*}

    The first expectation term is related to the RDP guarantee of the noise-adding mechanism with distribution $\cD_{Z}$. Specifically, since this mechanism satisfies $\epsilon_{Z}(\alpha\gamma^*)$-RDP for queries with sensitivity $\Delta$, we have:
    \begin{align*}
        \left(\bE_{z\sim \cD_{Z}}\left[\left(\frac{p_{Z}(z-\Delta)}{p_{Z}(z)}\right)^{\alpha\gamma^*}\right]\right)^{1/\gamma^*} & \leq \left(e^{(\alpha\gamma^*-1)\epsilon_{Z}(\alpha\gamma^*)}\right)^{1/\gamma^*} \\
                                                                                                                                 & = e^{\frac{\alpha\gamma^*-1}{\gamma^*}\epsilon_{Z}(\alpha\gamma^*)}.
    \end{align*}

    Substituting $\gamma^* = \frac{\gamma}{\gamma-1}$ (from the Hölder conjugate relation), and simplifying the exponent through algebraic manipulations:
    \begin{align*}
        e^{\frac{\alpha\gamma^*-1}{\gamma^*}\epsilon_{Z}(\alpha\gamma^*)} & = e^{\frac{\alpha\frac{\gamma}{\gamma-1}-1}{\frac{\gamma}{\gamma-1}}\epsilon_{Z}(\alpha\frac{\gamma}{\gamma-1})} \\
                                                                          & = e^{\frac{(\alpha\gamma-\alpha+1)(\gamma-1)}{\gamma}\epsilon_{Z}(\alpha\frac{\gamma}{\gamma-1})}                \\
                                                                          & = e^{\frac{(\alpha-1)\gamma-\alpha+1}{\gamma}\epsilon_{Z}(\alpha\frac{\gamma}{\gamma-1})}                        \\
                                                                          & = e^{\frac{(\alpha-1) - \frac{\alpha-1}{\gamma}}{\alpha-1}(\alpha-1)\epsilon_{Z}(\alpha\frac{\gamma}{\gamma-1})} \\
                                                                          & = e^{\frac{\alpha - (\gamma-1)/\gamma}{\alpha-1}(\alpha-1)\epsilon_{Z}(\alpha\frac{\gamma}{\gamma-1})}.
    \end{align*}

    Taking logarithms and dividing by $(\alpha-1)$, we obtain the bound:
    \begin{align*}
        {D}_{\alpha}(\mathcal{A}(D)\|\mathcal{A}(D')) \leq \frac{\alpha - (\gamma - 1)/\gamma}{\alpha - 1}\epsilon_{Z}\left(\frac{\gamma}{\gamma - 1}\alpha\right) + \epsilon_{Y}(\alpha) + \frac{\log \left(2\mathbb{E}_{z \sim \cD_{Z}} [\mathbb{E}_{\cA(D')}[\tau|Z=z]^{\gamma}]\right)}{\gamma(\alpha - 1)}.
    \end{align*}

    In particular, if we choose $\gamma=2$ then we have:
    \begin{equation*}
        D_{\alpha}(\mathcal{A}(D)\|\mathcal{A}(D')) \leq \frac{\alpha - 1/2}{\alpha - 1}\epsilon_{Z}(2\alpha) + \epsilon_{Y}(\alpha) + \frac{\log \left(2\mathbb{E}_{z \sim \mathcal{D}_{Z}} [\mathbb{E}_{\cA(D')}[\tau|Z=z]^{2}]\right)}{2(\alpha - 1)}.
    \end{equation*}
    This completes the proof of Theorem \ref{thm:general-outside-interval} (ii).
\end{proof}

\section{The Variants of \dpsprt: Laplace and Gaussian Noises}
\label{appendix:implementations}

\subsection{\dpsprt{} with Laplace Noise}

For this variant, we use the following noise distributions and correction function:
\begin{align*}
    Y_n         & \sim \text{Lap}\left(\frac{4}{\epsilon}\right), \\
    Z           & \sim \text{Lap}\left(\frac{2}{\epsilon}\right), \\
    C(n,\delta) & = \frac{6\log(n^s\zeta(s)/\delta)}{n\epsilon}.
\end{align*}

We now prove that this choice satisfies the condition in Equation \eqref{ass:correction}:
\begin{equation*}
    \sum_{n=1}^{\infty} \mathbb{P}\left(\frac{Y_n}{n} - \frac{Z}{n} > C\left(n, \delta\right)\right) \leq \delta.
\end{equation*}

First, we rewrite the inequality inside the probability:
\begin{align*}
    \frac{Y_n}{n} - \frac{Z}{n} & > \frac{6\log(n^s\zeta(s)/\delta)}{n\epsilon} \\
    \Rightarrow Y_n - Z         & > \frac{6\log(n^s\zeta(s)/\delta)}{\epsilon}.
\end{align*}

Using the union bound, we have:
\begin{align*}
    \mathbb{P}\left(Y_n - Z > \frac{6\log(n^s\zeta(s)/\delta)}{\epsilon}\right) & \leq \mathbb{P}\left(Y_n > \frac{4\log(n^s\zeta(s)/\delta)}{\epsilon}\right) + \mathbb{P}\left(Z < -\frac{2\log(n^s\zeta(s)/\delta)}{\epsilon}\right).
\end{align*}

For Laplace random variables, we have the tail bounds:
\begin{align*}
    \text{If } X \sim \text{Lap}(b), \text{ then } \mathbb{P}(X \geq t) = \mathbb{P}(X \leq -t) = \frac{1}{2}e^{-t/b}.
\end{align*}

Therefore:
\begin{align*}
    \mathbb{P}\left(Y_n > \frac{4\log(n^s\zeta(s)/\delta)}{\epsilon}\right) & = \frac{1}{2}e^{-\frac{4\log(n^s\zeta(s)/\delta)}{\epsilon} \cdot \frac{\epsilon}{4}} \\
                                                                            & = \frac{1}{2}e^{-\log(n^s\zeta(s)/\delta)}                                            \\
                                                                            & = \frac{1}{2} \cdot \frac{\delta}{n^s\zeta(s)}                                        \\
                                                                            & = \frac{\delta}{2n^s\zeta(s)}.
\end{align*}

Similarly:
\begin{align*}
    \mathbb{P}\left(Z < -\frac{2\log(n^s\zeta(s)/\delta)}{\epsilon}\right) & = \frac{1}{2}e^{-\frac{2\log(n^s\zeta(s)/\delta)}{\epsilon} \cdot \frac{\epsilon}{2}} \\
                                                                           & = \frac{1}{2}e^{-\log(n^s\zeta(s)/\delta)}                                            \\
                                                                           & = \frac{\delta}{2n^s\zeta(s)}.
\end{align*}

Combining these results:
\begin{align*}
    \mathbb{P}\left(Y_n - Z > \frac{6\log(n^s\zeta(s)/\delta)}{n\epsilon}\right) & \leq \frac{\delta}{2n^s\zeta(s)} + \frac{\delta}{2n^s\zeta(s)} \\
                                                                                 & = \frac{\delta}{n^s\zeta(s)}.
\end{align*}

Summing over all $n$:
\begin{align*}
    \sum_{n=1}^{\infty} \mathbb{P}\left(\frac{Y_n}{n} - \frac{Z}{n} > C\left(n, \delta\right)\right) & \leq \sum_{n=1}^{\infty} \frac{\delta}{n^s\zeta(s)}                 \\
                                                                                                     & = \delta \sum_{n=1}^{\infty} \frac{1}{n^s\zeta(s)}                  \\
                                                                                                     & = \delta \cdot \frac{1}{\zeta(s)} \sum_{n=1}^{\infty} \frac{1}{n^s} \\
                                                                                                     & = \delta \cdot \frac{\zeta(s)}{\zeta(s)}                            \\
                                                                                                     & = \delta,
\end{align*}
where we used the definition of the Riemann zeta function: $\zeta(s) = \sum_{n=1}^{\infty} \frac{1}{n^s}$.

\subsection{\dpsprt{} with Gaussian Noise}

For this variant, we use the following noise distributions and correction function:
\begin{align*}
    Y_n         & \sim \mathcal{N}(0, \sigma_Y^2),                                       \\
    Z           & \sim \mathcal{N}(0, \sigma_Z^2),                                       \\
    C(n,\delta) & = \frac{\sqrt{2(\sigma_Y^2 + \sigma_Z^2)\log(n^s\zeta(s)/\delta)}}{n}.
\end{align*}

We now prove that this choice satisfies the condition in Equation~\eqref{ass:correction}:
\begin{equation*}
    \sum_{n=1}^{\infty} \mathbb{P}\left(\frac{Y_n}{n} - \frac{Z}{n} > C\left(n, \delta\right)\right) \leq \delta.
\end{equation*}

Since $Y_n$ and $Z$ are independent Gaussian random variables, their difference is also Gaussian. Specifically:
\begin{align*}
    Y_n - Z \sim \mathcal{N}(0, \sigma_Y^2 + \sigma_Z^2).
\end{align*}

The probability we need to bound is:
\begin{align*}
    \mathbb{P}\left(\frac{Y_n}{n} - \frac{Z}{n} > \frac{\sqrt{2(\sigma_Y^2 + \sigma_Z^2)\log(n^s\zeta(s)/\delta)}}{n}\right).
\end{align*}

Multiplying both sides by $n$:
\begin{align*}
    \mathbb{P}\left(Y_n - Z > \sqrt{2(\sigma_Y^2 + \sigma_Z^2)\log(n^s\zeta(s)/\delta)}\right).
\end{align*}

For a Gaussian random variable $X \sim \mathcal{N}(0, \sigma^2)$, we have the standard tail bound:
\begin{align*}
    \mathbb{P}(X > t) \leq e^{-t^2/2\sigma^2}.
\end{align*}

Applying this bound with $X = Y_n - Z$, $\sigma^2 = \sigma_Y^2 + \sigma_Z^2$, and $t = \sqrt{2(\sigma_Y^2 + \sigma_Z^2)\log(n^s\zeta(s)/\delta)}$:
\begin{align*}
    \mathbb{P}(Y_n - Z > t) & \leq e^{-t^2/2(\sigma_Y^2 + \sigma_Z^2)}                                             \\
                            & = e^{-2(\sigma_Y^2 + \sigma_Z^2)\log(n^s\zeta(s)/\delta)/2(\sigma_Y^2 + \sigma_Z^2)} \\
                            & = e^{-\log(n^s\zeta(s)/\delta)}                                                      \\
                            & = \frac{\delta}{n^s\zeta(s)}.
\end{align*}

Therefore:
\begin{align*}
    \mathbb{P}\left(\frac{Y_n}{n} - \frac{Z}{n} > C\left(n, \delta\right)\right) \leq \frac{\delta}{n^s\zeta(s)}.
\end{align*}

Summing over all $n$:
\begin{align*}
    \sum_{n=1}^{\infty} \mathbb{P}\left(\frac{Y_n}{n} - \frac{Z}{n} > C\left(n, \delta\right)\right) & \leq \sum_{n=1}^{\infty} \frac{\delta}{n^s\zeta(s)}                 \\
                                                                                                     & = \delta \sum_{n=1}^{\infty} \frac{1}{n^s\zeta(s)}                  \\
                                                                                                     & = \delta \cdot \frac{1}{\zeta(s)} \sum_{n=1}^{\infty} \frac{1}{n^s} \\
                                                                                                     & = \delta \cdot \frac{\zeta(s)}{\zeta(s)}                            \\
                                                                                                     & = \delta.
\end{align*}

This demonstrates that the Gaussian noise implementation with the specified correction function satisfies condition \eqref{ass:correction}.

\section{Correctness of \dpsprt{} for Exponential Families: Proof of Theorem~\ref{thm:dpsprt-correctness}}
\label{appendix:correctness-proofs}

In this appendix, we provide the detailed proof of Theorem~\ref{thm:dpsprt-correctness}, which establishes the error guarantees of our \dpsprt{}. This result is central to our paper as it formally demonstrates that the algorithm achieves the desired privacy-utility trade-off while maintaining statistical validity.

\begin{proof}
    \textbf{Type I Error.} We prove the bound on type I error first. Let $\gamma \in (0,1)$ be an arbitrary parameter that allocates the error probability.
    \begin{align*}
        \mathbb{P}_{\theta_0} \big( \hat{d} = 1 \big) & \leq \mathbb{P}_{\theta_0} \big( \tau_1 < \infty \big)                                                                                                  \\
                                                      & = \mathbb{P}_{\theta_0}\bigg(\exists n \in \mathbb{N}_{>0}:\ \frac{\sum_{i=1}^n X_i}{n} - \mu_1 + \frac{Y_n}{n} \geq                                    \\
                                                      & \qquad\frac{-\KL(\nu_{\theta_1}, \nu_{\theta_0})+\log(1/(\gamma\alpha))/n}{\theta_1-\theta_0} + \frac{Z}{n} + C\left(n, (1-\gamma)\alpha\right) \bigg).
    \end{align*}

    By the union bound, we can separate this into two events:
    \begin{align*}
        \mathbb{P}_{\theta_0} \big( \hat{d} = 1 \big) & \leq \mathbb{P}_{\theta_0}\bigg(\exists n \in \mathbb{N}_{>0}:\ \frac{\sum_{i=1}^n X_i}{n} - \mu_1 \geq \frac{-\KL(\nu_{\theta_1}, \nu_{\theta_0})+\log(1/(\gamma\alpha))/n}{\theta_1-\theta_0}\bigg) \\
                                                      & \quad + \mathbb{P}_{\theta_0}\bigg(\exists n \in \mathbb{N}_{>0}:\ \frac{Y_n}{n} - \frac{Z_1}{n} \geq C\left(n, (1-\gamma)\alpha\right)\bigg).
    \end{align*}

    For the first term, we can recognize the probability of eventually rejecting $H_0$ in the non-private SPRT. Let us define $\tau_{\text{SPRT}}$ as the first time the likelihood ratio exceeds the threshold for rejecting $H_0$:
    \begin{align*}
        \tau_{\text{SPRT}} = \inf\bigg\{n \in \mathbb{N}_{>0} : \log\bigg(\prod_{i=1}^n \frac{f_{\theta_1}(X_i)}{f_{\theta_0}(X_i)}\bigg) \geq \log\bigg(\frac{1}{\gamma\alpha}\bigg)\bigg\}.
    \end{align*}

    For exponential families, this is equivalent to:
    \begin{align*}
        \tau_{\text{SPRT}} = \inf\bigg\{n \in \mathbb{N}_{>0} : \frac{\sum_{i=1}^n X_i}{n} - \mu_1 \geq \frac{-\KL(\nu_{\theta_1}, \nu_{\theta_0})+\log(1/(\gamma\alpha))/n}{\theta_1-\theta_0}\bigg\}.
    \end{align*}

    Thus, the first term equals $\mathbb{P}_{\theta_0}(\tau_{\text{SPRT}} < \infty)$, which is the probability under $\theta_0$ that we eventually reject $H_0$ in the non-private SPRT. Let $L_n = \prod_{i=1}^n \frac{f_{\theta_1}(X_i)}{f_{\theta_0}(X_i)}$ be the likelihood ratio.

    Applying Wald's likelihood ratio identity (i.e., a change-of-measure argument) to this specific stopping event, we get
    \begin{align*}
        \mathbb{P}_{\theta_0}(\tau_{\text{SPRT}} < \infty) & = \mathbb{E}_{\theta_1}\left[\frac{1}{L_{\tau_{\text{SPRT}}}} \cdot \mathbf{1}_{\{\tau_{\text{SPRT}} < \infty\}}\right].
    \end{align*}

    By definition of $\tau_{\text{SPRT}}$, on the event $\{\tau_{\text{SPRT}} < \infty\}$, we have $L_{\tau_{\text{SPRT}}} \geq \frac{1}{\gamma\alpha}$, which implies $\frac{1}{L_{\tau_{\text{SPRT}}}} \leq \gamma\alpha$. Therefore,
    \begin{align*}
        \mathbb{P}_{\theta_0}(\tau_{\text{SPRT}} < \infty) & = \mathbb{E}_{\theta_1}\left[\frac{1}{L_{\tau_{\text{SPRT}}}} \cdot \mathbf{1}_{\{\tau_{\text{SPRT}} < \infty\}}\right] \\
                                                           & \leq \mathbb{E}_{\theta_1}\left[\gamma\alpha \cdot \mathbf{1}_{\{\tau_{\text{SPRT}} < \infty\}}\right]                  \\
                                                           & = \gamma\alpha \cdot \mathbb{P}_{\theta_1}(\tau_{\text{SPRT}} < \infty)                                                 \\
                                                           & \leq \gamma\alpha.
    \end{align*}

    For the second term, using condition \eqref{ass:correction} and a union bound,
    \begin{align*}
        \mathbb{P}_{\theta_0}\bigg(\exists n \in \mathbb{N}_{>0}:\ \frac{Y_n}{n} - \frac{Z_1}{n} \geq C\left(n, (1-\gamma)\alpha\right)\bigg) & \leq \sum_{n=1}^{\infty}\mathbb{P}_{\theta_0}\bigg(\frac{Y_n}{n} - \frac{Z_1}{n} \geq C\left(n, (1-\gamma)\alpha\right)\bigg) \\
                                                                                                                                              & \leq (1-\gamma)\alpha.
    \end{align*}

    Therefore
    \begin{align*}
        \mathbb{P}_{\theta_0}\big( \hat{d} = 1 \big) & \leq \gamma\alpha + (1-\gamma)\alpha \\
                                                     & = \alpha.
    \end{align*}

    \textbf{Type II Error.} We now prove the bound on type II error. We have
    \begin{align*}
        \mathbb{P}_{\theta_1} ( \hat{d} = 0 ) & \leq \mathbb{P}_{\theta_1} ( \tau_0 < \infty )                                                                                                  \\
                                              & = \mathbb{P}_{\theta_1}\bigg(\exists n \in \mathbb{N}_{>0}: \frac{\sum_{i=1}^n X_i}{n} - \mu_0 + \frac{Y_n}{n} \leq                             \\
                                              & \qquad\qquad \frac{\KL(\nu_{\theta_0}, \nu_{\theta_1})-\log(1/(\gamma\beta))/n}{\theta_1-\theta_0} - \frac{Z}{n} - C(n, (1-\gamma)\beta) \bigg)
    \end{align*}

    By the union bound, we can separate this into two events:
    \begin{align*}
        \mathbb{P}_{\theta_1} ( \hat{d} = 0 ) & \leq \mathbb{P}_{\theta_1}\bigg(\exists n \in \mathbb{N}_{>0}: \frac{\sum_{i=1}^n X_i}{n} - \mu_0 \leq \frac{\KL(\nu_{\theta_0}, \nu_{\theta_1})-\log(1/(\gamma\beta))/n}{\theta_1-\theta_0}\bigg) \\
                                              & \quad + \mathbb{P}_{\theta_1}\bigg(\exists n \in \mathbb{N}_{>0}: \frac{Y_n}{n} + \frac{Z}{n} \leq -C(n, (1-\gamma)\beta)\bigg)
    \end{align*}

    For the first term, we recognize the probability of eventually accepting $H_0$ in the non-private SPRT when the truth is $H_1$. By a similar change-of-measure argument as for type I error, this probability is bounded by $\gamma\beta$.

    For the second term, using condition \eqref{ass:correction2}, we have:
    \begin{align*}
        \mathbb{P}_{\theta_1}\bigg(\exists n \in \mathbb{N}_{>0}: \frac{Y_n}{n} + \frac{Z}{n} \leq -C(n, (1-\gamma)\beta)\bigg) & \leq \sum_{n=1}^{\infty}\mathbb{P}_{\theta_1}\bigg(\frac{Y_n}{n} + \frac{Z}{n} < -C(n, (1-\gamma)\beta)\bigg) \\
                                                                                                                                & \leq (1-\gamma)\beta
    \end{align*}

    Therefore $\mathbb{P}_{\theta_1}( \hat{d} = 0 ) \leq \gamma\beta + (1-\gamma)\beta = \beta$.

    Finally, we note that if the distribution $\mathcal{D}_Y$ is symmetric around zero and condition \eqref{ass:correction} holds, then condition \eqref{ass:correction2} is automatically satisfied. This is because:
    \begin{align*}
        \sum_{n=1}^{\infty} \mathbb{P}\left(\frac{Y_n}{n} + \frac{Z}{n} < -C(n,\delta)\right) & = \sum_{n=1}^{\infty} \mathbb{P}\left(-\frac{Y_n}{n} - \frac{Z}{n} > C(n,\delta)\right)            \\
                                                                                              & = \sum_{n=1}^{\infty} \mathbb{P}\left(\frac{Y_n}{n} - \frac{Z}{n} > C(n,\delta)\right) \leq \delta
    \end{align*}
    where the second equality uses the symmetry of $Y_n$ around zero.
\end{proof}
\section{Sample Complexity of \dpsprt{} for Bernoulli: Proof of Theorem~\ref{thm:sc_general}}
\label{appendix:sample-complexity-proofs}

We begin with the proof of the generic sample complexity result (Theorem~\ref{thm:sc_general}), which provides an upper bound on the expected stopping time for any noise distribution satisfying appropriate conditions.

\begin{proof}[Proof of Theorem~\ref{thm:sc_general}]
    We first derive the upper bound on $\bE_{\theta_0}[\tau]$. Using the formula for the expectation of a non-negative integer-valued random variable, {\small
    \begin{align*}
        & \bE_{\theta_0}[\tau] \leq \bE_{\theta_0}[\tau_0] = 1 + \sum_{n=1}^{\infty} \bP_{\theta_0}(\tau_0 > n) \\
        &\leq 1 + \sum_{n=1}^{\infty} \bP_{\theta_0}\left(\frac{\sum_{i=1}^n X_i}{n} +\frac{Y_n}{n} > \mu_0 + \frac{\KL(\nu_{\theta_0}, \nu_{\theta_1})}{\theta_1-\theta_0} - \frac{\log(1/(\gamma\beta))/n}{\theta_1-\theta_0} + \frac{Z}{n} -C(n,(1-\gamma)\beta)\right) \\
        &\leq 1 + (1-\gamma)\beta + \sum_{n=1}^{\infty} \bP_{\theta_0}\left(\frac{\sum_{i=1}^n X_i}{n} > \mu_0 + \frac{\KL(\nu_{\theta_0}, \nu_{\theta_1})}{\theta_1-\theta_0} - \frac{\log(1/(\gamma\beta))/n}{\theta_1-\theta_0} - 2C(n,(1-\gamma)\beta)\right),
    \end{align*}}
    
   where in the last inequality we have used a union bound together with the assumption that $\sum_{n=1}^{\infty} \mathbb{P}\left(\frac{Y_n}{n} - \frac{Z}{n} > C(n,\delta)\right) \leq \delta$.

    Using the definition of $N(\theta_0,\theta_1,\beta,\gamma)$, which is the minimum number of samples needed to ensure that
    \[\frac{\log(1/(\gamma\beta))/n}{\theta_1-\theta_0} + 2C(n,(1-\gamma)\beta) \leq \frac{1}{2}\frac{\KL(\nu_{\theta_0}, \nu_{\theta_1})}{\theta_1 - \theta_0}\]
we can split the sum and write
    \begin{eqnarray*}
        \bE_{\theta_0}[\tau]  &\leq & 1 + (1-\gamma)\beta + N(\theta_0,\theta_1,\beta,\gamma) + \sum_{n=N_0}^{\infty} \bP_{\theta_0}\left(\frac{\sum_{i=1}^n X_i}{n}  >  \mu_0 + \frac{1}{2}\frac{\KL(\nu_{\theta_0}, \nu_{\theta_1})}{\theta_1 - \theta_0}\right) \\
        & \leq & 1 + (1-\gamma)\beta + N(\theta_0,\theta_1,\beta,\gamma) + \sum_{n=1}^{\infty} \exp\left(-2 \left(\frac{1}{2}\frac{\KL(\nu_{\theta_0}, \nu_{\theta_1})}{\theta_1 - \theta_0}\right)^2 n\right),
    \end{eqnarray*}
    where the last step uses Hoeffding's inequality, as Bernoulli distributions are bounded in $[0,1]$. Computing the series on the right-hand side and using Pinsker's inequality yields the bounded stated in Theorem~\ref{thm:sc_general}.
    
    We now upper bound  $\bE_{\theta_1}[\tau]$ following similar lines:
    \begin{align*}
        & \bE_{\theta_1}[\tau] \leq \bE_{\theta_1}[\tau_1] = 1 + \sum_{n=1}^{\infty} \bP_{\theta_1}(\tau_1 > n) \\
        &\leq 1 + \sum_{n=1}^{\infty} \bP_{\theta_1}\left(\frac{\sum_{i=1}^n X_i}{n} +\frac{Y_n}{n} < \mu_1 - \frac{\KL(\nu_{\theta_1}, \nu_{\theta_0})}{\theta_1-\theta_0} + \frac{\log(1/(\gamma\alpha))/n}{\theta_1-\theta_0} + \frac{Z}{n} +C(n,(1-\gamma)\alpha)\right) \\
                &\leq 1 + (1-\gamma)\alpha + \sum_{n=1}^{\infty} \bP_{\theta_1}\left(\frac{\sum_{i=1}^n X_i}{n} < \mu_1 - \frac{\KL(\nu_{\theta_1}, \nu_{\theta_0})}{\theta_1-\theta_0} + \frac{\log(1/(\gamma\alpha))/n}{\theta_1-\theta_0} +2C(n,(1-\gamma)\alpha)\right) \\
    \end{align*}
    \hspace{-0.2cm} where in the last inequality we have used that $\sum_{n=1}^{\infty} \mathbb{P}\left(\frac{Z}{n} - \frac{Y_n}{n} > C(n,\delta)\right) \leq \delta$.

    Using this time the definition of $N(\theta_1,\theta_0,\alpha,\gamma)$ yields 
        \begin{align*}
         \bE_{\theta_1}[\tau] & \leq 1 + (1-\gamma)\alpha + N(\theta_1,\theta_0,\alpha,\gamma) + \sum_{n=1}^{\infty} \bP_{\theta_1}\left(\frac{\sum_{i=1}^n X_i}{n} < \mu_1 - \frac{1}{2}\frac{\KL(\nu_{\theta_1}, \nu_{\theta_0})}{\theta_1-\theta_0}\right) \\
       &\leq 1 + (1-\gamma)\alpha + N(\theta_1,\theta_0,\alpha,\gamma) + \sum_{n=1}^{\infty} \exp\left(-2 \left(\frac{1}{2}\frac{\KL(\nu_{\theta_1}, \nu_{\theta_0})}{\theta_1 - \theta_0}\right)^2 n\right)
    \end{align*}
    The final bound on $\bE_{\theta_1}[\tau]$ follows by computing the last series and applying Pinsker's inequality.
    
\end{proof}

\subsection{Analysis for Laplace Noise}

For \dpsprt{} with Laplace noise, the correction function is:
\[C(n,\delta) = \frac{6\log(n^s\zeta(s)/2\delta)}{n\varepsilon}\;.\]

To upper bound $N(\theta_0,\theta_1,\beta,\gamma)$, we consider $n$ such that:
\[\frac{\log(1/(\gamma\beta))}{\theta_1-\theta_0}\frac{1}{n} + \frac{12\log(n^s\zeta(s)/2(1-\gamma)\beta)}{n\varepsilon} \leq \frac{1}{2}\frac{\KL(\nu_{\theta_0}, \nu_{\theta_1})}{\theta_1 - \theta_0},\]

or equivalently:
\[\frac{2\log(1/(\gamma\beta))}{\KL(\nu_{\theta_0}, \nu_{\theta_1})} + \frac{24(\theta_1 - \theta_0)\log(\zeta(s)/2(1-\gamma)\beta)}{\KL(\nu_{\theta_0}, \nu_{\theta_1})\varepsilon} + \frac{24s(\theta_1 - \theta_0)}{\KL(\nu_{\theta_0}, \nu_{\theta_1})\varepsilon}\log(n) \leq n.\]

Using Lemma \ref{lem:simplify-ineq} stated below, such an $n$, and therefore $N(\theta_0,\theta_1,\beta,\gamma)$, is upper bounded by:

\begin{eqnarray*}
    \frac{2\log(1/(\gamma\beta))}{\KL(\nu_{\theta_0}, \nu_{\theta_1})} &+ &\frac{24(\theta_1 - \theta_0)\log(\zeta(s)/2(1-\gamma)\beta)}{\KL(\nu_{\theta_0}, \nu_{\theta_1})\varepsilon} \\ &+& \frac{24s(\theta_1 - \theta_0)}{\KL(\nu_{\theta_0}, \nu_{\theta_1})\varepsilon}\log\left(\left[\frac{24s(\theta_1 - \theta_0)}{\KL(\nu_{\theta_0}, \nu_{\theta_1})\varepsilon}\right]^2 +  \frac{4\log(1/(\gamma\beta))}{\KL(\nu_{\theta_0}, \nu_{\theta_1})}\right.\\ && \qquad\qquad\qquad\qquad + \left.\frac{48(\theta_1 - \theta_0)\log(\zeta(s)/2(1-\gamma)\beta)}{\KL(\nu_{\theta_0}, \nu_{\theta_1})\varepsilon}\right).
\end{eqnarray*}

Hence it follows from Theorem~\ref{thm:sc_general} that 

\begin{eqnarray*}
   \bE_{\theta_0}[\tau] & \leq & \frac{2\log(1/(\gamma\beta))}{\KL(\nu_{\theta_0}, \nu_{\theta_1})} + \frac{24(\theta_1 - \theta_0)\log(\zeta(s)/2(1-\gamma)\beta)}{\KL(\nu_{\theta_0}, \nu_{\theta_1})\varepsilon} \\ &+& \frac{24s(\theta_1 - \theta_0)}{\KL(\nu_{\theta_0}, \nu_{\theta_1})\varepsilon}\log\left(\left[\frac{24s(\theta_1 - \theta_0)}{\KL(\nu_{\theta_0}, \nu_{\theta_1})\varepsilon}\right]^2 +  \frac{4\log(1/(\gamma\beta))}{\KL(\nu_{\theta_0}, \nu_{\theta_1})} + \frac{48(\theta_1 - \theta_0)\log(\frac{\zeta(s)}{2(1-\gamma)\beta})}{\KL(\nu_{\theta_0}, \nu_{\theta_1})\varepsilon}\right) \\ & +& 1 + (1-\gamma)\beta + \frac{1}{1 - \exp\left(-\frac{(\mu_1-\mu_0)^4}{2(\theta_1-\theta_0)^2}\right)}\;.
\end{eqnarray*}

A similar bound can be obtained for $\bE_{\theta_1}[\tau]$ by upper bounding $N(\theta_1,\theta_0,\alpha,\gamma)$. 

Looking at the leading terms in these bounds when $\alpha$ and $\beta$ go to zero yields 
\[\limsup_{\beta \rightarrow 0} \frac{\bE_{\theta_0}\left[\tau\right]}{\log(1/\beta)} \leq 48 \max\left(\frac{1}{\KL(\nu_{\theta_0}, \nu_{\theta_1})}, \frac{\theta_1 - \theta_0}{\KL(\nu_{\theta_0}, \nu_{\theta_1}) \varepsilon}\right)\]
and 
\[\limsup_{\alpha \rightarrow 0} \frac{\bE_{\theta_1}\left[\tau\right]}{\log(1/\alpha)} \leq 48 \max\left(\frac{1}{\KL(\nu_{\theta_1}, \nu_{\theta_0})}, \frac{\theta_1 - \theta_0}{\KL(\nu_{\theta_1}, \nu_{\theta_0}) \varepsilon}\right).\]

\begin{lemma}[Lemma 19 from \cite{Tirinzoni22EPRL}]\label{lem:simplify-ineq}
    Let $B,C\geq 1$. If $k \leq B\log(k)+C$, then
    \begin{align*}
        k
         & \leq B \log(B^2 + 2C)+C.
    \end{align*}
\end{lemma}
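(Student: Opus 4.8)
The plan is to turn the transcendental hypothesis into a one‑variable elementary inequality by a monotonicity argument. Set $g(k) \defn k - B\log k - C$. Since $g'(k) = 1 - B/k$ is negative on $(0,B)$ and positive on $(B,\infty)$, the function $g$ is strictly decreasing on $(0,B]$ and strictly increasing on $[B,\infty)$, and the hypothesis $k \le B\log k + C$ is exactly $g(k)\le 0$. Write $k^{\star} \defn B\log(B^2+2C)+C$ for the claimed bound. I would first note $k^{\star} > B$: because $B,C\ge 1$ we have $B^2+2C\ge 3 > e$, hence $\log(B^2+2C) > 1$ and $k^{\star} > B + C > B$. Thus if $k\le B$ there is nothing to prove, and if $k > B$ then $k$ and $k^{\star}$ both lie in $[B,\infty)$, where $g$ is increasing; so it suffices to show $g(k^{\star})\ge 0$, since then $g(k)\le 0\le g(k^{\star})$ forces $k\le k^{\star}$.

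Next I would simplify the condition $g(k^{\star})\ge 0$. Substituting $k^{\star}$ and cancelling the two copies of $C$,
\[
g(k^{\star}) = B\log(B^2+2C) - B\log\!\bigl(B\log(B^2+2C)+C\bigr) = B\bigl[\log(B^2+2C) - \log k^{\star}\bigr],
\]
so $g(k^{\star})\ge 0$ is equivalent to $B^2+2C\ge k^{\star}$, i.e.\ to the closed inequality
\[
B^2 + C \ \ge\ B\log(B^2+2C), \qquad B,C\ge 1 .
\]
This is the only genuinely analytic step, and the one I expect to demand the most care. I would first remove $C$: the map $C\mapsto B^2+C-B\log(B^2+2C)$ has derivative $1 - 2B/(B^2+2C)\ge 0$, since $B^2+2C\ge B^2+2\ge 2B$ for all $B$; hence it is nondecreasing in $C$ and its infimum over $C\ge 1$ is attained at $C=1$. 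It therefore remains to prove $h(B)\defn B^2+1-B\log(B^2+2)\ge 0$ for $B\ge 1$, equivalently $B+1/B\ge\log(B^2+2)$. For $B\in[1,\sqrt{2}]$ this follows from $B+1/B\ge 2 > \log 4 \ge \log(B^2+2)$ (AM--GM for the first step, and $B^2+2\le 4$). For $B\ge\sqrt{2}$ one bounds $\log(B^2+2)\le\log(2B^2)=\log 2 + 2\log B$ and verifies $\varphi(B)\defn B + 1/B - \log 2 - 2\log B \ge 0$: a short derivative computation gives $\varphi'(B)=(B^2-2B-1)/B^2$, so $\varphi$ decreases on $[\sqrt{2},\,1+\sqrt{2}]$ and increases afterwards, whence $\varphi\ge\varphi(1+\sqrt{2})=2\sqrt{2}-\log 2-2\log(1+\sqrt{2})>0$.

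Putting the pieces together, $g(k^{\star})\ge 0$ holds, and by monotonicity of $g$ on $[B,\infty)$ the hypothesis forces $k\le k^{\star}=B\log(B^2+2C)+C$, which is the claim. I expect the structural facts about $g$ (its shape, the bound $k^{\star}>B$) to be routine; the real work is the closed inequality $B^2+C\ge B\log(B^2+2C)$, which needs the reduction to $C=1$ followed by a short case split according to whether $B\le\sqrt{2}$ or $B\ge\sqrt{2}$.
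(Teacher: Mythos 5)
Your proof is correct. Note that the paper itself offers no proof of this lemma: it is imported verbatim as Lemma~19 of \cite{Tirinzoni22EPRL}, so there is no in-paper argument to compare against. Your argument is a valid, self-contained derivation: the monotonicity analysis of $g(k)=k-B\log k - C$ is sound, the reduction of $g(k^\star)\ge 0$ to the closed inequality $B^2+C\ge B\log(B^2+2C)$ is an exact equivalence, and the two-stage verification (monotone in $C$, then the case split at $B=\sqrt{2}$ with the minimum of $\varphi$ at $B=1+\sqrt{2}$ giving $2\sqrt{2}-\log 2-2\log(1+\sqrt{2})\approx 0.37>0$) checks out numerically and symbolically. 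For reference, the proof usually given for this type of inversion lemma is a two-step bootstrap: first derive the crude bound $k\le B^2+2C$ from the hypothesis (via $\log k\le \tfrac{2}{e}\sqrt{k}$ and solving the resulting quadratic in $\sqrt{k}$), then substitute that bound back into $k\le B\log k + C$ using monotonicity of the logarithm. That route avoids any calculus on an auxiliary function, at the cost of the intermediate quadratic estimate; your route trades that for the elementary but slightly fiddly inequality $B+1/B\ge\log(B^2+2)$. Either is acceptable, and yours is complete as written.
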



%
%

\subsection{Analysis for Gaussian Noise}

For \dpsprt{} with Gaussian noise, the correction function is:
\[C(n,\delta) = \frac{\sqrt{2(\sigma_Y^2 + \sigma_Z^2)\log(n^s\zeta(s)/2\delta)}}{n}\;.\]

To upper bound $N(\theta_0,\theta_1,\beta,\gamma)$, we need to find the minimum $n$ such that:
\[\frac{\log(1/(\gamma\beta))/n}{\theta_1-\theta_0} + 2C(n,(1-\gamma)\beta) \leq \frac{1}{2}\frac{\KL(\nu_{\theta_0}, \nu_{\theta_1})}{\theta_1 - \theta_0}.\]

It is sufficient to enforce that each term on the left-hand side is less than half of the right-hand side:

\begin{align*}
    \frac{\log(1/(\gamma\beta))/n}{\theta_1-\theta_0} & \leq \frac{1}{4}\frac{\KL(\nu_{\theta_0}, \nu_{\theta_1})}{\theta_1 - \theta_0}, \\
    2C(n,(1-\gamma)\beta)                             & \leq \frac{1}{4}\frac{\KL(\nu_{\theta_0}, \nu_{\theta_1})}{\theta_1 - \theta_0}.
\end{align*}

From the first condition, we directly get:
\[n \geq \frac{4\log(1/(\gamma\beta))}{\KL(\nu_{\theta_0}, \nu_{\theta_1})}.\]

For the second condition, substituting the correction function:
\[2 \cdot \frac{\sqrt{2(\sigma_Y^2 + \sigma_Z^2)\log(n^s\zeta(s)/2(1-\gamma)\beta)}}{n} \leq \frac{1}{4}\frac{\KL(\nu_{\theta_0}, \nu_{\theta_1})}{\theta_1 - \theta_0}.\]

Let $\mathfrak{C} = \frac{\KL(\nu_{\theta_0}, \nu_{\theta_1})}{\theta_1 - \theta_0}$. Squaring both sides and making the change of variable $m = n^2$:
\[8(\sigma_Y^2 + \sigma_Z^2)\log(m^{s/2}\zeta(s)/2(1-\gamma)\beta) \leq \frac{m \mathfrak{C}^2}{16}.\]

Rearranging:
\[m \geq \frac{128(\sigma_Y^2 + \sigma_Z^2)}{\mathfrak{C}^2}\log(m^{s/2}\zeta(s)/2(1-\gamma)\beta).\]

This is of the form $k \geq B\log(k) + C$ where:
\begin{align*}
    k & = m,                                                                                  \\
    B & = \frac{64s(\sigma_Y^2 + \sigma_Z^2)}{\mathfrak{C}^2},                                \\
    C & = \frac{128(\sigma_Y^2 + \sigma_Z^2)\log(\zeta(s)/2(1-\gamma)\beta)}{\mathfrak{C}^2}.
\end{align*}

Using Lemma \ref{lem:simplify-ineq}:
\[m \geq B\log(B^2 + 2C) + C.\]

Converting back to $n$ with $m = n^2$:
{\footnotesize
\begin{align*}
    n & \geq \frac{8\sqrt{(\sigma_Y^2 + \sigma_Z^2)}}{|\mathfrak{C}|}                                                                                                                                                             \\
      & \quad \cdot \sqrt{s\log\left(\frac{(64s)^2(\sigma_Y^2 + \sigma_Z^2)^2}{\mathfrak{C}^4} + \frac{256(\sigma_Y^2 + \sigma_Z^2)\log(\zeta(s)/2(1-\gamma)\beta)}{\mathfrak{C}^2}\right) + 2\log(\zeta(s)/2(1-\gamma)\beta)}\;.
\end{align*}}

Therefore, $N(\theta_0,\theta_1,\beta,\gamma)$ is upper bounded by the maximum of:
\[\frac{4\log(1/(\gamma\beta))}{\KL(\nu_{\theta_0}, \nu_{\theta_1})}\]
and
    {\footnotesize
        \begin{align*}
             & \frac{8\sqrt{(\sigma_Y^2 + \sigma_Z^2)}}{|\mathfrak{C}|}                                                                                                                                                                  \\
             & \quad \cdot \sqrt{s\log\left(\frac{(64s)^2(\sigma_Y^2 + \sigma_Z^2)^2}{\mathfrak{C}^4} + \frac{256(\sigma_Y^2 + \sigma_Z^2)\log(\zeta(s)/2(1-\gamma)\beta)}{\mathfrak{C}^2}\right) + 2\log(\zeta(s)/2(1-\gamma)\beta)}\;.
        \end{align*}}


\section{Lower Bound: Proof of Theorem \ref{thm:lower-bound}}
\label{appendix:lower-bound-proofs}

As the proof could generalize to other than Bernoulli distributions, we denote by $\nu_{\theta_0}$ (resp. $\nu_{\theta_1}$) the distribution of each $X_i$ under $\cH_0$ (resp. $\cH_1$) and by $\bP_{\theta_0}$ the probability space under which $(X_i)_{i\in \N}$ is an iid stream distribution under $\nu_{\theta_0}$ (resp. $\nu_{\theta_1}$).

We let $\bP_{\theta_i}^{Y}$ denote the distribution of a random vector $Y$ under the probability model $\bP_{\theta_i}$. To ease the notation, we will sometimes denote by $X_{1:t}$ the random vector $(X_1,\dots,X_t)$.

We establish the lower bound by combining information-theoretic inequalities with privacy constraints. Let $(\tau,d)$ be a $(\alpha,\beta)$-correct test. We assume that $\tau$ is almost surely finite under both $\bP_{\theta_0}$ and $\bP_{\theta_1}$, otherwise any lower bound on its expectation is trivially true.

\textbf{Step 1: Lower bounding a KL divergence for any $(\alpha,\beta)$-correct test.}
Using the data processing inequality, one can write
\begin{eqnarray*}
    \KL\left({\bP}_{\theta_0}^{(X_{1:\tau},\tau,\hat{d})},{\bP}_{\theta_1}^{(X_{1:\tau},\tau,\hat{d})}\right) &\geq& \KL\left({\bP}_{\theta_0}^{\ind(\hat{d} = 1)},{\bP}_{\theta_1}^{\ind(\hat{d} = 1)}\right) \\
    & = &\kl\left(\bP_{\theta_0}(\hat{d} = 1),\bP_{\theta_1}(\hat{d} = 1)\right)\\
    & \geq& \kl(\alpha, 1-\beta),
\end{eqnarray*}
where the last step uses the correctness of the test and monotonicity properties of the Bernoulli KL divergence, $\kl(x,y) = x\log(x/y) + (1-x)\log((1-x)/(1-y))$.

This argument is quite common to prove lower bounds on regret or sample complexity in the multi-armed bandit literature, and was popularized by, e.g. \cite{GMS16,GK16}.

\textbf{Step 2: Computing the KL for any test.} Still using standard arguments, we can establish the following equality:
\begin{equation}\KL\left({\bP}_{\theta_0}^{(X_{1:\tau},\tau,\hat{d})},{\bP}_{\theta_1}^{(X_{1:\tau},\tau,\hat{d})}\right) = \mathbb{E}_{\theta_0}[\tau] \cdot \KL(\nu_{\theta_0}, \nu_{\theta_1})\;.\label{ineq:all_tests}\end{equation}
For completeness, we provide the proof of this result.
Assuming $\nu_{\theta_i}$ has a density $f_{\theta_i}$ with respect to a common reference measure, we can express:
$$\KL\left({\bP}_{\theta_0}^{(X_{1:\tau},\tau,\hat{d})},{\bP}_{\theta_1}^{(X_{1:\tau},\tau,\hat{d})}\right) = \mathbb{E}_{\theta_0}\left[\log \prod_{i=1}^{\tau} \frac{f_{\theta_0}(X_i)}{f_{\theta_1}(X_i)}\right] = \bE_{\theta_0}\left[\sum_{i=1}^{\tau}\log \frac{f_{\theta_0}(X_i)}{f_{\theta_1}(X_i)}\right].$$
Let $Z_i = \log \frac{\mathbb{P}_{\theta_0}(X_i)}{\mathbb{P}_{\theta_1}(X_i)}$. Since the observations are i.i.d., each $Z_i$ has mean:
$\mathbb{E}_{\theta_0}[Z_i] = \KL(\nu_{\theta_0}, \nu_{\theta_1})$. By Wald's identity \cite{Wald45SPRT} (as $\tau$ has finite expectation), we have
$$\mathbb{E}_{\theta_0}\left[\sum_{i=1}^{\tau} Z_i\right] = \mathbb{E}_{\theta_0}[\tau] \cdot \mathbb{E}_{\theta_0}[Z_1] = \mathbb{E}_{\theta_0}[\tau] \cdot \KL(\mathbb{P}_{\theta_0}, \mathbb{P}_{\theta_1})\;,$$
and we obtain \eqref{ineq:all_tests}.

\textbf{Step 3: Upper Bounding the KL for Private Tests.} Using the privacy constraints, we now prove that for an $\varepsilon$-DP test we further have
\begin{equation}\KL\left({\bP}_{\theta_0}^{(X_{1:\tau},\tau,\hat{d})},{\bP}_{\theta_1}^{(X_{1:\tau},\tau,\hat{d})}\right) \leq \epsilon \cdot \TV(\nu_{\theta_0}, \nu_{\theta_1}) \cdot \mathbb{E}_{\theta_0}[\tau]\;.\label{ineq:using_privacy}\end{equation}
The argument is similar to the proof of Theorem 4.9 from \cite{azize2024privacy}, which applies to a more complex bandit model but with a finite horizon $T$. The idea is to build a coupling between the data collected under $\bP_{\theta_0}$ and $\bP_{\theta_1}$.

Specifically, we let $\underline{\nu}$ be the maximal coupling between $\nu_{\theta_0}$ and $\nu_{\theta_1}$: $(X,X') \sim \underline{\nu}$ is such that the marginal distribution of $X$ is $\nu_{\theta_0}$, the marginal distribution of $X'$ is $\nu_{\theta_1}$ and $\bE_{(X,X') \sim \underline{\nu}} [\ind(X \neq X')] = \TV(\nu_{\theta_0},\nu_{\theta_1})$. We now define the probability space $\bP_{\underline{\nu}}$ in which we collect iid pairs $(X_i,X'_i) \sim \underline{\nu}$ and let $(\tau,\hat{d})$ be the stopping and decision rules of the test based on the iid stream $X_i$ (distributed under $\nu_{\theta_0}$) and $(\tau',\hat{d}')$ be the stopping and decision rules of the test based on the iid stream $X'_i$ (distributed under $\nu_{\theta_1}$).

We observe that the marginal distribution of $(X_{1:\tau},\tau,\hat{d})$ in the probability space $\bP_{\underline{\nu}}$ is $\bP_{\theta_0}^{(X_{1:\tau}, \tau, \hat{d})}$ while that of $(X'_{1:\tau'},\tau',\hat{d}')$ is $\bP_{\theta_1}^{(X_{1:\tau}, \tau, \hat{d})}$. Using again the data-processing inequality,
\begin{eqnarray*}
    \KL\left({\bP}_{\theta_0}^{(X_{1:\tau},\tau,\hat{d})},{\bP}_{\theta_1}^{(X_{1:\tau},\tau,\hat{d})}\right) \leq  \KL\left({\bP}_{\underline{\nu}}^{(X_{1:\tau},X'_{1:\tau},\tau,\hat{d})},{\bP}_{\underline{\nu}}^{(X_{1:\tau'},X'_{1:\tau'},\tau',\hat{d}')}\right)\;.
\end{eqnarray*}


To compute this last KL-divergence, for clarity we will consider the special case of Bernoulli distributions that is the main focus of this paper. In that case the distribution of $(X_{1:\tau},X'_{1:\tau},\tau,\hat{d})$ and $(X_{1:\tau'},X'_{1:\tau'},\tau',\hat{d}')$ are discrete and we have
    {\footnotesize
        \begin{align*}
                 & \KL\left({\bP}_{\underline{\nu}}^{(X_{1:\tau},X'_{1:\tau},\tau,\hat{d})},{\bP}_{\underline{\nu}}^{(X_{1:\tau'},X'_{1:\tau'},\tau',\hat{d}')}\right) \\ = &  \sum_{\substack{T > 0, d \in \{0,1\} \\ x_{1:T} \in \{0,1\}^T \\ x'_{1:T} \in \{0,1\}^T}} \bP_{\underline{\nu}}\left(X_{1:T} = x_{1:T},X'_{1:T} = x'_{1:T}, \tau = T, \hat{d} = d \right) \log \frac{\bP_{\underline{\nu}}\left(X_{1:T} = x_{1:T},X'_{1:T} = x'_{1:T}, \tau = T, \hat{d} = d \right)}{\bP_{\underline{\nu}}\left(X_{1:T} = x_{1:T},X'_{1:T} = x'_{1:T}, \tau' = T, \hat{d}' = d \right)}\\
            \leq & \sum_{\substack{T > 0, d \in \{0,1\}                                                                                                                \\ x_{1:T} \in \{0,1\}^T \\ x'_{1:T} \in \{0,1\}^T}} \bP_{\underline{\nu}}\left(X_{1:T} = x_{1:T},X'_{1:T} = x'_{1:T}, \tau = T, \hat{d} = d \right) \log \frac{\bP_{\underline{\nu}}\left(\tau = T, \hat{d} = d | X_{1:T} = x_{1:T}\right)}{\bP_{\underline{\nu}}\left(\tau' = T, \hat{d}' = d | X'_{1:T} = x'_{1:T}\right)}.
        \end{align*}}
Using the fact that the test is $\varepsilon$-DP we obtain by the group-privacy property that
\[\frac{\bP_{\underline{\nu}}\left(\tau = T, \hat{d} = d | X_{1:T} = x_{1:T}\right)}{\bP_{\underline{\nu}}\left(\tau' = T, \hat{d}' = d | X'_{1:T} = x'_{1:T}\right)} \leq e^{\varepsilon \sum_{t=1}^{T} \ind(x_i \neq x'_i)},\]
hence
\begin{align*}
     & \KL\left({\bP}_{\underline{\nu}}^{(X_{1:\tau},X'_{1:\tau},\tau,\hat{d})},{\bP}_{\underline{\nu}}^{(X_{1:\tau'},X'_{1:\tau'},\tau',\hat{d}')}\right) \\  \leq &  \sum_{\substack{T > 0, d \in \{0,1\} \\ x_{1:T} \in \{0,1\}^T \\ x'_{1:T} \in \{0,1\}^T}} \bP_{\underline{\nu}}\left(X_{1:T} = x_{1:T},X'_{1:T} = x'_{1:T}, \tau = T, \hat{d} = d \right) \varepsilon \sum_{i=1}^{T} \ind(x_i \neq x'_i) \\
     & = \varepsilon\bE_{\underline{\nu}}\left[\sum_{t=1}^{\tau} \ind(X_i \neq X_i')\right]                                                                \\
     & = \varepsilon\bE_{\underline{\nu}}\left[\tau\right] \TV(\nu_{\theta_0},\nu_{\theta_1})                                                              \\
     & = \varepsilon\bE_{\theta_0}\left[\tau\right] \TV(\nu_{\theta_0},\nu_{\theta_1}),
\end{align*}
where the last but one step uses Wald's equality and the fact that $\bE_{\underline{\nu}}\left[ \ind(X_i \neq X_i')\right] = \TV(\nu_{\theta_0},\nu_{\theta_1})$ by definition of the coupling. This proves \eqref{ineq:using_privacy}.

\textbf{Step 4: Putting things together.}
Combining the lower bound with \eqref{ineq:all_tests} and the private upper bound \eqref{ineq:using_privacy}, we have
$$\kl(\alpha, 1-\beta) \leq \KL\left({\bP}_{\theta_0}^{(X_{1:\tau},\tau,\hat{d})},{\bP}_{\theta_1}^{(X_{1:\tau},\tau,\hat{d})}\right)\leq  \min\left(\mathbb{E}_{\theta_0}[\tau] \cdot \KL(\nu_{\theta_0}, \nu_{\theta_1}), \epsilon \cdot \TV(\nu_{\theta_0}, \nu_{\theta_1}) \cdot \mathbb{E}_{\theta_0}[\tau]\right).$$
Rearranging, we obtain:
$$\mathbb{E}_{\theta_0}[\tau] \geq \frac{\kl(\alpha, 1-\beta)}{\min\left(\KL(\nu_{\theta_0}, \nu_{\theta_1}), \epsilon \cdot \TV(\nu_{\theta_0}, \nu_{\theta_1})\right)}\;.$$
The lower bound on $\bE_{\theta_1}[\tau]$ follows in the same way by swapping the roles of $\theta_0$ and $\theta_1$.

\section{Enhancing \dpsprt{} with Subsampling}
\label{appendix:subsampling}

Privacy amplification by subsampling \citep{steinke2022composition} provides an approach to enhance the privacy-utility trade-off in adifferentially private algorithms. When a dataset is randomly subsampled before applying a private mechanism, the effective privacy cost can be reduced roughly proportionally to the sampling rate for pure differentially private mechanisms.

In our DP-SPRT context, subsampling allows us to improve privacy for a given noise level, or equivalently, to maintain the same level of privacy while requiring less noise. At each step when a new observation $X_i$ arrives, we implement Bernoulli subsampling by including it in our computation with probability $r \in (0,1)$, independent of all previous decisions. We maintain a running sum $S_n$ and count $M_n$ of the included observations:

\begin{enumerate}
    \item For each arriving observation $X_i$, include it with probability $r$ (draw $B_i \sim \text{Bernoulli}(r)$ independently);
    \item Update $S_n = \sum_{i=1}^n B_i X_i$ and $M_n = \sum_{i=1}^n B_i$;
    \item Compute the subsampled empirical mean $\bar{X}^r_n = S_n / M_n$ when $M_n > 0$.
\end{enumerate}

We chose this Bernoulli subsampling approach over alternatives like sampling without replacement because it requires only constant space and time complexity at each time step and thus can directly handle streaming observations with limited memory, providing significant computational advantages for long trials or challenging problem instances.

We incorporate subsampling into our \dpsprt{} with Laplace noise by using noise scaled for $\epsilon/r$-DP, since subsampling with rate $r$ amplifies privacy by this factor, resulting in $\epsilon$-DP overall. The subsampling mechanism is defined as follows:

\begin{align*}
    \tau^{\epsilon,r}   & = \min(\tau_0^{\epsilon,r}, \tau_1^{\epsilon,r})                                                                                                                       \\[0.5em]
    \tau_0^{\epsilon,r} & = \inf\bigg\{n \in \mathbb{N}_{>0}:
    \textcolor{red}{\bar{X}^r_n} + \textcolor{red}{r} \frac{Y_n}{n} \leq \mu_0 + \frac{\KL(\nu_{\theta_0}, \nu_{\theta_1})-\log(1/(\gamma\beta))/\textcolor{red}{M_n}}{\theta_1-\theta_0}        \\
                        & \qquad\qquad\qquad\qquad\qquad\qquad\qquad\qquad\, - \textcolor{red}{r} \frac{Z}{n} - \textcolor{red}{r}\frac{6\log(n^s\zeta(s)/(1-\gamma)\beta)}{n\epsilon} \bigg\}   \\[0.5em]
    \tau_1^{\epsilon,r} & = \inf\bigg\{n \in \mathbb{N}_{>0}:
    \textcolor{red}{\bar{X}^r_n} + \textcolor{red}{r} \frac{Y_n}{n} \geq \mu_1 + \frac{-\KL(\nu_{\theta_1}, \nu_{\theta_0})+\log(1/(\gamma\alpha))/\textcolor{red}{M_n}}{\theta_1-\theta_0}      \\
                        & \qquad\qquad\qquad\qquad\qquad\qquad\qquad\qquad\, + \textcolor{red}{r} \frac{Z}{n} + \textcolor{red}{r}\frac{6\log(n^s\zeta(s)/(1-\gamma)\alpha)}{n\epsilon} \bigg\},
\end{align*}

where $M_n = \sum_{i=1}^n B_i$ is the actual count of included observations, $\textcolor{red}{\bar{X}^r_n = S_n/M_n}$ is the subsampled empirical mean, and $Y_n \sim \text{Lap}(4/\epsilon)$, $Z \sim \text{Lap}(2/\epsilon)$ as in standard DP-SPRT. The decision rule remains $\hat{d}^{\epsilon,r} = i$ if $\tau^{\epsilon,r} = \tau_i^{\epsilon,r}$. The terms in \textcolor{red}{red} are the modifications compared to the standard \dpsprt{} algorithm.

\textbf{Privacy and Correctness:} The subsampled DP-SPRT maintains $\epsilon$-differential privacy and satisfies the error bounds $\mathbb{P}_{\theta_0}(\hat{d}^{\epsilon,r} = 1) \leq \alpha$ and $\mathbb{P}_{\theta_1}(\hat{d}^{\epsilon,r} = 0) \leq \beta$. The privacy follows from the subsampling amplification applied to the standard DP-SPRT analysis, while correctness uses appropriately scaled thresholds.

\textbf{Sample Complexity:} For the asymptotic regime where $\alpha, \beta \to 0$, the expected stopping time will be of the form:
\begin{align*}
    \limsup_{\beta \rightarrow 0} \frac{\mathbb{E}_{\theta_0}[\tau_0^{\epsilon,r}]}{\log(1/\beta)} \leq 48\max\left(\frac{1}{r \cdot \KL(\nu_{\theta_0}, \nu_{\theta_1})}, \frac{r \cdot (\theta_1-\theta_0)}{\KL(\nu_{\theta_0}, \nu_{\theta_1}) \epsilon}\right).
\end{align*}

This bound reveals a fundamental trade-off: smaller subsampling rates $r$ lead to worse statistical efficiency—the first term $1/(r \cdot \KL)$ increases due to higher variance of the subsampled empirical mean—but provide better privacy amplification as the second term $r \cdot (\theta_1-\theta_0)/(\KL \cdot \epsilon)$ decreases, reducing the impact of privacy noise. The optimal sampling rate depends on which effect dominates, determined by the privacy parameter $\epsilon$ and how hard the hypotheses are to distinguish. When $\epsilon$ is small and high privacy is required, aggressive subsampling with small $r$ can be beneficial as privacy amplification becomes more valuable. When $\epsilon$ is large and privacy is relaxed, less aggressive subsampling is preferable to maintain statistical efficiency. As demonstrated in our experiments (Section~\ref{sec:experiments}), this trade-off is particularly pronounced in high-privacy regimes.

For practical applications, the bound suggests setting the sampling rate as $r \approx \min(1,\sqrt{\epsilon/(\theta_1-\theta_0)})$ to balance the statistical and privacy terms. However, in our experiments we used the arbitrary choice $r = \min(1, \sqrt{\epsilon/10})$, which already worked very well in practice for the range of parameter values we considered.

\section{Complementary Experimental Analysis}
\label{appendix:complementary-experiments}

This section extends our experimental analysis beyond Section~\ref{sec:experiments}, examining: (1) performance across problem instances of varying difficulty, (2) stopping time distributions, (3) effects of target error probabilities and privacy parameters, and (4) a tuned version of our \dpsprt{} with improved sample complexity.

We evaluate our methods across three problem instances of varying difficulty:
\begin{itemize}
    \item \textbf{Instance 1 (Standard)}: $p_0 = 0.3$, $p_1 = 0.7$ - Well-separated parameters (as in the main paper)
    \item \textbf{Instance 2 (Close)}: $p_0 = 0.45$, $p_1 = 0.55$ - Parameters close to each other, requiring more samples
    \item \textbf{Instance 3 (Boundary)}: $p_0 = 0.05$, $p_1 = 0.25$ - One parameter near boundary, affecting observation variance
\end{itemize}

\subsection{Comparative Performance Across Problem Difficulty}

Figure~\ref{fig:instance_comparison} presents the sample complexity across the three problem instances.

\begin{figure}[ht]
    \centering
    \includegraphics[width=0.8\textwidth]{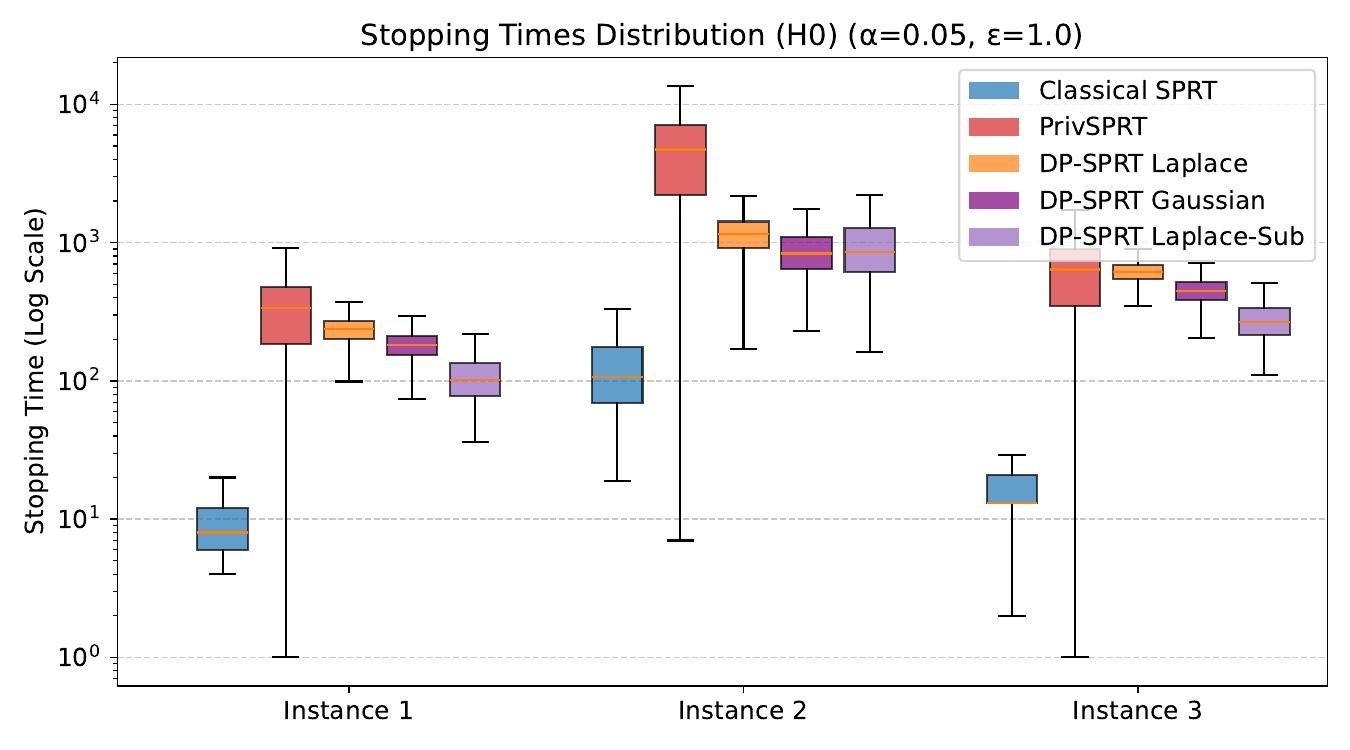}
    \includegraphics[width=0.8\textwidth]{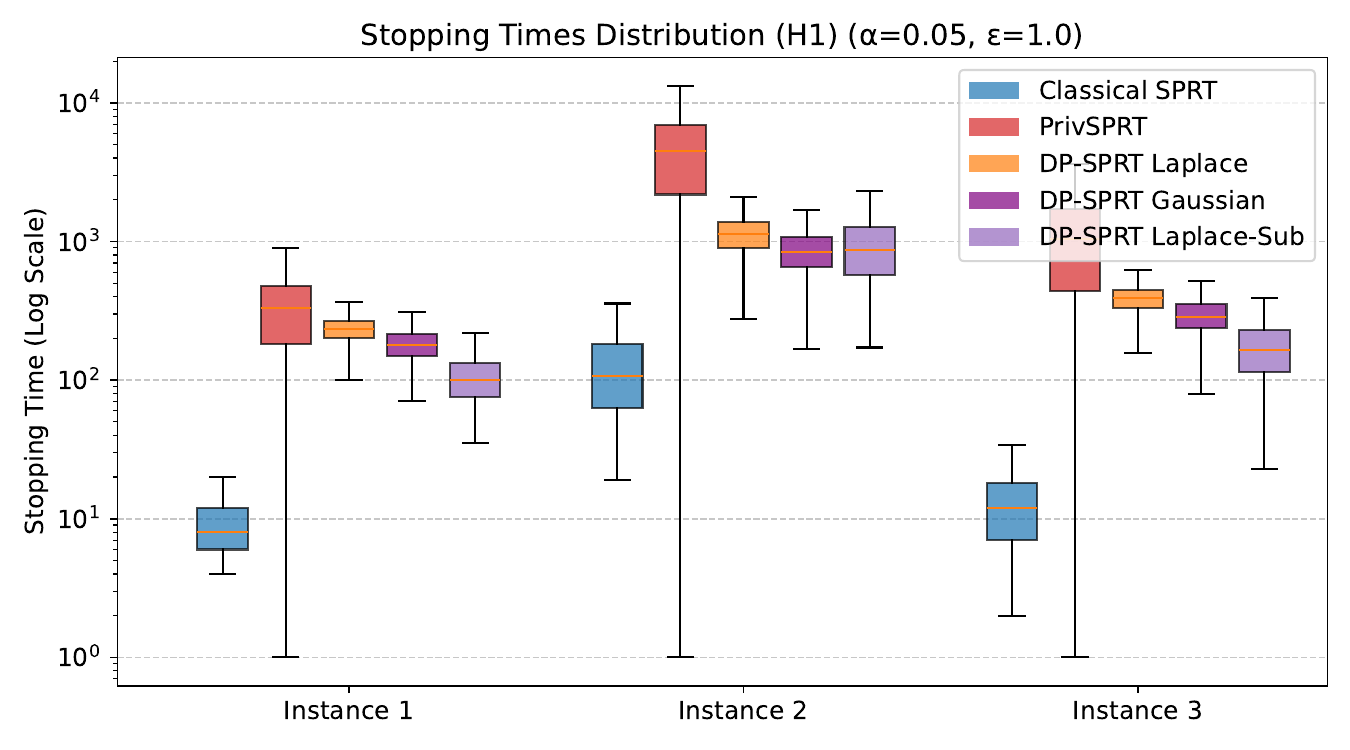}
    \caption{Sample complexity for different methods across problem instances with $\alpha = \beta = 0.05$ and $\epsilon = 1$. Top: Under $\mathcal{H}_0$. Bottom: Under $\mathcal{H}_1$.}
    \label{fig:instance_comparison}
\end{figure}

As difficulty increases from Instance 1 to Instance 2, all methods require more samples. The advantage of our \dpsprt{} variants over PrivSPRT becomes more pronounced in more challenging instances, with subsampling showing particular benefits when parameters are close (Instance 2).

Instance 3 exhibits asymmetry due to $p_0$ being close to zero, with decisions under $\mathcal{H}_0$ requiring fewer samples than under $\mathcal{H}_1$ for PrivSPRT. For \dpsprt, we observe the opposite effect, although it is to a lesser extent. Our methods maintain their advantage in both scenarios, more prominently under $\mathcal{H}_1$.

\subsection{Effect of Target Error Probability}

Figure~\ref{fig:alpha_comparison} shows how sample complexity varies with different target error probabilities.

\begin{figure}[ht]
    \centering
    \includegraphics[width=0.8\textwidth]{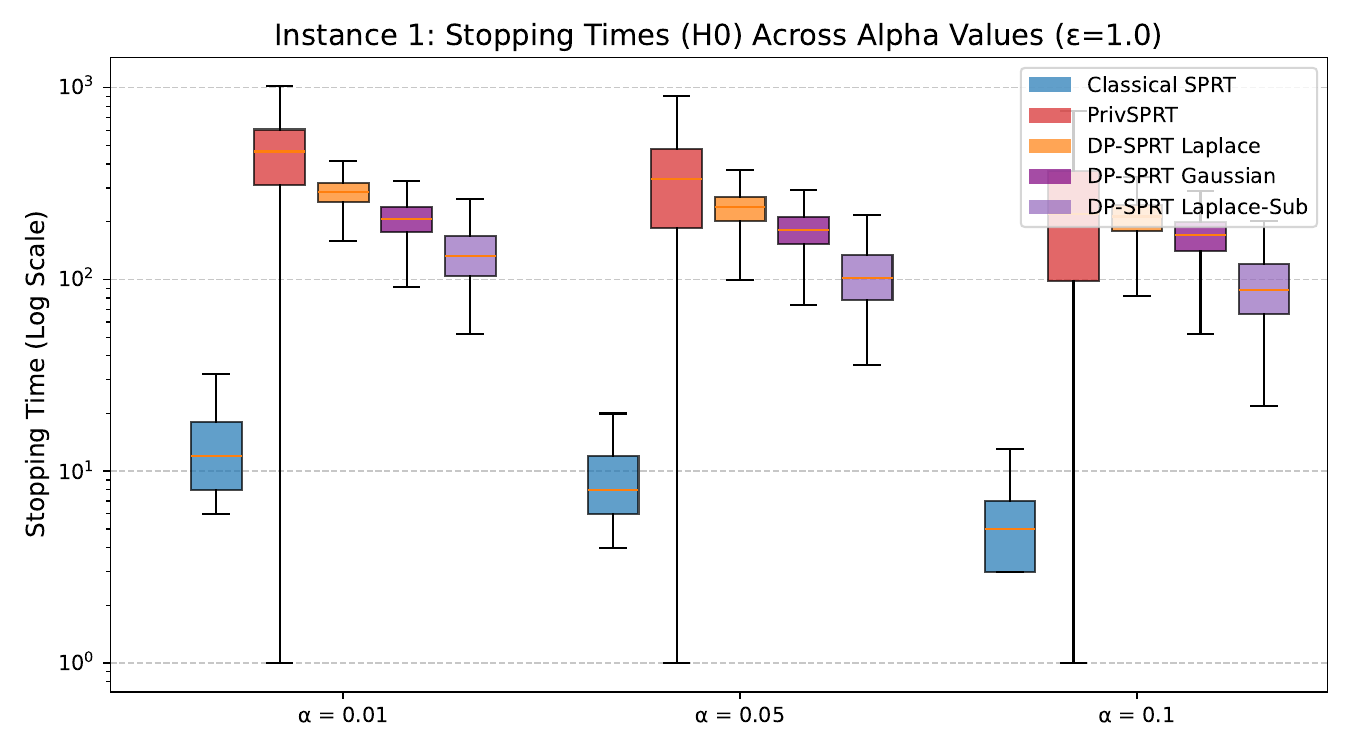}
    \includegraphics[width=0.8\textwidth]{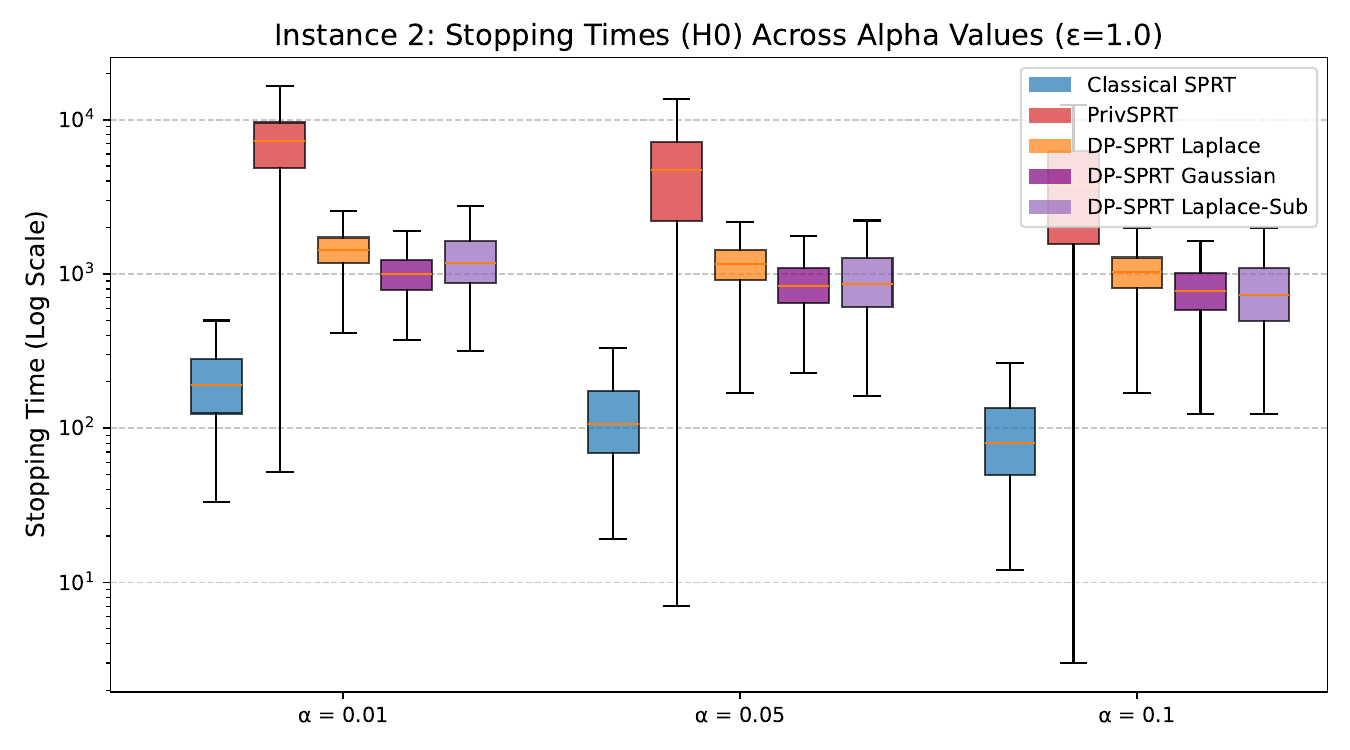}
    \caption{Sample complexity for different methods with varying target error probabilities at $\epsilon = 1$ and $\alpha = \beta \in \{0.01, 0.05, 0.1\}$. Top: Instance 1 under $\mathcal{H}_0$. Bottom: Instance 2 under $\mathcal{H}_0$.}
    \label{fig:alpha_comparison}
\end{figure}

Stricter error requirements (smaller $\alpha$ and $\beta$) predictably increase sample complexity for all methods. At $\alpha = \beta = 0.01$, we observe a wider gap between private and non-private methods.

\subsection{Privacy-Sample Complexity Trade-off}

Figure~\ref{fig:epsilon_comparison} demonstrates the relationship between privacy parameters and sample complexity.

\begin{figure}[ht]
    \centering
    \includegraphics[width=0.8\textwidth]{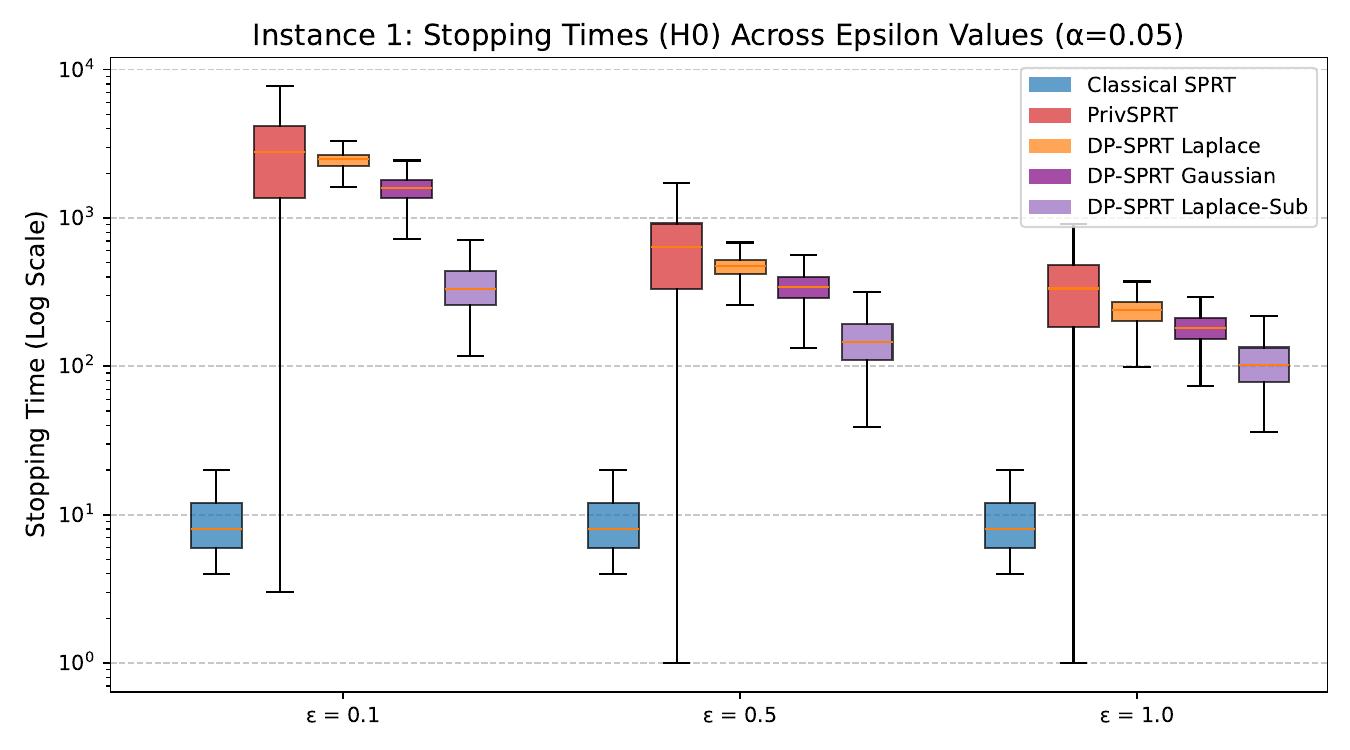}
    \includegraphics[width=0.8\textwidth]{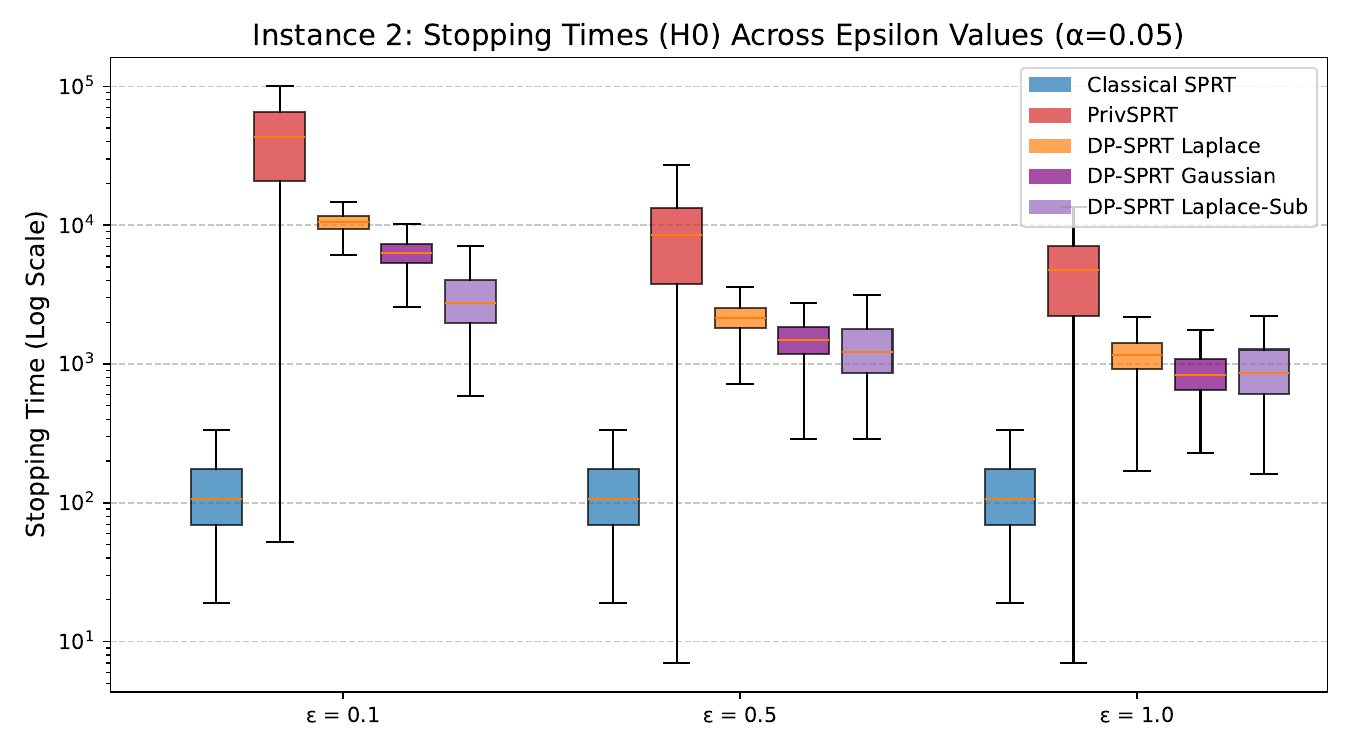}
    \caption{Sample complexity for different privacy parameters with $\alpha = \beta = 0.05$ and $\epsilon \in \{0.1, 1, 5\}$. Top: Instance 1 under $\mathcal{H}_0$. Bottom: Instance 2 under $\mathcal{H}_0$.}
    \label{fig:epsilon_comparison}
\end{figure}

The privacy-utility trade-off is evident as $\epsilon$ varies from 0.1 (high privacy) to 5 (relaxed privacy). At $\epsilon = 0.1$, all private methods require substantially more samples than non-private SPRT, with \dpsprt{} with subsampling maintaining a clear advantage. As privacy constraints relax with increasing $\epsilon$, all \dpsprt{} variants converge toward classical SPRT performance.

PrivSPRT exhibits higher variance in stopping time, particularly at lower $\epsilon$ values. Our \dpsprt{} variants show more consistent performance across trials, beneficial for applications requiring predictable resource allocation.

\subsection{Tuned Correction Function and Distribution of Stopping Times}

Our theoretical correction function $C(n,\delta)$ guarantees desired error probabilities but may be conservative. We explored a tuned version by scaling the function by the factor that maintains error probability targets, estimated through simulation.

The tuning process involves finding a scaling factor $\kappa \in (0,1)$ to replace $C(n,\delta)$ with $\kappa \cdot C(n,\delta)$. For Instance 1 with $\epsilon = 1$ and $\alpha = \beta = 0.1$, we found $\kappa \approx 0.5$ maintains error probabilities below target. However, like PrivSPRT's threshold, this value must be recomputed for new parameters.

\begin{figure}[ht]
    \centering
    \includegraphics[width=\textwidth]{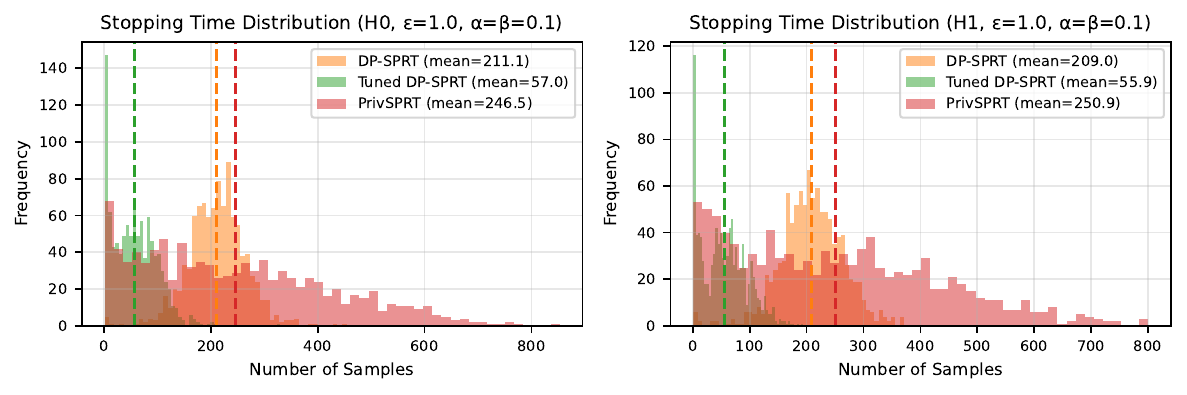}
    \includegraphics[width=\textwidth]{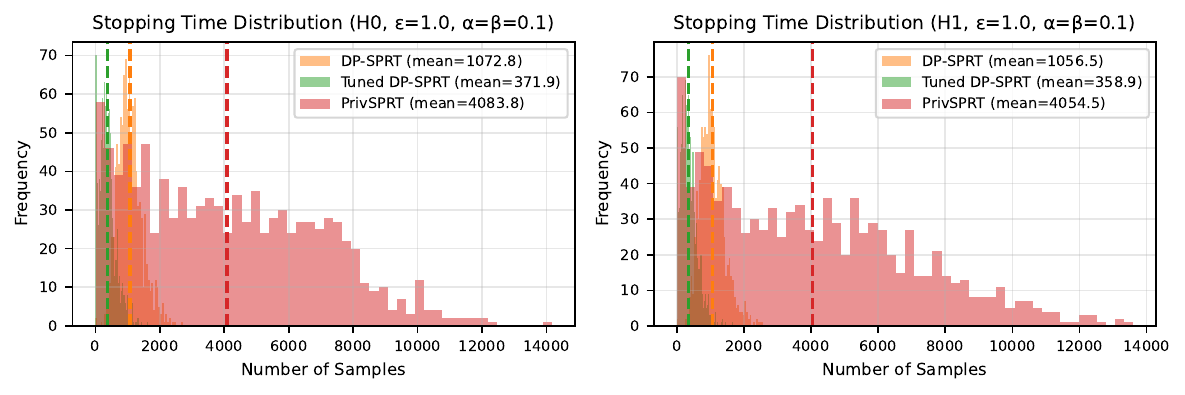}
    \includegraphics[width=\textwidth]{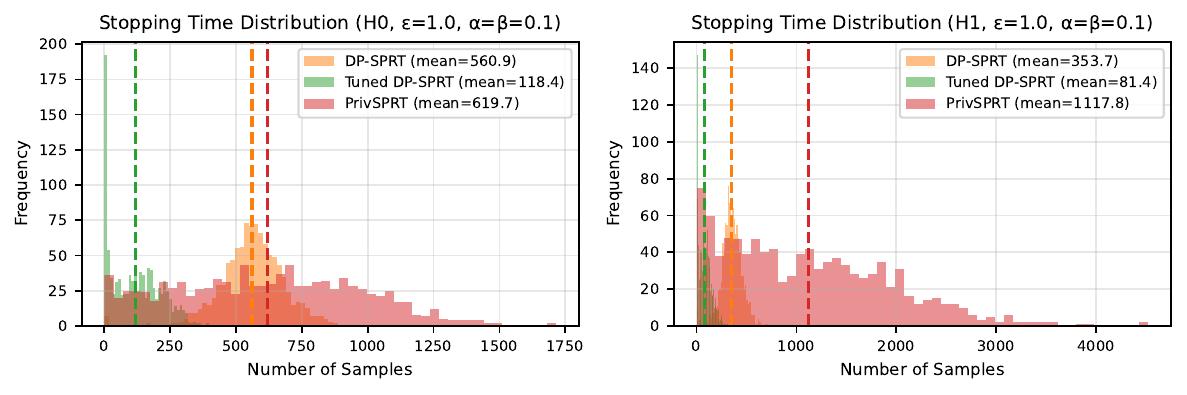}
    \caption{Histogram of stopping times for \dpsprt{} with Laplace noise, tuned \dpsprt{}, and PrivSPRT with $\alpha = \beta = 0.1$ and $\epsilon = 1$. Each row shows results under $\mathcal{H}_0$ (left) and $\mathcal{H}_1$ (right) for different instances. Top row: Instance 1. Middle row: Instance 2. Bottom row: Instance 3. Dashed lines indicate mean stopping times.}
    \label{fig:stopping_distribution}
\end{figure}

Figure~\ref{fig:stopping_distribution} compares stopping time distributions.  The tuned version of \dpsprt{} significantly reduces the number of samples compared to our standard version, however, it does not have the same theoretical backing. PrivSPRT exhibits both a larger mean and more spread out distribution, indicating more frequent occurrences of very long stopping times.

This visualization highlights not just average performance differences but also consistency and predictability. The lower variance of our methods represents a practical advantage for resource planning. The tuning approach offers a middle ground, leveraging theoretical guarantees while improving empirical performance, though like PrivSPRT, it requires empirical tuning for each specific setting.

\subsection{Summary of Findings}

Our complementary experiments reveal several important insights:

\begin{enumerate}
    \item The advantage of our \dpsprt{} methods becomes more pronounced as problem difficulty increases, making them particularly valuable for challenging testing scenarios with parameters that are close to each other.

    \item All \dpsprt{} variants exhibit lower variance in stopping times compared to PrivSPRT, leading to more predictable performance.

    \item In asymmetric scenarios (e.g., Instance 3 with parameters near boundaries), decisions under different hypotheses require substantially different sample sizes for PrivSPRT. Our method is less sensitive to this asymmetry.

    \item The subsampling approach provides substantial benefits in high-privacy regimes ($\epsilon = 0.1$).

    \item Our theoretical correction function is conservative, and a tuned version scaling the function by a factor $\kappa \approx 0.5$ can significantly improve sample complexity while maintaining error guarantees.

\end{enumerate}


\end{document}